\theoremstyle{plain}
\newtheorem{theorem}{Theorem}
\newtheorem{proposition}{Proposition}
\newtheorem{lemma}{Lemma}
 \newtheorem{corollary}{Corollary}
\theoremstyle{definition}
\newtheorem{definition}{Definition} 
\newtheorem{assumption}{Assumption} 
\newtheorem{observation}{Observation}
\newtheorem{question}{Question}
\appto\appendix{\numberwithin{theorem}{section}}
\begin{document}

\title{An Information-Theoretic Analysis for Federated Learning under Concept Drift}

\author{Fu Peng, Meng Zhang, and Ming Tang

\thanks{Fu Peng and Ming Tang are with the Department of Computer Science and Engineering, Southern University of Science and Technology, Shenzhen 518055, China (e-mail: pengf2022@mail.sustech.edu.cn; tangm3@sustech.edu.cn).}
\thanks{
Meng Zhang is with Zhejiang University–University of Illinois Urbana–Champaign Institute, Zhejiang University, Haining 314400, China (e-mail: mengzhang@intl.zju.edu.cn).}
}




\maketitle

\begin{abstract}
Recent studies in federated learning (FL) commonly train models on static datasets. However, real-world data often arrives as streams with shifting distributions, causing performance degradation known as concept drift. This paper analyzes FL performance under concept drift using information theory and proposes an algorithm to mitigate the performance degradation. 
We model concept drift as a Markov chain and introduce the \emph{Stationary Generalization Error} to assess a model's capability to capture characteristics of future unseen data. Its upper bound is derived using KL divergence and mutual information. We study three drift patterns (periodic, gradual, and random) and their impact on FL performance.
Inspired by this, we propose an algorithm that regularizes the empirical risk minimization approach with KL divergence and mutual information, thereby enhancing long-term performance. We also explore the performance-cost tradeoff by identifying a Pareto front.
To validate our approach, we build an FL testbed using Raspberry Pi4 devices. Experimental results corroborate with theoretical findings, confirming that drift patterns significantly affect performance. Our method consistently outperforms existing approaches for these three patterns, demonstrating its effectiveness in adapting concept drift in FL.
\end{abstract}

\begin{IEEEkeywords}
Federated learning, concept drift, information theory.
\end{IEEEkeywords}

\section{Introduction}
FL is a distributed machine learning approach that enables many clients to collaboratively train a neural network model~\cite{DBLP:journals/ftml/KairouzMABBBBCC21}. Recent FL research primarily focuses on the design of incentive mechanisms~\cite{10.1145/3565287.3610269}, improving FL efficiency~\cite{10.1145/3565287.3610264}, utilization of computing sources~\cite{10.1145/3565287.3610273}, etc. In these methods, clients perform training on static datasets to obtain the global model and achieve better utilization of computing and communication resources~\cite{DBLP:journals/ftml/KairouzMABBBBCC21}.

\begin{figure}[t]
\centering
\includegraphics[width=1\linewidth]{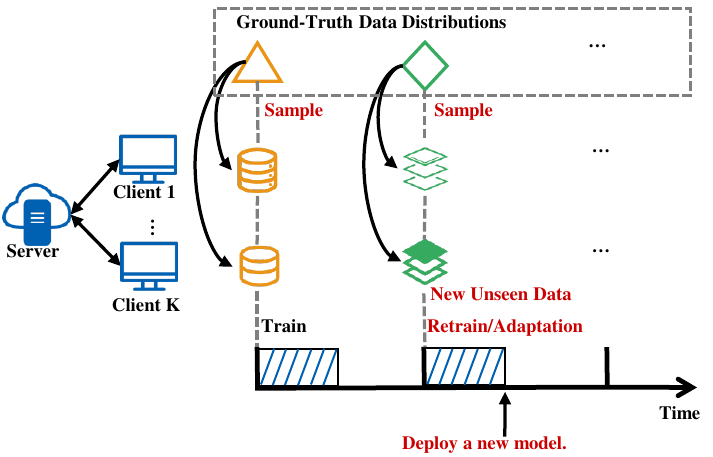}
\caption{An illustration of an FL system under concept drift.}
\label{fig.1}
\end{figure}

\begin{figure}[t]
\centering
\includegraphics[width=1\linewidth]{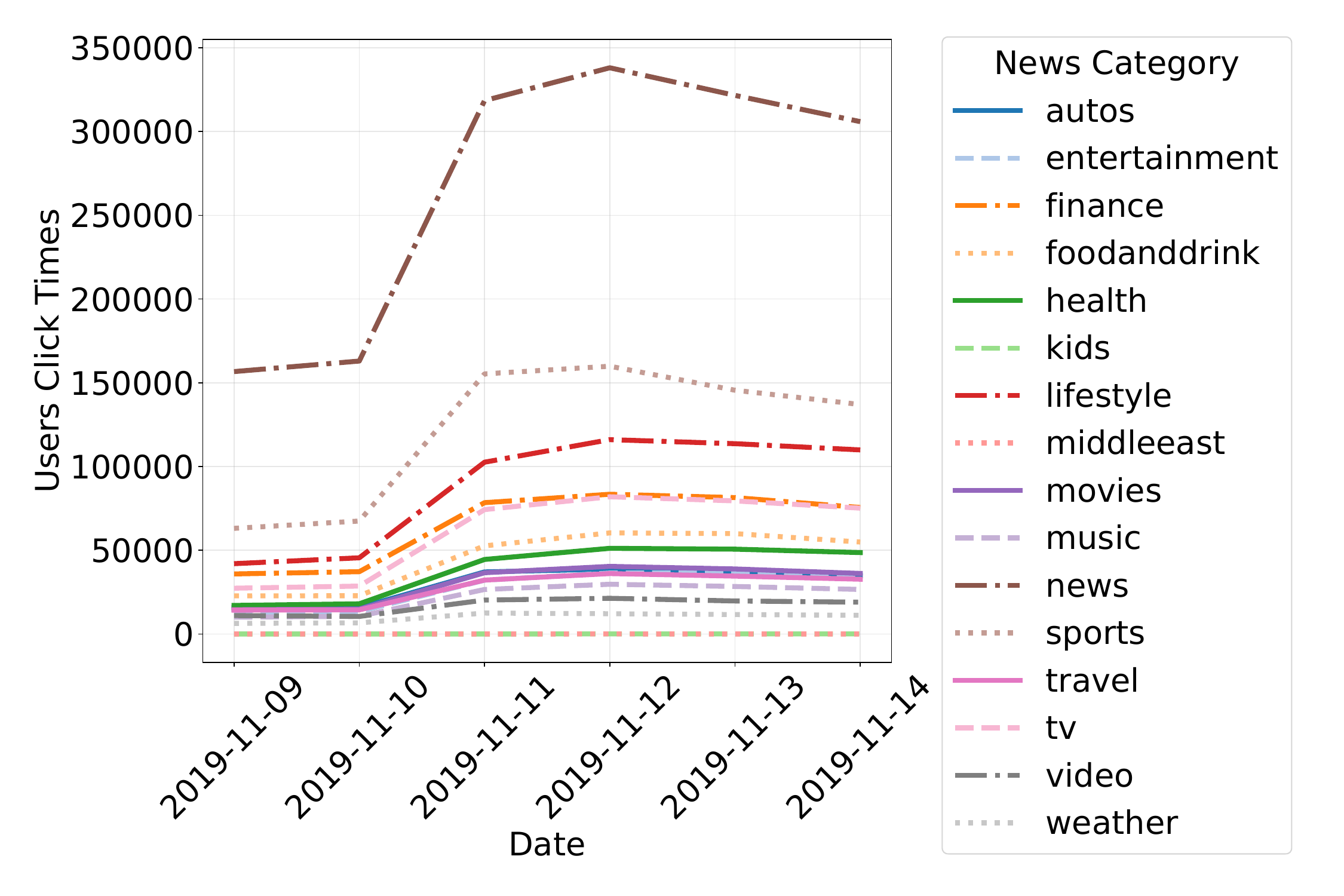}
\caption{Users' interest in news topics changes over time.}
\label{fig:MIND Dataset}
\end{figure}

In practical FL systems, data is typically collected as data streams, whose statistical properties evolve over time. Fig.\ref{fig.1} illustrates an FL system under changing data distributions. For example, in real-world news recommendation scenarios such as the MIND dataset\cite{DBLP:conf/acl/WuQCWQLLXGWZ20}, users’ interests in news topics shift over time (see Fig.~\ref{fig:MIND Dataset}). As user interests change, the characteristics of their behavior data also change, leading to a time-varying local data distribution. This drift requires models to be retrained or adapted to maintain performance on new and unseen data. 
This phenomenon is known as concept drift~\cite{DBLP:journals/tkde/LuLDGGZ19}. Generally, concept drift can typically be classified into three patterns: periodic, gradual, and random~\cite{DBLP:journals/tkde/LuLDGGZ19,gama2014survey}. In periodic pattern, the changes of data distribution exhibit a cyclical nature, e.g., the time-varying traffic flow throughout the day. Gradual pattern characterizes slow shifts in data over time, such as changing trends in popular food or movies. Random pattern characterizes changes without any clear pattern, such as black swan events in the stock market.

Concept drift has been widely studied in centralized machine learning. Early methods retrain models using recent data in a sliding window~\cite{DBLP:conf/icdm/BachM08,DBLP:conf/sdm/BifetG07}, often discarding historical information. Others adapt decision trees via windowed updates or node pruning~\cite{DBLP:conf/kdd/HultenSD01,DBLP:conf/kdd/GamaRM03}, but these are limited to specific models. Neural network-based approaches~\cite{DBLP:journals/ijon/XuW17} adjust model capacity by adding hidden nodes, which is computationally costly in FL. However, these centralized methods are not directly applicable to FL due to its decentralized architecture, limited communication bandwidth, and strict privacy constraints.

Recent studies have addressed concept drift in FL with various strategies. Yoon~\textit{et al.}\cite{pmlr-v139-yoon21b} and Yu~\textit{et al.}\cite{DBLP:journals/tmc/YuCZCZXC23} focused on enhancing local model adaptability, while neglecting global model robustness. Jothimurugesan~\textit{et al.}\cite{DBLP:conf/aistats/JothimurugesanH23} formulated drift as a clustering problem but incurred high client-side computation. Canonaco~\textit{et al.}\cite{DBLP:conf/ijcnn/CanonacoBMR21} and Panchal~\textit{et al.}~\cite{DBLP:conf/icml/PanchalCMMSMG23} proposed adaptive optimizers for faster adaptation, yet overlooked performance degradation. However, none of these works offer a theoretical foundation for analyzing drift in FL.

To bridge this gap, we adopt an information-theoretic framework to analyze concept drift in FL. Compared to classical tools such as VC dimension~\cite{DBLP:books/daglib/0033642} and PAC-Bayesian bounds~\cite{DBLP:journals/ml/McAllester99}, information-theoretic analysis offers a more flexible and insightful way to characterize the generalization error under dynamic and distributed settings. Bu~\textit{etal.}\cite{DBLP:journals/jsait/BuZV20} derived an information-theoretic upper bound on the generalization error of supervised learning algorithms, while Wu~\textit{etal.}\cite{DBLP:conf/isit/WuMAZ20} extended such analysis to transfer learning. In this work, we build upon these results and extend them to FL under concept drift.

Based on this theoretical foundation, it is crucial to quantitatively evaluate the performance of an FL system under concept drift, with a focus on deriving performance bounds using information theory. The performance bound reveals how factors such as drift patterns and the magnitude of distributional changes influence the generalization capability of the system. Moreover, it provides insights into how to design algorithms that mitigate performance degradation. In practice, however, improving performance often comes with increased communication costs, computational overhead, or training efforts. Therefore, it is also important to understand the tradeoffs between system performance and cost. In this work, our objective is to address the following key questions. 
\begin{question}[Performance Metric and Its Bound]\label{question:performance metric and its bound}
How should we design the performance metric and establish its bound to evaluate an FL system under concept drift? 
\end{question}
Designing an appropriate performance metric is challenging because the performance metric should be able to simultaneously evaluate the inference accuracy of the FL global model on the current data distribution and reflect its generalization ability\footnote{The generalization ability of a machine learning model refers to its capability to perform well and make accurate predictions on new and previously unseen data points that were not part of the training dataset~\cite{DBLP:journals/jsait/BuZV20}.} to future unseen data distributions, which are highly dynamic and unpredictable.    
Furthermore, the performance metric should also capture the changes in data distribution for the long term. To address these challenges, we introduce a performance metric termed the \emph{Stationary Generalization Error} by employing a Markov chain to model the evolution of data distributions over time. To establish the bound for the performance metric, we provide an information-theoretic bound expressed in terms of mutual information and KL divergence. This is challenging because the generalization error of the global model must be derived from local models in the FL aggregation process. This difficulty arises due to the decentralized nature of FL, where local models are trained independently and share only partial information. Consequently, quantifying how the aggregation process influences the global model's generalization performance becomes non-trivial.

\begin{question}[Algorithm Design]\label{question:algorithm design}
How can we formulate an algorithm to alleviate the performance decline resulting from concept drift?
\end{question}
To address Question~\ref{question:algorithm design}, we propose an algorithm inspired by the information-theoretic performance bounds derived in our study. This algorithm enhances the ERM approach~\cite{DBLP:conf/nips/Vapnik91} by incorporating regularization terms based on mutual information and KL divergence. The objective is to mitigate performance degradation caused by concept drift and to enhance the system's long-term average performance.

\begin{question}[Performance-cost Tradeoff]\label{question:Performance-cost Teadeoff}
What is the tradeoff (and hence the Pareto front) between FL system performance and cost under concept drift?
\end{question}
Regarding Question~\ref{question:Performance-cost Teadeoff}, we formulate a problem to maximize the FL system performance, subject to a system cost constraint. The optimization problem is nonconvex, requiring specially designed techniques to mathematically characterize the tradeoff between FL performance and cost.


The main contributions of this paper are summarized as follows:
\begin{itemize}
    \item \textit{Analytical Framework for FL under Concept Drift}: We propose to use the Markov chain to characterize the changes in data distributions over time. Meanwhile, we propose a performance metric, called \emph{Stationary Generalization Error}. It represents the generalization ability on both current data and future unseen data. Additionally, it captures the long-term performance of the FL system under concept drift by tracking changes in data distribution.  
    \item \textit{Performance Bound Analysis}: We determine the upper bound of the Stationary Generalization Error in terms of mutual information and KL divergence. 
    Based on the determined bound, we perform case studies under three primary concept drift patterns to elucidate the impact of evolving data distributions on system performance. We prove that the performance bound mainly depend on the transition probability and KL divergence between data distributions. 
    \item \textit{Algorithm Design}: Based on the performance bound analysis, we propose a method that regularizes the ERM algorithm with KL divergence and mutual information striking the right balance between data fit and generalization. 
    \item \textit{Tradeoff between FL Performance and System Cost}: We formulate an Stationary Generalization Error bound minimization problem. Desipite the noncovexity of the problem, we determine the Pareto frontiers that tradeoff the optimal FL system performance and cost. 
    \item \textit{Experimental Results}: We build an FL system testbed using Raspberry Pi4 devices to verify our bound on the Stationary Generalization Error. The theoretical and experimental results show that the concept drift patterns make a major difference in the system performance, and our algorithm outperforms other concept drift adaptation approaches for these three patterns.
\end{itemize}
This paper is organized as follows. Section~\ref{sec:related work} reviews related work on concept drift in both centralized machine learning and FL. Section~\ref{sec:system model} presents the FL system and defines the performance metric. Section~\ref{sec:Bounding System Error via Mutual Information} provides the theoretical bound along with illustrative case studies. In Section~\ref{sec:algorithm design}, we propose an algorithm to handle concept drift in FL. Section~\ref{sec:Trade off between system error and cost} analyzes the trade-off between FL performance and system cost. Section~\ref{sec:Experiments} reports the experimental results. Finally, Section~\ref{sec:conclusion} concludes the paper.


\section{Related Work}\label{sec:related work}
\subsection{Concept Drift in Centralized Machine Learning}
The phenomenon of concept drift has been extensively studied in centralized machine learning. There are various approaches developed to adapt models to new data distributions. Bach\textit{~et~al}.~\cite{DBLP:conf/icdm/BachM08} addressed concept drift by retraining models from scratch using new data, following a window strategy that retains recent data. Bifet\textit{~et~al}.~\cite{DBLP:conf/sdm/BifetG07} proposed an adaptive method to adjust window size. A smaller window captures the recent data distribution better, but the larger one provides more training data for training a new model. However, both methods~\cite{DBLP:conf/icdm/BachM08,DBLP:conf/sdm/BifetG07} overlooked the potential value of historical data. They retrained models solely on new data and discarded previous models.
Hulten\textit{~et~al}.~\cite{DBLP:conf/kdd/HultenSD01} adaptively updated decision tree models~\cite{DBLP:journals/ijids/PriyankaK20} by maintaining a sliding window of recent data. Similarly, Gama\textit{~et~al}.~\cite{DBLP:conf/kdd/GamaRM03} detected drift in decision tree nodes, converted affected nodes into leaves, and pruned their sub-trees. These two methods~\cite{DBLP:conf/kdd/HultenSD01,DBLP:conf/kdd/GamaRM03}, however, are specific to decision tree algorithms, which are not suitable for neural networks. Xu\textit{~et~al}.~\cite{DBLP:journals/ijon/XuW17} tackled concept drift by dynamically increasing the number of hidden layer nodes, thereby enhancing the neural network's learning capacity. However, altering the neural network structure across all clients in FL is computationally expensive. However, directly applying these centralized methods to FL is non-trivial due to several challenges. First, retraining or modifying models across all clients is often impractical due to limited communication and computation resources. Second, most centralized approaches rely on centralized access to data, which contradicts the privacy constraints in FL. Therefore, there is a need for drift adaptation techniques tailored to FL's decentralized setting.

\subsection{Concept Drift in FL}
Several recent studies have recognized the challenges of concept drift in FL and proposed various solutions.
Yoon\textit{~et~al.}~\cite{pmlr-v139-yoon21b} decomposed network weights into global parameters and sparse task-specific parameters, which enables each client to handle concept drift.
Yu\textit{~et~al.}~\cite{DBLP:journals/tmc/YuCZCZXC23} proposed an algorithm for unlabeled gradient integration to overcome the concept drift due to real-time data sensing. 
However, both works in~\cite{pmlr-v139-yoon21b,DBLP:journals/tmc/YuCZCZXC23} mainly focused on improving the performance of local models rather than training a robust global model. 
Jothimurugesan\textit{~et~al.}~\cite{DBLP:conf/aistats/JothimurugesanH23} characterized the issue of drift adaptation in FL as a clustering problem and proposed a multiple-model clustering FL algorithm. However, this method is computationally expensive, as clients must evaluate their data on multiple models at each step to determine which cluster they belong to.
Canonaco\textit{~et~al.}~\cite{DBLP:conf/ijcnn/CanonacoBMR21} proposed an adaptive learning rate approach, taking into account the phenomenon of concept drift. 
Panchal\textit{~et~al.}~\cite{DBLP:conf/icml/PanchalCMMSMG23} introduced an adaptive optimizer that effectively handles concept drift. The key idea is to detect concept drift by analyzing the magnitude of parameter updates needed to fit the global model.
However, the works in~\cite{DBLP:conf/ijcnn/CanonacoBMR21,DBLP:conf/icml/PanchalCMMSMG23} considered how to accelerate the system's adaptation to the new distribution faster, but they did not consider how to mitigate the performance degradation caused by concept drift.
Furthermore, those approaches in~\cite{pmlr-v139-yoon21b,DBLP:journals/tmc/YuCZCZXC23,DBLP:conf/ijcnn/CanonacoBMR21,DBLP:conf/icml/PanchalCMMSMG23,DBLP:conf/aistats/JothimurugesanH23,DBLP:conf/nips/ChenX0LH24,yang2024multi} did not provide a theoretical framework to analyze the phenomenon of concept drift in FL. In our work, we provide a unified theoretical analysis framework based on information theory for concept drift in FL and propose an algorithm to train a global model to mitigate performance degradation effectively. We consider diverse concept drift patterns and analyze their impact on FL performance degradation and performance-cost tradeoffs. 

\section{System Model}\label{sec:system model}
In this section, we first present an overview of the FL system under concept drift and describe the modeling of the time-varying data distribution. Then, we design the performance metric for the FL system under the concept drift setting.

\subsection{FL System under Concept Drift}
We consider an FL system with a set of $K$ clients, denoted by $\mathcal{K}\triangleq \{1,2,\ldots, K\}$, as shown in Fig. \ref{fig.1}. Suppose there is a ground-truth data distribution at any time instant, i.e., the actual data distribution from which clients collect their datasets. The empirical distribution of the dataset of a client may be different from the ground-truth data distribution because the dataset of a client is finite and may be biased due to random data collection. \textbf{Meanwhile, the ground-truth data distribution varies over time.} As in Fig.~\ref{fig:system-model-time-line}, the data distribution changes once within each time slot. We define each time slot as the time period separated by the moment when a model is deployed on the system. That is, the duration of a time slot is defined by the duration between when an old model has been deployed and when a newly retrained or adapted model is deployed. We consider a setting where one time slot corresponds to a relatively long period (e.g., one or multiple days) and contains a large number of training rounds, within which the model retraining or adaptation can be completed. This setting is supported by real-world scenarios where slow concept drift occurs, such as seasonal energy consumption patterns~\cite{shaikh2023new}, customer behavior shifts in e-commerce~\cite{guthrie2021online}, or long-term changes in environmental data~\cite{parr2003detecting}. These cases align with the assumption of slow drift over extended periods, which enables effective retraining or adaptation within each time slot. Note that our following analysis focuses on a general retraining or adaptation algorithm, denoted by $\mathcal{A}$, without restricting to a specific algorithmic choice. In the rest of this paper, we use ``retraining" to refer to the model retraining or adaptation process for presentation simplicity. 
\subsubsection{Time-Varying Data Distribution}
Inspired by concept drift often involving transitions among different distributions, we model the time-varying data distributions as a Markov chain\footnote{While data in the real world changes gradually, in practice, new data is typically collected periodically for retraining, which renders the process effectively discrete.}. This setting is motivated by several reasons. First, the memoryless property of a Markov chain makes it well-suited for capturing concept drift, as the future distribution depends only on the current distribution, which aligns with the nature of concept drift where the most recent changes are often the most relevant. Second, the dynamic adaptation capability of the Markov chain enables effective representation of transitions between different data distributions over time. Third, this approach provides a mathematically tractable framework that simplifies the modeling of temporal changes, making it feasible to predict and mitigate the impact of concept drift. Finally, by representing the uncertainty of distribution changes, a Markov chain offers a structured way to manage the inherent unpredictability of concept drift, facilitating more robust model adaptation. 

Specifically, we consider the time-varying data distributions as a sequence of random variables $\Pi_t,t\in \mathcal{T}=\{1,2,...\}$ over time. The corresponding state space is defined as $\mathcal{M} \triangleq \{\pi_i,i\in \mathcal{D}\}$. Here, $\mathcal{D}\triangleq\{1,2,...,D\}$ denotes the set of indices of the states in $M$, where $D$ is the total number of states. In other words, data distribution $\Pi_t$ can take a specific distribution $\pi_i$ from the state space $\mathcal{M}$ in a time slot $t$. 
Here, we employ the notations \textbf{$\pi_{pre}\in \mathcal{M}$, $\pi_{cur}\in \mathcal{M}$, $\pi_{nxt}\in \mathcal{M}$}, and $p(\pi_{nxt}|\pi_{cur})$. Consider time slot $t\in \mathcal{T}$. $\pi_{cur}$ represents the state taken by distribution $\Pi_{t}$. Accordingly, $\pi_{pre}$ denotes the state in $\mathcal{M}$ taken by $\Pi_{t-1}$ in the previous time slot. $\pi_{nxt}$ signifies the state taken by distribution $\Pi_{t+1}$. We denote the transition probability from state $\pi_{cur}$ to state $\pi_{nxt}$ as $p(\pi_{nxt}|\pi_{cur})$. Let $\overline{\mathbf{S}}_{k,cur}$ and $\overline{\mathbf{S}}_{k,pre}$ denote the $k_{th}$ client's datasets drawn from distribution $\pi_{cur}$ and $\pi_{pre}$. Let $\mathbf{\overline{S}}_{cur}=(\overline{\mathbf{S}}_{1,cur}, \dots,\overline{\mathbf{S}}_{K,cur})$ and $\mathbf{\overline{S}}_{pre}=(\overline{\mathbf{S}}_{1,pre}, \dots ,\overline{\mathbf{S}}_{K,pre})$.  Let $\mathbf{S}_{k,cur}$ represent the dataset drawn from $\overline{\mathbf{S}}_{k,cur}$, and $\mathbf{\mathbf{S}}_{k,pre}$ represent the dataset drawn from  $\overline{\mathbf{S}}_{k,pre}$.



\subsubsection{FL Process in One Time Slot}
In each time slot, there are two stages, with the details shown in Fig.~\ref{fig:system-model-time-line}. Now, we illustrate using time slot $t$. At the beginning of stage I of time slot $t$, the data distribution has not yet changed, remaining $\Pi_{t-1}=\pi_{pre}$. Clients use the global model $\bold{w}_{pre}$ to do the inference with the dataset sampled from $\Pi_{t-1}=\pi_{pre}$. 
Note that the global model $\bold{w}_{pre}$ was trained by an arbitrary algorithm $\mathcal{A}$ with training datasets sampled from $\Pi_{t-1}=\pi_{pre}$ and $\Pi_{t-2}$ in time slot $t-1$.\footnote{We consider such a setting because incorporating datasets from $\Pi_{t-1}$ and $\Pi_{t-2}$ enhances the FL system's generalization capability, which can mitigate the risk of model overfitting.} When a new data distribution $\Pi_{t}=\pi_{cur}$ shows up, stage II begins. Consequently, the inference accuracy decreases on the dataset sampled from the newly arrived unseen distribution $\Pi_{t}=\pi_{cur}$. At the same time, clients start to retrain a new model $\bold{w}_{cur}$ based on $\bold{w}_{pre}$ using training datasets $\mathbf{\overline{S}}_{cur}$ and $\mathbf{\overline{S}}_{pre}$ sampled from $\pi_{cur}$ and $\pi_{pre}$, respectively. Stage II continues until the training process for $\bold{w}_{cur}$ finishes. When the new model $\bold{w}_{cur}$ is deployed on the system to improve the inference accuracy of the FL system, stage II terminates. 

The system's cost modeling in relation to this process is discussed in Section~\ref{sec:Trade off between system error and cost}.

\begin{figure}[t]
\centering
\setlength{\abovecaptionskip}{-0.2pt}
\includegraphics[width=1\linewidth]{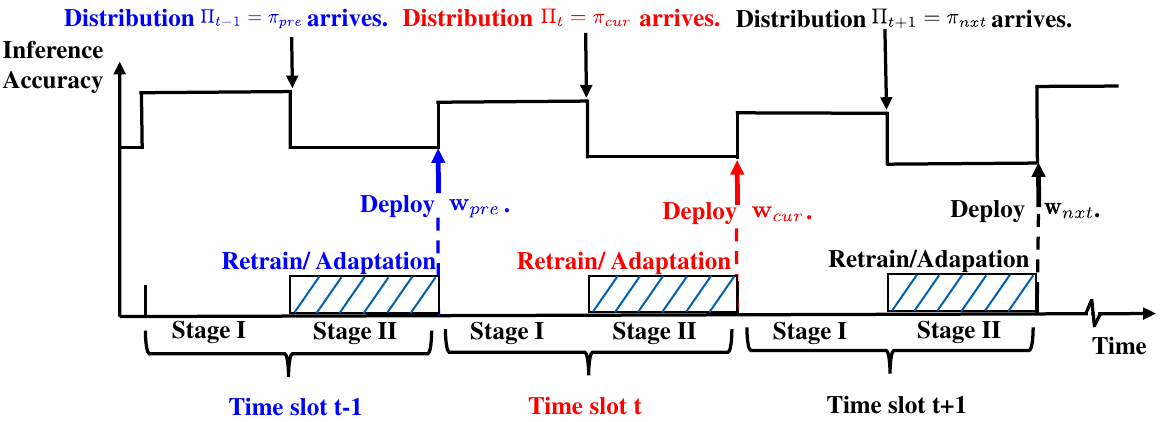}
\caption{Inference accuracy decreases due to concept drift. Each time slot contains two stages: Stage I begins when a new model is deployed; Stage II begins when new data arrive. }
\label{fig:system-model-time-line}
\vspace{-5mm}
\end{figure}

\subsection{Performance Metric}\label{sec:performance metric}
To evaluate the performance of an FL system under concept drift and describe the phenomenon of concept drift, we design a performance metric for the FL process described above. The performance metric should effectively demonstrate the model's inference capacity on the current distribution in stage I and the generalization capacity on future unseen distribution in stage II for each time slot. Further, the performance metric should also capture the long-term evolution of data distribution over time. 
Generalization error is commonly used to measure a machine learning model's generalization capacity to unseen data~\cite{DBLP:journals/jsait/BuZV20,DBLP:conf/isit/WuMAZ20,DBLP:journals/entropy/BarnesDP22}. Inspired by this, we design a performance metric, called \emph{Stationary Generalization Error}, that  
\begin{enumerate}
    \item considers the generalization error from stage I and stage II;
    \item depicts the characteristic of concept drift considering the transition probability of data distributions when the Markov chain reaches the steady state.
\end{enumerate}
In the following, we first define the generalized error for an arbitrary model $\bold{w}_{cur}$ at stage I and stage II in a time slot. Then, we define the Stationary Generalization Error over the state space $\mathcal{M}$ when the Markov chain reaches the steady state. The assumption of a steady state is crucial for our analysis. It reflects the system's long-term behavior, decoupled from transient dynamics that may depend on initial conditions or early adaptation phases. This aligns with real-world scenarios where machine learning systems typically operate over extended periods, and their performance metrics stabilize. Furthermore, steady-state analysis allows us to derive bounds that are generalizable and robust, ensuring applicability across a wide range of deployments.

\subsubsection{Generalization Error for Model $\bold{w}_{cur}$}\label{sec:Generalization Error for Model}
In literature, generalization error is usually defined as the difference between the population risk and the empirical risk~\cite{DBLP:conf/isit/WuMAZ20,DBLP:journals/jsait/BuZV20}. Thus, we first define the population risk and the empirical risk for model $\bold{w}_{cur}$ respectively. Then, we define its generalization error.

Let $\ell(\bold{w}_{cur},Z_{cur})$ be a real-valued loss function, where $\bold{w}_{cur}$ is the model and $Z_{cur}$ is a sample from dataset $\mathbf{S}_{cur}$. The population risk for the global model $\bold{w}_{cur}$ is defined as:
\begin{equation}\label{eq:population risk}
L_{\pi_{cur}}(\bold{w}_{cur}) = \mathbb{E}_{Z_{cur} \sim \pi_{cur}}\left[\ell\left(\bold{w}_{cur},Z_{cur}\right)\right].
\end{equation}


To introduce the empirical risk of model $\bold{w}_{cur}$ for client $k$, we first define some notations. 
Suppose that $N$ is the number of data samples used for model retraining. 
Let $N_{cur}$ and $N_{pre}$ denote the sizes of  $\mathbf{S}_{k,cur}$ and $\mathbf{S}_{k,pre}$, and  \(N_{cur}+N_{pre}=N\).
Further, we define $\alpha \in (0,1]$ as a weight parameter to determine the importance for $\mathbf{S}_{k,cur}$ and $\mathbf{S}_{k,pre}$ following the previous work~\cite{DBLP:conf/isit/WuMAZ20}.
Intuitively, in practical systems, we want the system to perform better on fresher data, so we may take a larger value for $\alpha$ to make the performance metric more accurate. $Z_{nk,cur}$ and $Z_{nk,pre}$ represent the $n_{th}$ data sample of client $k$ in datasets $\mathbf{S}_{k,cur}$ and $\mathbf{S}_{k,pre}$, respectively. 
Then, the empirical risk of global model $\bold{w}_{cur}$ of client $k$ is defined as:
\vspace{-2mm}
\begin{multline}\label{eq:empirical rick}
\!\!\!\!\hat{L}_{\alpha}(\mathbf{w}_{cur},S_{k,cur},S_{k,pre})\triangleq\\ \alpha\left(\frac{1}{N_{cur}}\sum\limits_{n=1}^{N_{cur}}\ell(\mathbf{w}_{cur},Z_{nk,cur})\right)\\
+(1-\alpha)\left(\frac{1}{N_{pre}}\sum\limits_{n=N_{cur}+1}^{ N}\ell(\mathbf{w}_{cur},Z_{nk,pre})\right).
\end{multline}

Next, we define the generalization error for model $\bold{w}_{cur}$. Following the conventional definition in~\cite{DBLP:conf/isit/WuMAZ20,DBLP:journals/jsait/BuZV20}, the generalization error is the difference between the population risk and empirical risk.
Suppose $\bold{w}_{cur}$ has been deployed at the end of time slot $t$. We use $G_1(\pi_{cur},\bold{w}_{cur})$ to denote the generalization error for model $\bold{w}_{cur}$ inferring the data samples drawn from $\pi_{cur}$ at stage I in time slot $t+1$, as shown in Fig. \ref{fig:system-model-time-line}. Let $G_2(\pi_{nxt}, \bold{w}_{cur})$ represent the generalization error for model $\bold{w}_{cur}$ inferring the data samples drawn from the unseen distribution $\pi_{nxt}$  as stage II in time slot $t+1$. Therefore, the generalization error for model $\bold{w}_{cur}$ at stage I and stage II (e.g., in time slot $t+1$ of Fig.~\ref{fig:system-model-time-line}) can be defined as \eqref{eq:stage I-mean generalization error} and \eqref{eq:stage II-mean generalization error}, respectively: 
\vspace{-2mm}
\begin{multline}
\label{eq:stage I-mean generalization error}
\!\!\!\!\!G_1(\pi_{cur}, \bold{w}_{cur})=\\
\!\!\!\!\!\!L_{\pi_{cur}}(\bold{w}_{cur})-\frac{1}{K}\!\sum\limits_{k=1}^{K}\!\hat{L}_{\alpha}(\bold{w}_{cur},\mathbf{S}_{k,cur},\mathbf{S}_{k,pre}),
\end{multline}
\vspace{-5mm}
\begin{multline}
\label{eq:stage II-mean generalization error}
\!\!\!\!\!G_2(\pi_{nxt}, \bold{w}_{cur})=\\
\!\!\!\!\!\!L_{\pi_{nxt}}(\bold{w}_{cur}) -\frac{1}{K}\!\sum\limits_{k=1}^{K}\!\hat{L}_{\alpha}(\bold{w}_{cur},\mathbf{S}_{k,cur},\mathbf{S}_{k,pre}).
\end{multline}
The second terms in Eq.~\eqref{eq:stage I-mean generalization error} and Eq.~\eqref{eq:stage II-mean generalization error} are identical because the empirical risk is computed using the same model parameters \(\bold{w}_{cur}\) and the same data samples \(\{\mathbf{S}_{k,cur}, \mathbf{S}_{k,pre}\}_{k=1}^K\)~\cite{DBLP:conf/isit/WuMAZ20,DBLP:journals/jsait/BuZV20}.

To represent the generalization error of $\bold{w}_{cur}$ in a time slot, we define $\tau_1$ to represent the ratio of the time length of stage I for a time slot, and $\tau_2$ to denote the ratio of the time length of stage II for a time slot. $\tau_1+\tau_2=1$. Although we set $\tau_1$ and $\tau_2$ to remain constant over time, such a setting can still accurately describe scenarios where the time interval between two different data distributions varies. This is because \(\Pi_t\) and \(\Pi_{t+1}\) are random variables that can take the same distribution from the state space \(\mathcal{M}\), effectively modeling situations where no distribution shift occurs between consecutive time slots. Given $\tau_1$ and $\tau_2$, by taking the expectation on $G_1(\pi_{cur}, \bold{w}_{cur})$ and $G_2(\pi_{nxt}, \bold{w}_{cur})$ with respect to model $\bold{w}_{cur}$ and datasets $S_{k,cur}$, $S_{k,pre}$, we define a \emph{weighted generalization error} for model $\bold{w}_{cur}$ as:
\vspace{-2mm}
\begin{multline}\label{eq:WEN}
\!\!\!\!\!\!G(\pi_{cur},\pi_{nxt})\!\triangleq\! \\
\tau_{1} \mathbb{E}_{\bold{wS}}[G_1(\pi_{cur},\bold{w}_{cur})]+\tau_2\mathbb{E}_{\bold{w}S}[G_2(\pi_{nxt}, \bold{w}_{cur})].
\end{multline}
The expectation $\mathbb{E}_{\bold{wS}}[\cdot]$ is taken with respect to the randomness in the learned model $\bold{w}_{cur}$ and the sampling of local datasets $\{\mathbf{S}_{k,cur}, \mathbf{S}_{k,pre}\}_{k=1}^K$.

\subsubsection{The Stationary Generalization Error $\overline{G}$}

Note that in Eq.~\eqref{eq:WEN}, $\bold{w}_{cur}$ is a random variable, as it was trained based on  distributions $\pi_{pre}$ and $\pi_{cur}$ randomly drawn from state space $\mathcal{M}$. Meanwhile, $\pi_{nxt}$ is also a random variable  drawn from state space $\mathcal{M}$.  
Therefore, to depict the characteristic of concept drift over time, we define an Stationary Generalization Error for all realizations of $\pi_{pre},\pi_{cur},\pi_{nxt}$ as our ultimate performance metric. 

\begin{definition}[Stationary Generalization Error]\label{def:System error}
The \emph{Stationary Generalization Error} $\overline{G}$ is determined as:
\begin{equation}
\label{eq: expected generalization error}
\begin{aligned}
\overline{G}&\triangleq \mathbb{E}_{\pi_{pre}\pi_{cur}\pi_{nxt}}\left[G(\pi_{cur},\pi_{nxt})\right]\\
&\triangleq \sum\limits_{\pi}p(\pi_{pre})p(\pi_{cur}|\pi_{pre})p(\pi_{nxt}|\pi_{cur})G(\pi_{cur},\pi_{nxt}),
\end{aligned}
\end{equation}
where $\pi$ is short for $\pi_{pre}\pi_{cur}\pi_{nxt}$.
\end{definition}

The Stationary Generalization Error  \(\overline{G}\) is defined based on the case when the Markov chain reaches steady state. It captures the model’s ability to fit the current (recent) data distribution, while also reflecting its generalization ability to future (unseen) distributions under concept drift. The assumption of a steady state is justified as it reflects a long-term behavior of the system under concept drift. In practical scenarios, the distribution changes introduced by concept drift often stabilize over time, either cyclically (e.g., seasonal patterns) or through convergence to a new dominant distribution. Analyzing the system in a steady state allows for capturing the averaged effects of these changes over time. Additionally, the steady-state assumption simplifies theoretical analysis while maintaining practical relevance, as it provides insights into the system's expected behavior after sufficient adaptation to the drift.
In Section~\ref{sec:Bounding System Error via Mutual Information}, we aim to bound $\overline{G}$ and to quantify the impact of concept drift.

\section{Bounding Stationary Generalization Error via KL Divergence and Mutual Information}\label{sec:Bounding System Error via Mutual Information}

We first give an upper bound of $\overline{G}$ following the information-theoretic framework~\cite{DBLP:journals/jsait/BuZV20,DBLP:conf/isit/WuMAZ20}. Then, based on the upper bound of $\overline{G}$, we conduct case studies on typical concept drift patterns to provide concrete insights.  Finally, motivated by these theoretical findings, we propose a method to enhance long-term performance by regularizing the ERM approach~\cite{DBLP:conf/nips/Vapnik91} with mutual information and KL divergence.


\subsection{Upper Bound on the Stationary Generalization Error} \label{sec:prove upper bound}
In this section, we first provide some assumptions, definitions, and lemmas used for the upper bound derivation. Then, we prove the upper bound of $\overline{G}$ defined in Eq. (\ref{eq: expected generalization error}). 

To characterize the information revealed by a model with respect to a data sample, we introduce Assumption \ref{assumption:cumulant generating function} following \cite{DBLP:journals/jsait/BuZV20,DBLP:conf/isit/WuMAZ20}.
\begin{assumption}\label{assumption:cumulant generating function}
Considering an arbitrary $\bold{w}$ and a data sample $Z$, the cumulant generating function of the random variable $\ell(\bold{w}, Z)-\mathbb{E}_{Z}[\ell(\bold{w}, Z)]$ is bounded by some $\lambda$ and convex function $\psi$ on the interval $[0,b)$:
\begin{equation}\label{eq:cumulant generating function}
\log \mathbb{E}\left[e^{\lambda(\ell(\bold{w}, Z)-\mathbb{E}_{Z}[\ell(\bold{w}, Z)])}\right] \leq \psi(-\lambda).
\end{equation}
\end{assumption}
We further define the Legendre dual for $\psi$ and introduce Lemma \ref{lemma:inverse function of psi}, for capturing the upper bound of $\overline{G}$.

\begin{definition}[Legendre Dual]
    For a convex function $\psi$ defined on the interval $[0,b)$, where $0<b\leq \infty$, its Legendre dual $\psi^{*}$ is defined as 
    \begin{equation}\label{eq:psi}
        \psi^{*}(x) \triangleq \sup _{\lambda \in[0, b)}\left(\lambda x-\psi\left(\lambda\right)\right).
    \end{equation}
\end{definition}

\begin{lemma}[\!\!\cite{DBLP:journals/jsait/BuZV20}, Lemma 2]\label{lemma:inverse function of psi}
Assume that $\psi(0)=\psi^{'}(0)=0$, then the inverse function of Eq. (\ref{eq:psi}) can be written as:
\begin{equation}\label{eq:inverse function of psi}
    \psi^{*-1}(y) \triangleq \inf _{\lambda \in[0, b)}\left(\frac{y+\psi(\lambda)}{\lambda}\right).
\end{equation}        
\end{lemma}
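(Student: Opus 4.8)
The statement to be proven is Lemma~\ref{lemma:inverse function of psi}, which is cited directly from \cite{DBLP:journals/jsait/BuZV20}. Here is how I would approach its proof.

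\medskip

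\textbf{Proof strategy.} The plan is to exploit the standard duality between a convex function and its Legendre transform, combined with the nonnegativity and smoothness hypotheses $\psi(0)=\psi'(0)=0$, which guarantee that $\psi$ behaves like a ``good'' rate function (nonnegative, with its minimum $0$ at the origin). First I would observe that under these hypotheses, $\psi^{*}(x)=\sup_{\lambda\in[0,b)}(\lambda x-\psi(\lambda))$ is a nonnegative, nondecreasing, convex function on $[0,\infty)$ with $\psi^{*}(0)=0$; nonnegativity follows by taking $\lambda=0$ in the supremum, and monotonicity is immediate since $x\mapsto \lambda x-\psi(\lambda)$ is nondecreasing for each fixed $\lambda\ge 0$. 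These properties ensure that $\psi^{*}$ has a well-defined generalized inverse on its range, so the quantity $\psi^{*-1}(y)$ is meaningful.

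\medskip

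\textbf{Key steps.} The core of the argument is a two-sided inequality showing that the function
\[
g(y)\triangleq \inf_{\lambda\in[0,b)}\frac{y+\psi(\lambda)}{\lambda}
\]
is indeed the (generalized) inverse of $\psi^{*}$, i.e.\ that $\psi^{*}(g(y))\le y$ and that $g$ is the largest such value. For the first direction, I would fix $y\ge 0$; for any $\lambda\in[0,b)$ and any $x\le \frac{y+\psi(\lambda)}{\lambda}$ we have $\lambda x-\psi(\lambda)\le y$, and taking the supremum over $\lambda$ and then over admissible $x$ gives $\psi^{*}(g(y))\le y$. For the reverse direction — showing $g(y)$ cannot be replaced by anything larger, so that $g=\psi^{*-1}$ — I would take any $x$ with $\psi^{*}(x)\le y$; then by definition of $\psi^{*}$ as a supremum, $\lambda x-\psi(\lambda)\le y$ for every $\lambda\in[0,b)$, hence $x\le \frac{y+\psi(\lambda)}{\lambda}$ for every $\lambda\in(0,b)$, so $x\le g(y)$. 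Combining, $g(y)=\sup\{x:\psi^{*}(x)\le y\}$, which is precisely the definition of the (right-continuous) inverse $\psi^{*-1}(y)$. The hypothesis $\psi(0)=\psi'(0)=0$ enters to ensure the infimum defining $g$ is attained in the interior / is finite and that no degeneracy occurs at the boundary $\lambda\to 0$ (since $\psi(\lambda)/\lambda\to \psi'(0)=0$, the ratio $\frac{y+\psi(\lambda)}{\lambda}$ blows up like $y/\lambda$ only when $y>0$, keeping the infimum bounded below by $0$ and consistent with $g(0)=0$).

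\medskip

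\textbf{Main obstacle.} I expect the delicate point to be the careful handling of the boundary behavior as $\lambda\to 0^{+}$ and as $\lambda\to b^{-}$, together with the precise sense in which ``inverse'' is meant when $\psi^{*}$ is not strictly increasing (it could be flat on an initial segment if $b<\infty$). One must argue that the $\sup$-of-sublevel-set characterization coincides with $g(y)$ without assuming differentiability or strict convexity of $\psi^{*}$; this is where the convexity of $\psi$ and the normalization $\psi(0)=\psi'(0)=0$ do the real work, and it is the step I would write out most carefully. Since this lemma is quoted verbatim from \cite[Lemma 2]{DBLP:journals/jsait/BuZV20}, in the paper itself it suffices to cite that reference, but the above is the self-contained argument one would give.
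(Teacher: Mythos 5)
Your argument is correct: the two-sided inequality showing that $g(y)\triangleq\inf_{\lambda\in(0,b)}\frac{y+\psi(\lambda)}{\lambda}$ satisfies both $\psi^{*}(g(y))\le y$ and $x\le g(y)$ for every $x$ with $\psi^{*}(x)\le y$ is exactly the standard duality proof of this fact (it is Lemma~2.4 of Boucheron--Lugosi--Massart, reproduced as Lemma~2 of \cite{DBLP:journals/jsait/BuZV20}), and your use of $\psi(0)=\psi'(0)=0$ to get $\psi^{*}\ge 0$, $\psi^{*}(0)=0$, and strict monotonicity on the positive axis is the right way to make the generalized inverse well defined. Note that this paper itself offers no proof of the lemma --- it is imported by citation --- so there is nothing to compare against beyond the cited source, whose argument yours matches; the only cosmetic point is that attainment of the infimum is not needed (and need not hold), only the two inequalities you establish.
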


In addition, according to~\cite{DBLP:books/daglib/0035704}, we characterize the difference between two distributions using KL divergence. Specifically, the KL divergence between two distributions $U$ and $V$ defined over sample space $\mathcal{X}$ for some function $f$ is given by:
\begin{equation} \label{eq:KL divergence}
D_{\rm KL}(U\|V)=\sup\limits_f\left[\mathbb{E}_U[f(x)]-\log\mathbb{E}_V[e^{f(x)}]\right].
\end{equation}
Let $P_{\bold{w},Z}$ be the joint distribution of $\bold{w}$ and $Z$. Then the mutual information between $\bold{w}$ and $Z$ is given by:
\begin{equation} \label{eq:mutual information}
I(\bold{w},Z)=D_{\rm KL}(P_{\bold{w},Z}\|P_{\bold{w}} \otimes P_{Z}),
\end{equation}
where \(\otimes\) denotes the \emph{product distribution} of  \(P_{\bold{w}}\) and \(P_Z\).



Now, we are ready to provide the upper bound of $\overline{G}$. The following Theorem \ref{theorem:expected generalization bound} provides an upper bound of $\overline{G}$ with respect to (i) the mutual information $I(\bold{w},Z)$ that quantifies the dependence between
the output of the learning algorithm $\bold{w}$ and the input individual training sample $Z$, and (ii) the KL divergence $D_{\rm KL}(\cdot\|\cdot)$ that quantifies the distance between two
different data distributions in state space $\mathcal{M}$, following the characterization used in~\cite{DBLP:journals/jsait/BuZV20,DBLP:conf/isit/WuMAZ20}. 


\begin{theorem}[Upper Bound of the Stationary Generalization Error]\label{theorem:expected generalization bound}
The Stationary Generalization Error is bounded as:
\vspace{-2mm}
\begin{multline}  \label{eq:system error bound}
\overline{G}\leq \frac{1}{K^2}\sum\limits_{\pi_{pre},\pi_{cur},\pi_{nxt}}p(\pi_{pre})p(\pi_{cur}|\pi_{pre})p(\pi_{nxt}|\pi_{cur})\\
\left(\tau_1\sum\limits_{k=1}^{K}\left(\frac{\alpha}{N_{cur}}\!\!\sum\limits_{n=1}^{N_{cur}}\!\psi^{*-1}(I(\mathbf{w}_{cur},Z_{nk,cur}))\right.+\frac{1-\alpha}{N_{pre}}\right.\\
\left.\sum\limits_{n=N_{cur}+1}^{N}\!\!\!\!\!\psi^{*-1}(I(\mathbf{w}_{cur},Z_{nk,pre})+D_{\rm KL}(\pi_{pre}\|\pi_{cur}))\!\right)\\
\!\!\!\!\!\!\!+\tau_2\sum\limits_{k=1}^{K}\left(\frac{\alpha}{N_{cur}}\sum\limits_{n=1}^{N_{cur}}\psi^{*-1}(I(\mathbf{w}_{cur},Z_{nk,cur})\right.\\
\left.+D_{\rm KL}(\pi_{cur}\|\pi_{nxt}))\left.+\frac{1-\alpha}{N_{pre}}\right.\right.\\
\left.\left.\sum\limits_{n=N_{cur}+1}^{N}\!\!\!\!\!\psi^{*-1}(I(\mathbf{w}_{cur},Z_{nk,pre})\!+\!D_{\rm KL}(\pi_{pre}\|\pi_{nxt}))\!\right)\!\right).\!\!\!\!\!
\end{multline}
\end{theorem}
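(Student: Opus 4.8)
The plan is to decompose $\overline{G}$ through the two stages, and within each stage through the two data pools (current and previous), so that each of the four resulting double sums reduces to a single-sample quantity that can be bounded by an information-theoretic argument. Recall from Eq.~\eqref{eq: expected generalization error} that $\overline{G}$ is an expectation of $G(\pi_{cur},\pi_{nxt})$ over the stationary Markov chain, and from Eq.~\eqref{eq:WEN} that $G(\pi_{cur},\pi_{nxt})=\tau_1\mathbb{E}_{\mathbf{w}S}[G_1]+\tau_2\mathbb{E}_{\mathbf{w}S}[G_2]$. Expanding $G_1$ via Eqs.~\eqref{eq:population risk}--\eqref{eq:stage I-mean generalization error}, the empirical risk splits into an $\alpha$-weighted average of $N_{cur}$ losses on $\pi_{cur}$-samples and a $(1-\alpha)$-weighted average of $N_{pre}$ losses on $\pi_{pre}$-samples, while the population risk $L_{\pi_{cur}}(\mathbf{w}_{cur})$ serves as the reference distribution. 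So the first step is purely algebraic: push the expectation over $\pi_{pre}\pi_{cur}\pi_{nxt}$ and over $(\mathbf{w},S)$ inside, write $L_{\pi_{cur}}(\mathbf{w}_{cur}) - \frac1{N_{cur}}\sum_n \ell(\mathbf{w}_{cur},Z_{nk,cur})$ (and the analogous previous-sample and stage-II terms) as sums of per-sample gaps, and use linearity to arrive at four families of terms indexed by $k$ and $n$.

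The core step is to bound a single per-sample term of the form $\mathbb{E}_{\mathbf{w}}\mathbb{E}_{Z\sim\rho}[\ell(\mathbf{w},Z)] - \mathbb{E}_{\mathbf{w},Z}[\ell(\mathbf{w},Z)]$, where $Z$ in the second (empirical) expectation is drawn from $\mu$ (the training distribution for that pool) under its true joint law with $\mathbf{w}$, while $\rho$ is the evaluation distribution. When $\rho=\mu$ (the stage-I current-sample term, where the model is evaluated on the same distribution it was trained on) this is exactly the classical individual-sample bound of Bu et al.: the gap is at most $\psi^{*-1}(I(\mathbf{w},Z))$, proved by writing the gap as $\mathbb{E}_{P_{\mathbf{w}}\otimes P_Z}[\ell]-\mathbb{E}_{P_{\mathbf{w},Z}}[\ell]$, applying the Donsker--Varadhan variational formula \eqref{eq:KL divergence} to the pair $(P_{\mathbf{w},Z},P_{\mathbf{w}}\otimes P_Z)$ with the test function $\lambda\ell$, invoking the sub-$\psi$ Assumption~\ref{assumption:cumulant generating function}, optimizing over $\lambda\in[0,b)$, and identifying the result with $\psi^{*-1}$ via Lemma~\ref{lemma:inverse function of psi} together with the fact that $D_{\mathrm{KL}}(P_{\mathbf{w},Z}\|P_{\mathbf{w}}\otimes P_Z)=I(\mathbf{w},Z)$. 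When $\rho\neq\mu$ — the remaining three term families, where a distribution mismatch is present (previous-pool samples evaluated at $\pi_{cur}$, current-pool samples evaluated at $\pi_{nxt}$, previous-pool samples evaluated at $\pi_{nxt}$) — I would follow the transfer-learning extension of Wu et al.: decompose the gap as $\big(\mathbb{E}_{\mathbf{w}}\mathbb{E}_{Z\sim\rho}[\ell]-\mathbb{E}_{\mathbf{w}}\mathbb{E}_{Z\sim\mu}[\ell]\big) + \big(\mathbb{E}_{\mathbf{w}}\mathbb{E}_{Z\sim\mu}[\ell]-\mathbb{E}_{\mathbf{w},Z}[\ell]\big)$. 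The second bracket is again the classical $I(\mathbf{w},Z)$ term; for the first bracket, apply the Donsker--Varadhan formula \eqref{eq:KL divergence} once more but now to the pair $(\rho,\mu)$ with test function $\lambda\ell(\mathbf{w},\cdot)$ for fixed $\mathbf{w}$, so that it is controlled by $D_{\mathrm{KL}}(\mu\|\rho)$ plus the same cumulant term. Merging the two $\lambda$-optimizations (using convexity of $\psi$ and monotonicity of $\psi^{*-1}$, which lets $\psi^{*-1}(a)+\psi^{*-1}(b)$ be absorbed into a single $\psi^{*-1}(a+b)$-type bound at the level of the combined information quantity) yields $\psi^{*-1}\big(I(\mathbf{w},Z)+D_{\mathrm{KL}}(\mu\|\rho)\big)$. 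Matching this against Eq.~\eqref{eq:system error bound}, the pairings are exactly $(\mu,\rho)=(\pi_{pre},\pi_{cur})$, $(\pi_{cur},\pi_{nxt})$, and $(\pi_{pre},\pi_{nxt})$ for the three mismatched families.

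The remaining bookkeeping is to reassemble: substitute the per-sample bounds back, note that the averaging $\frac1K\sum_k$ over clients in the empirical risk combined with the (implicit) per-client model contributions produces the $\frac1{K^2}$ prefactor — here I would be careful about exactly how the global model $\mathbf{w}_{cur}$ aggregates the local models and how $I(\mathbf{w}_{cur},Z_{nk,\cdot})$ for the global model relates to the local quantities, since the theorem writes everything in terms of $\mathbf{w}_{cur}$ directly; presumably one uses that $\mathbf{w}_{cur}$ is a (deterministic or stochastic) function of the local updates and a data-processing / chain-rule argument bridges the $\frac1K$ from the empirical average with another $\frac1K$ from summing per-client generalization contributions. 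Finally, carry the factors $\tau_1,\tau_2,\alpha,1-\alpha$ and $1/N_{cur},1/N_{pre}$ through, and take the outer expectation $\sum_\pi p(\pi_{pre})p(\pi_{cur}|\pi_{pre})p(\pi_{nxt}|\pi_{cur})$ of the resulting per-realization bound. The main obstacle I anticipate is precisely this decentralized-aggregation step: justifying the $1/K^2$ scaling and expressing the global-model generalization error purely through the individual-sample mutual informations $I(\mathbf{w}_{cur},Z_{nk,\cdot})$ requires handling how FedAvg-style aggregation combines independently trained local models, which is the genuinely new ingredient relative to the single-machine bounds of Bu et al. and Wu et al.; the distribution-mismatch terms themselves are a relatively direct adaptation of the existing transfer-learning analysis.
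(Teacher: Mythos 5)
Your overall skeleton matches the paper's proof: decompose $\overline{G}$ through the two stages and the two data pools, reduce to per-sample gaps, bound each with a Donsker--Varadhan argument under Assumption~\ref{assumption:cumulant generating function}, and identify the optimized bound with $\psi^{*-1}$ via Lemma~\ref{lemma:inverse function of psi}. The matched-distribution term is handled exactly as in the paper. However, your treatment of the three mismatched families has a genuine gap. You propose splitting the gap into $\bigl(\mathbb{E}_{Z\sim\rho}[\ell]-\mathbb{E}_{Z\sim\mu}[\ell]\bigr)+\bigl(\mathbb{E}_{Z\sim\mu}[\ell]-\mathbb{E}_{\mathbf{w},Z}[\ell]\bigr)$, bounding each bracket by a separate variational argument, and then ``absorbing'' $\psi^{*-1}(a)+\psi^{*-1}(b)$ into $\psi^{*-1}(a+b)$. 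That absorption goes the wrong way: $\psi^{*-1}$ is concave with $\psi^{*-1}(0)=0$, hence subadditive, so $\psi^{*-1}(a)+\psi^{*-1}(b)\geq\psi^{*-1}(a+b)$; two separate bounds can only be combined into something \emph{larger} than the single term the theorem asserts (e.g., in the sub-Gaussian case $\sqrt{2r^2a}+\sqrt{2r^2b}\geq\sqrt{2r^2(a+b)}$, and even reusing one $\lambda$ across both brackets before optimizing costs a factor $\sqrt{2}$). The paper avoids the split entirely: it applies Donsker--Varadhan \emph{once} with $U=P_{\mathbf{w}_{cur}Z_n}$ and reference measure $V=P_{\mathbf{w}_{cur}}\otimes\rho$, where $\rho$ is the \emph{evaluation} distribution, so that the chain rule gives $D_{\rm KL}(P_{\mathbf{w}_{cur}Z_n}\|P_{\mathbf{w}_{cur}}\otimes\rho)=I(\mathbf{w}_{cur};Z_n)+D_{\rm KL}(P_{Z_n}\|\rho)=I(\mathbf{w}_{cur};Z_n)+D_{\rm KL}(\mu\|\rho)$ \emph{inside} the single $\lambda$-optimization, yielding $\psi^{*-1}\bigl(I+D_{\rm KL}\bigr)$ directly. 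You need this one-shot version (the Wu et al.\ route) to obtain the stated bound.

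The second gap is the one you flag yourself: the $1/K^2$ prefactor. The paper does not derive it from a data-processing or chain-rule argument about FedAvg aggregation; it invokes Theorem~4 of Barnes, Dytso, and Poor~\cite{DBLP:journals/entropy/BarnesDP22} when summing the per-sample bounds over $n$ and $k$, and this step carries an additional structural assumption that the loss function has the form of a Bregman divergence. Without that assumption (or some substitute), the client aggregation only yields a $1/K$ averaging from the empirical risk and the theorem's $1/K^2$ scaling does not follow. So your proposal is incomplete precisely where you suspected, and the missing ingredient is a citation-plus-assumption rather than a new argument; but it must be stated, since the bound as written is not provable from the decentralized structure alone.
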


The proof of Theorem~\ref{theorem:expected generalization bound} is given in Appendix~A. Theorem~\ref{theorem:expected generalization bound} demonstrates that in the presence of concept drift, the performance of an FL system is primarily bounded by two key terms: mutual information \( I(\bold{w}, Z) \) and KL divergence \( D_{\rm KL}(\cdot \| \cdot) \). Specifically, the mutual information quantifies the dependence between each individual training sample and the output model of the learning algorithm. The mutual information measures how much information is shared between model \( \bold{w} \) and individual sample \( Z \), indicating how well the model has learned from a specific data sample. A lower mutual information indicates less dependence of the model on a single data sample \( Z \). This implies that the model \( \bold{w} \) is less affected by perturbations in the data, thereby possessing stronger generalization capabilities. On the other hand, the KL divergence quantifies the distance between distributions. Generally, as more samples are used for training, the mutual information gradually decreases and approaches zero~\cite{DBLP:journals/jmlr/BousquetE02}. Consequently, further increasing the number of data samples has a diminishing effect on $\overline{G}$. In contrast, the KL divergence does not vanish as the number of data samples increases, which suggests that concept drift persists regardless of the sample size.

Additionally, we observe that the function \(  \psi^{*-1}(\cdot) \) is concave. Mathematically, this implies that the growth rate of \( \overline{G} \) decreases as mutual information and KL divergence increase. Specifically, since \( \psi^{*-1}(\cdot)\) is concave, its second derivative satisfies \( (\psi^{*-1})'' \leq 0 \), indicating a diminishing marginal impact of increases in mutual information and KL divergence on the bound of \( \overline{G} \). This property highlights that the influence of further increases in these quantities on the generalization error bound diminishes, reflecting a nonlinear relationship between these quantities and the system performance.

Based on the above insights, we propose an algorithm to achieve an appropriate tradeoff between data fitting and generalization by controlling mutual information and KL divergence. The details of this algorithm are discussed in Section~\ref{sec:algorithm design}.

\subsection{Case Studies: Three Concept Drift Patterns}\label{sec:Examples under 3 Concept Drift Patterns}

\begin{figure*}[htb]
	\centering
	\subfloat[]{
		\includegraphics[width=0.3\linewidth]{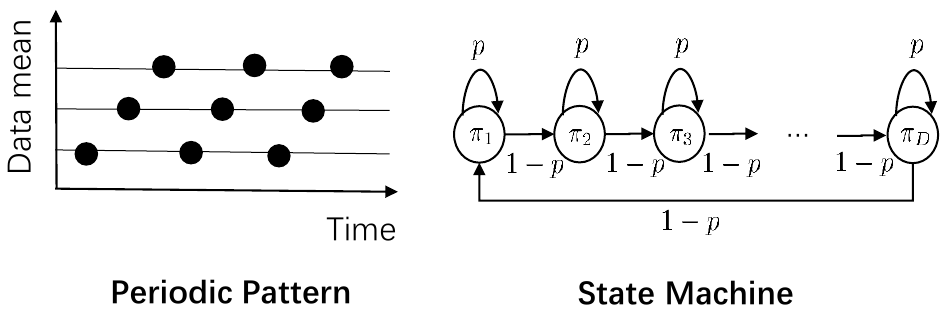}\label{fig:pattern 1}
		}
	\subfloat[]{
		\includegraphics[width=0.3\linewidth]{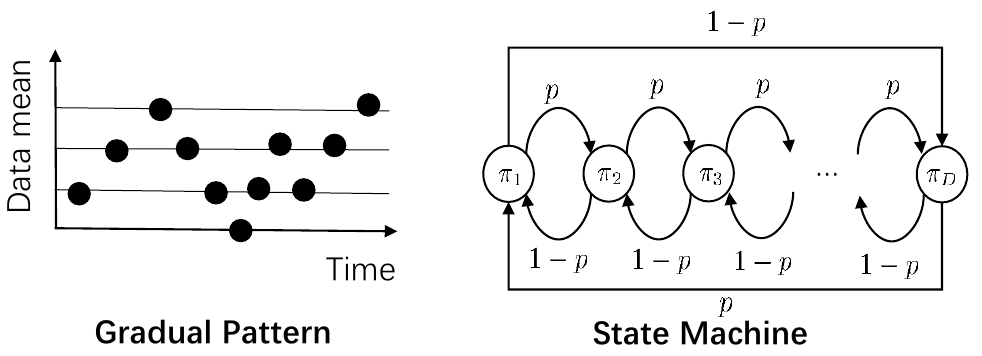}\label{fig:pattern 2}
		}
  	\subfloat[]{
		\includegraphics[width=0.3\linewidth]{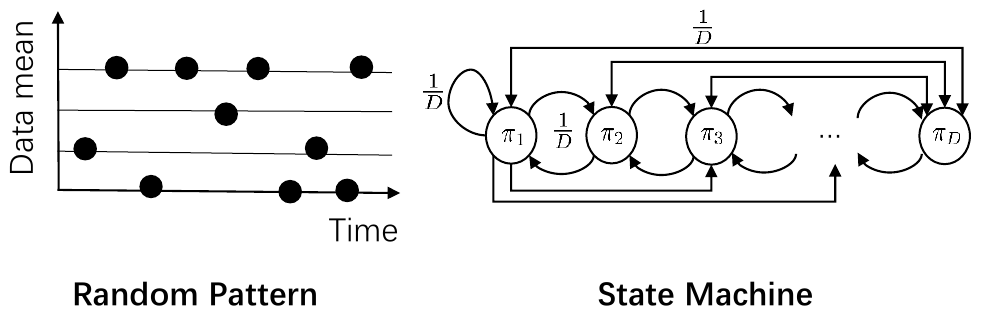}\label{fig:pattern 3}
		}

 \caption{State machines for three patterns. (a) Periodic pattern. (b) Gradual pattern. (c) Random pattern. }
 \label{fig:three pattern ilustration}
\end{figure*}

In this section, we provide specific expressions of the upper bound of $\overline{G}$ under different concept drift patterns and analyze how the bound changes with system factors. 

To compare the difference of the upper bound as the increasing of the number of data samples under different concept drift patterns, we introduce additional assumptions in order to express function $\psi^{*-1}(\cdot)$ in an explicit form.  We suppose loss function $\ell(\bold{w},Z)$ is sub-Gaussian with parameter $r^2$, where $r$ is a constant. This assumption is considered by existing works~\cite{DBLP:conf/isit/WuMAZ20,DBLP:journals/jsait/BuZV20,DBLP:journals/entropy/BarnesDP22} and reasonable because in machine learning, the sub-Gaussian property is often used to analyze the concentration of random variables and to derive concentration inequalities. Note that  parameter $r$ quantifies the tightness of the concentration of loss function around the mean, where a smaller value of $r$ indicates stronger concentration. Based on such an assumption, we have $\psi^{*-1}(x) = \sqrt{2r^2x}$ by Lemma \ref{lemma:inverse function of psi}. 
In addition, we assume that the sample size is large, under which the influence of a data sample $Z$ on the performance of $\bold{w}$ should be small. That is, when $N\rightarrow \infty$, $I(\bold{w}_{cur},Z) \rightarrow 0$~\cite{DBLP:conf/isit/WuMAZ20}. Based on these assumptions, we can rewrite the bound as follows:
\begin{multline}
\label{eq:simplied upper bound}
\overline{G} \leq \frac{1}{K}\mathbb{E}_{\pi}
\left(\tau_1(1-\alpha)\sqrt{r^2 D_{\rm KL}(\pi_{pre}\|\pi_{cur})}\right.\\
\left.+\tau_2\left(\alpha\sqrt{r^2 D_{\rm KL}(\pi_{cur}\|\pi_{nxt})}\right.\right.\\
\left.\left.+(1-\alpha)\sqrt{r^2 D_{\rm KL}(\pi_{pre}\|\pi_{nxt})}\right)\right).
\end{multline}

To quantify the impact of distance between data distributions on the upper bound, we further specify the expression of  $D_{\rm KL}(\cdot\|\cdot)$ in Eq. \eqref{eq:simplied upper bound}. For presentation simplicity, we use notations $\pi_i \in M$ and $\pi_j \in M$ to refer to two different data distributions in the rest of this section. We consider the datasets drawn from multivariate Gaussian distributions and calculate $D_{\rm KL}(\pi_i \| \pi_j)$ as follows:
\begin{definition}
\label{assumption: multivariate Gaussian distribution}
    Consider two multivariate Gaussian distributions $\pi_{i}$ and $\pi_{j}$. $\pi_{i}=\mathcal{N}(\mu_{i},\Sigma_{i})\in M$, $\pi_{j}=\mathcal{N}(\mu_{j},\Sigma_{j})\in M$, where $i, j \in \mathcal{D}$. $\mu_{i},\mu_{j}$ are mean vectors. Suppose $\mu_{i}-\mu_{j}= |i-j|\Delta$, where $\Delta$ is also a vector. Let $\Sigma_{i}=\Sigma_{j}=\Sigma$ be the covariance matrix. Then, the KL divergence between $\pi_{i}$ and $\pi_{j}$ can be calculated as:
    \vspace{-2mm}
    \begin{equation}\label{eq:assumption multivariate Gaussian distribution}
            D_{\rm KL}(\pi_{i}\|\pi_{j})=\frac{(i-j)^2}{2}\Delta^T\Sigma^{-1}\Delta.
    \end{equation}
\end{definition}
Next, we will analyze the periodic, gradual, and random patterns.

\subsubsection{Periodic, Gradual and Random Patterns}  Let $\pi_{i}, \pi_{i-1}, \pi_{i+1}\in M, i\in \mathcal{D}$.
(i) In the periodic pattern, the state machine is shown in Fig. \ref{fig:pattern 1}. State $\pi_{i}$ can be transferred to only the next adjacent state $\pi_{i+1}$ and itself. The transition probabilities are defined as $p(\pi_{i}|\pi_{i})=p$ and $p(\pi_{(i+1)~mod~D}|\pi_{i})=1-p$, where $mod$ is short for modulo. 
(ii) In the gradual pattern, the state machine is shown in Fig. \ref{fig:pattern 2}. State $\pi_{i}$ can only be transferred to adjacent states $\pi_{(i-1)~mod~D}$ and $\pi_{(i+1)~mod~D}$. The transition probabilities are defined as $p(\pi_{(i+1)~mod~D}|\pi_{i})=p$ and $p(\pi_{(i-1)~mod~D}|\pi_{i})=1-p$.
(iii) In the random pattern, the state machine is shown in Fig.~\ref{fig:pattern 3}. Each state can be transferred to any other state with probability $1/D$. 

We can derive the detailed expression of the upper bound on $\overline{G}$ for these patterns:
\begin{corollary}[the Upper Bound on $\overline{G}$ for Three Patterns]\label{corollary: case study}
In periodic pattern, the upper bound on $\overline{G}$ is given by:
\vspace{-2mm}
\begin{multline} \label{eq: pattern 1}
\!\!\!\!\!\!\overline{G}\!\leq\!  \frac{r\sqrt{\Delta^T\Sigma^{-1}\Delta}}{KD}\bigg(\!\big((4\alpha+1)\tau_2-\alpha+1\big)p^2\\
-\big((8\alpha+2D)\tau_2+2D-2\alpha D\big)p\\
+(2D+4\alpha-1)\tau_2+2D-2\alpha D-1 + \alpha\bigg).
\end{multline}
In gradual pattern, the upper bound on $\overline{G}$ is given by:
\begin{multline} \label{eq: pattern 2}
\overline{G}\!\leq\! \frac{r\sqrt{\Delta^T\Sigma^{-1}\Delta}}{KD}\bigg(8\tau_2(D-2+2\alpha-\alpha D)(p^2-p)\\
+(2D+4\alpha-6)\tau_2+2D-2\alpha D-2 + 2\alpha\bigg).
\end{multline}
In the random pattern, the upper bound on $\overline{G}$ is given by:
\begin{equation} \label{eq: pattern 3}
\begin{aligned}
\overline{G}&\leq \frac{r\sqrt{\Delta^T\Sigma^{-1}\Delta}}{K}\left(\tau_2\alpha-\alpha+1\right)\left(\frac{5}{3}D+\frac{1}{3D}-2\right).
\end{aligned}
\end{equation}
\end{corollary}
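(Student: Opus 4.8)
\textbf{Proof proposal for Corollary~\ref{corollary: case study}.}

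The plan is to specialize the simplified bound~\eqref{eq:simplied upper bound} to each of the three state machines by (i) identifying the support of the joint law $p(\pi_{pre})p(\pi_{cur}|\pi_{pre})p(\pi_{nxt}|\pi_{cur})$ when the Markov chain is in steady state, (ii) substituting the closed-form KL divergence from Definition~\ref{assumption: multivariate Gaussian distribution}, namely $D_{\rm KL}(\pi_i\|\pi_j)=\tfrac{(i-j)^2}{2}\Delta^T\Sigma^{-1}\Delta$, which turns each $\sqrt{r^2 D_{\rm KL}(\pi_i\|\pi_j)}$ into $\tfrac{r}{\sqrt 2}|i-j|\sqrt{\Delta^T\Sigma^{-1}\Delta}$, and (iii) carrying out the resulting finite sum over the (few) reachable triples $(\pi_{pre},\pi_{cur},\pi_{nxt})$. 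Because the factor $r\sqrt{\Delta^T\Sigma^{-1}\Delta}/K$ is common to all cases, the entire computation reduces to evaluating a weighted average of $|i-j|$-type index differences under the transition structure of each pattern.

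First I would handle the stationary distribution. For the periodic pattern (Fig.~\ref{fig:pattern 1}) and the gradual pattern (Fig.~\ref{fig:pattern 2}), the chain on the cycle $\{1,\dots,D\}$ is doubly stochastic, so the unique stationary distribution is uniform, $p(\pi_i)=1/D$; for the random pattern (Fig.~\ref{fig:pattern 3}) every row of the transition matrix is $(1/D,\dots,1/D)$, so again $p(\pi_i)=1/D$ and moreover $\pi_{pre},\pi_{cur},\pi_{nxt}$ are mutually independent and uniform. This is why the prefactor in~\eqref{eq: pattern 1}--\eqref{eq: pattern 3} carries a $1/(KD)$ (resp.\ $1/K$ after the $D$-sums collapse in the random case). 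Then, for the periodic pattern I would enumerate: given $\pi_{cur}=\pi_i$, the predecessor $\pi_{pre}$ is either $\pi_i$ (with the appropriate stationary-conditional weight) or $\pi_{i-1}$, and the successor $\pi_{nxt}$ is either $\pi_i$ (prob.\ $p$) or $\pi_{i+1}$ (prob.\ $1-p$). Substituting the four index-difference values $|pre-cur|\in\{0,1\}$, $|cur-nxt|\in\{0,1\}$, $|pre-nxt|\in\{0,1,2\}$ into~\eqref{eq:simplied upper bound}, weighting by the transition probabilities, and summing the geometric-free finite sum over $i$ (each term independent of $i$ by the uniform stationary law and the translation symmetry of the cycle, modulo the single ``wrap-around'' state where $|i-j|$ jumps) yields a quadratic polynomial in $p$ with coefficients linear in $\alpha$, $D$, $\tau_2$ — matching~\eqref{eq: pattern 1}. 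The gradual pattern is analogous but now $\pi_{nxt}\in\{\pi_{i-1},\pi_{i+1}\}$ and $\pi_{pre}\in\{\pi_{i-1},\pi_{i+1}\}$, giving $|pre-nxt|\in\{0,2\}$ and a $p^2-p$ structure~\eqref{eq: pattern 2}. For the random pattern, independence makes the triple sum factor into $\tfrac{1}{D^2}\sum_{i,j}|i-j|$-type sums; using $\sum_{i,j=1}^D|i-j|=\tfrac{D(D^2-1)}{3}$ and the identity $\tfrac{1}{D^2}\cdot\tfrac{D(D^2-1)}{3}=\tfrac{D^2-1}{3D}$, then combining with the $\tau_2\alpha$, $(1-\alpha)$ weights gives the factor $\bigl(\tfrac{5}{3}D+\tfrac{1}{3D}-2\bigr)$ after collecting the three KL terms — consistent with~\eqref{eq: pattern 3}.

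The main obstacle I expect is the careful bookkeeping of the ``boundary'' or ``wrap-around'' states in the cyclic patterns: although translation invariance on the cycle $\mathbb{Z}/D\mathbb{Z}$ makes most of the $D$ summands identical, the modulo arithmetic means that for exactly one value of $i$ the raw index difference $|i-(i+1)\bmod D|$ equals $D-1$ rather than $1$ (and similarly $D-2$ vs.\ $2$ for $\pi_{pre}$ vs.\ $\pi_{nxt}$). One must decide whether Definition~\ref{assumption: multivariate Gaussian distribution} is applied with the literal indices or with the cyclic (minimal) distance; the stated polynomial forms suggest the minimal-distance convention is used so that every transition contributes a difference of $1$ or $2$, making all $D$ summands equal and the $\sum_i$ collapse to a factor $D$ that cancels one power of $D$ in the prefactor. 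I would make this convention explicit at the outset, verify it is consistent with how $\mu_i-\mu_j=|i-j|\Delta$ was intended, and then the remaining work is the routine (if tedious) algebra of expanding $\tau_1=1-\tau_2$, collecting powers of $p$, and grouping the $\alpha$- and $D$-dependent coefficients into the displayed closed forms.
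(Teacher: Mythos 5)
Your overall plan is the same as the paper's: the paper derives Corollary~\ref{corollary: case study} in exactly one sentence --- compute the stationary distribution of each chain and substitute it, together with Definition~\ref{assumption: multivariate Gaussian distribution}, into Eq.~\eqref{eq:simplied upper bound} --- and your identification of the uniform stationary law for all three patterns, the reduction of each $\sqrt{r^2 D_{\rm KL}}$ term to an index difference $|i-j|$, and the enumeration of reachable triples is the correct skeleton for that computation.

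However, you resolve the one genuinely delicate bookkeeping decision the wrong way. You claim the stated polynomial forms ``suggest the minimal-distance convention'' on the cycle; the opposite is true. Under the minimal (cyclic) distance every one of the $D$ summands is identical, so the $1/D$ from the stationary law cancels completely and no $D$ survives inside the parentheses; moreover, for the periodic pattern the three expected index differences are $\mathbb{E}|pre-cur|=\mathbb{E}|cur-nxt|=1-p$ and $\mathbb{E}|pre-nxt|=2p(1-p)+2(1-p)^2=2(1-p)$, all \emph{linear} in $p$, so the resulting bound would be linear in $p$ and $D$-free --- incompatible with the quadratic-in-$p$, $D$-dependent form of~\eqref{eq: pattern 1} (and likewise the $D$-dependent coefficient of $(p^2-p)$ in~\eqref{eq: pattern 2}). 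The $p^2$ terms and the coefficients involving $D$ arise precisely from the $O(1)$ wrap-around triples where the literal difference of Definition~\ref{assumption: multivariate Gaussian distribution} equals $D-1$ or $D-2$ rather than $1$ or $2$; you must therefore use the literal indices, and your enumeration has to track those boundary states explicitly rather than collapsing the sum over $i$. Separately, your random-pattern arithmetic does not actually land where you claim: with $pre,cur,nxt$ i.i.d.\ uniform, each of the three KL terms contributes $\frac{1}{D^2}\sum_{i,j}|i-j|=\frac{D^2-1}{3D}=\frac{D}{3}-\frac{1}{3D}$, and collecting the weights $\tau_1(1-\alpha)+\tau_2\alpha+\tau_2(1-\alpha)=1-\alpha+\alpha\tau_2$ gives $(1-\alpha+\alpha\tau_2)\left(\frac{D}{3}-\frac{1}{3D}\right)$, which is not the stated factor $\left(\frac{5}{3}D+\frac{1}{3D}-2\right)$; asserting consistency with~\eqref{eq: pattern 3} without performing this check leaves a gap you would need to close (either by identifying what additional terms produce the stated factor or by flagging the discrepancy).
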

Corollary~\ref{corollary: case study} is derived by first computing the stationary distribution of the Markov chain for each drift pattern, and then substituting it into the upper bound expression of $\overline{G}$  given in Eq.~\eqref{eq:simplied upper bound}.

\subsubsection{Comparison between Patterns}\label{sec:Comparing Three Patterns}

We denote functions $B_{p}(p)$, $B_{g}(p)$, and $B_{r}(p)$ in terms of $p$ as the upper bounds of $\overline{G}$ for periodic pattern, gradual pattern, and random pattern, respectively.

\begin{proposition}[Threshold]\label{proposition:Threshold}
    There exists a threshold $p_{th} \leq 0.5$, such that $B_{r}(p)>B_{p}(p)>B_{g}(p)$ for any $p<p_{th}$.
\end{proposition}
\begin{figure}[t]
	\centering
	\subfloat[]{
                \includegraphics[width=0.45\linewidth]{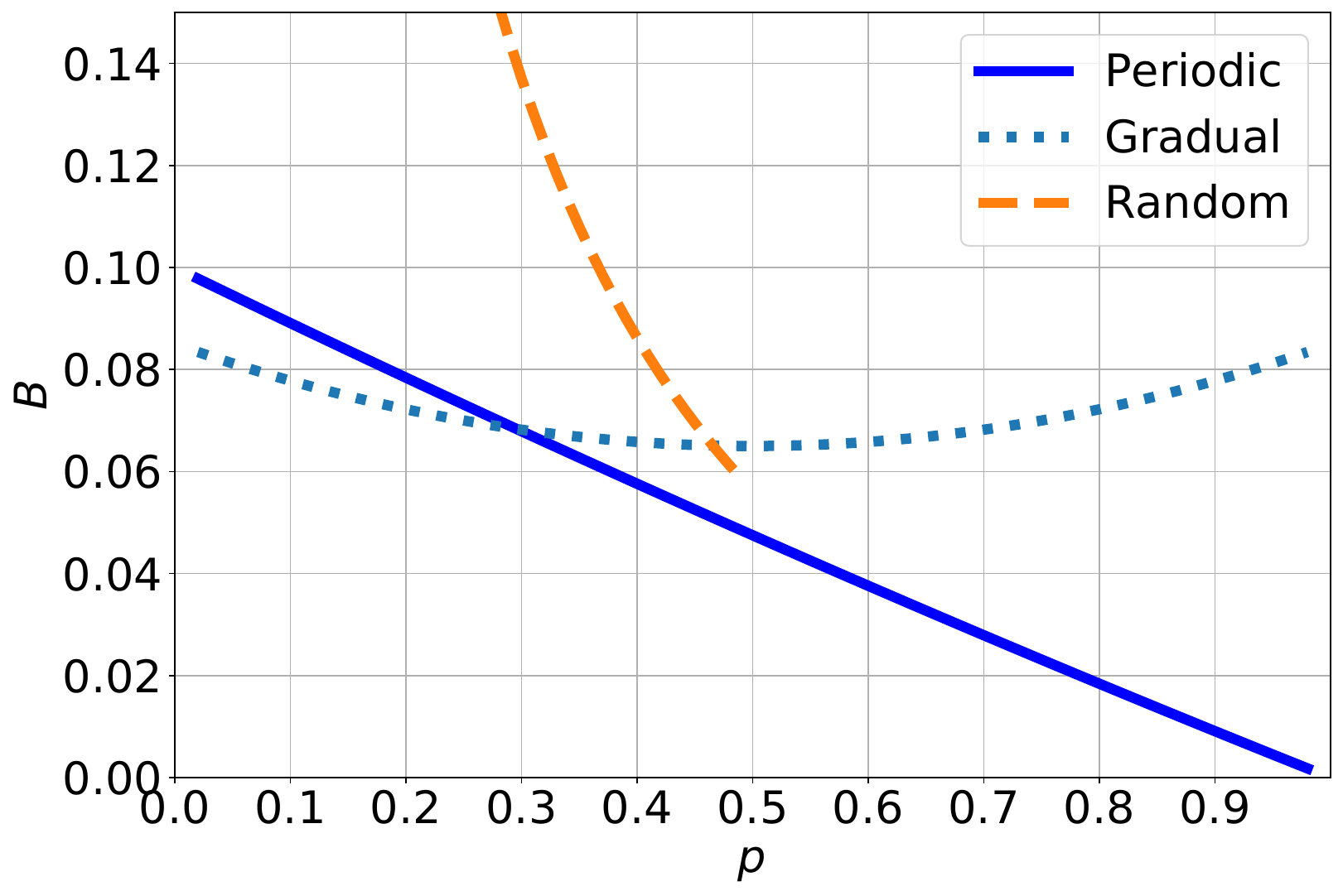}\label{fig:compare_3_pattern_bound_p_1}
                }
	\subfloat[]{
		\includegraphics[width=0.45\linewidth]{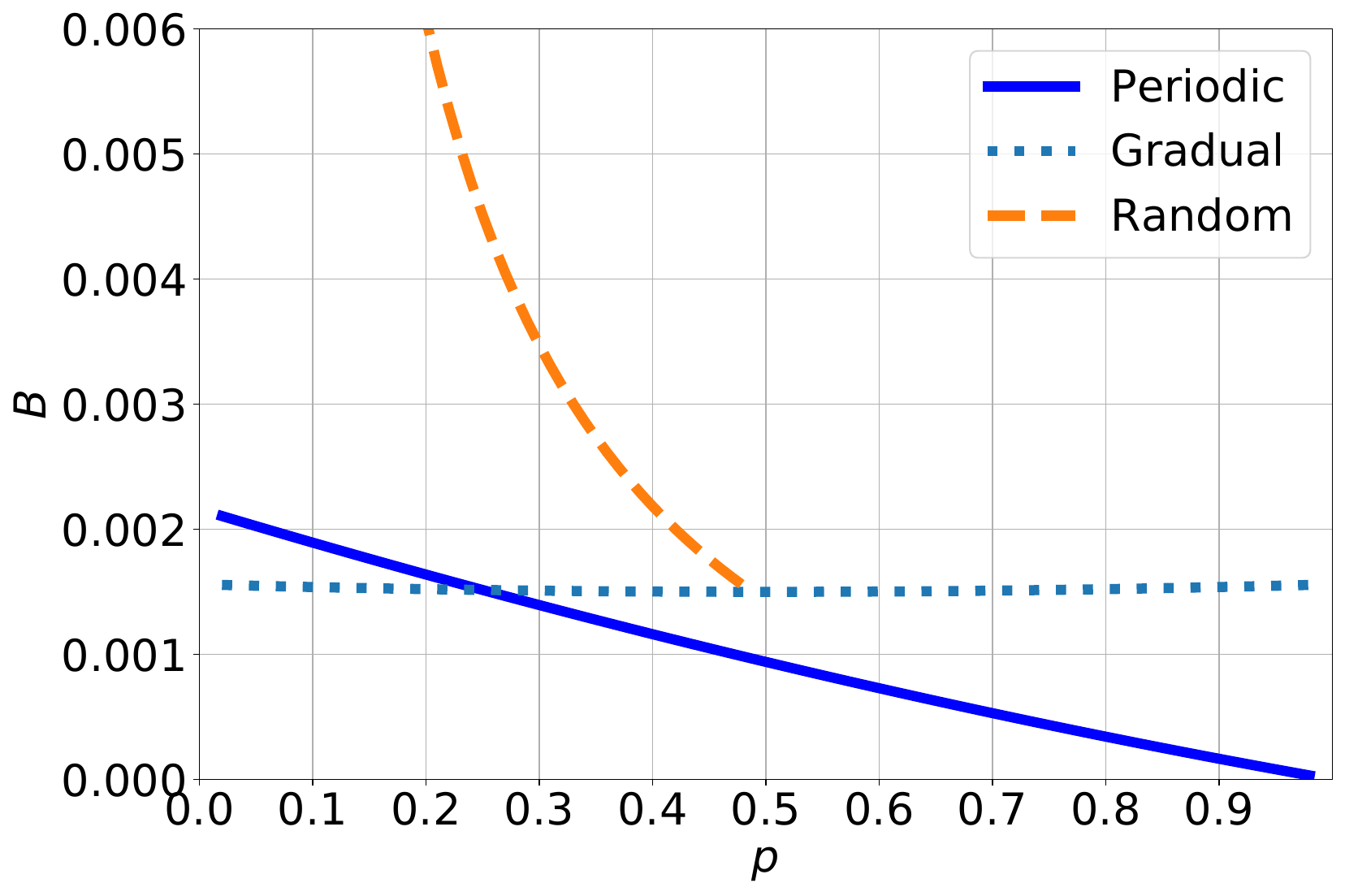}\label{fig:compare_3_pattern_bound_p_2}}
 \caption{The upper bound of $\overline{G}$ changes over $p$. (a) $D=10$, $K=10$, $\alpha=0.5$, $\tau_{2}=0.5$. (b) $D=5$, $K=100$, $\alpha=0.9$, $\tau_{2}=0.1$.}
 \label{fig:compare_3_pattern_bound_p}
\end{figure}

The proof of Proposition~\ref{proposition:Threshold} is given in Appendix~B. Resulting from Proposition~\ref{proposition:Threshold}, as the data distribution tends to change drastically (i.e., when $p$ is small), the FL performance degradation in the random pattern scenario is more severe than that in the periodic pattern, while it is relatively minor in the gradual pattern. This highlights the necessity of carefully designed FL algorithms to effectively address the concept drift phenomenon, particularly in the context of the random pattern. 
Such an analytical result is validated in Fig.~\ref{fig:compare_3_pattern_bound_p_1} with $D=10$, $K=10$, $\alpha=0.5$, $\tau_1=0.5$, $\tau_2=0.5$, $r=0.5$, $\sqrt{\Delta^T\Sigma^{-1}\Delta}=1$.


\begin{proposition}[The Effects of Parameters]\label{proposition:The Effects of Parameters on the Threshold} 
The bounds $B_g(p)$, $B_r(p)$, and $B_p(p)$, as well as the threshold $p_{th}$, decrease under the following conditions: (i) $\alpha$ increases, (ii) $K$ increases, or (iii) $\tau_2$ decreases.
\end{proposition}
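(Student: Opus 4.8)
\textbf{Proof Proposal for Proposition~\ref{proposition:The Effects of Parameters on the Threshold}.}

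The plan is to treat each of the three bounds $B_p(p)$, $B_g(p)$, $B_r(p)$ from Corollary~\ref{corollary: case study} as explicit functions of the parameters $\alpha$, $K$, $\tau_2$ (recall $\tau_1 = 1-\tau_2$) and differentiate. All three share the common prefactor $\frac{r\sqrt{\Delta^T\Sigma^{-1}\Delta}}{K}$ (up to the extra $1/D$ in the periodic and gradual cases), so the monotonicity in $K$ is immediate: each bound is of the form $\frac{1}{K}\cdot(\text{nonnegative expression independent of }K)$, hence strictly decreasing in $K$. For the dependence on $\alpha$ and $\tau_2$, I would factor out the prefactor and examine the bracketed polynomial in $p$. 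For the random pattern this is cleanest: $B_r(p) = \frac{r\sqrt{\Delta^T\Sigma^{-1}\Delta}}{K}(\tau_2\alpha - \alpha + 1)(\tfrac{5}{3}D + \tfrac{1}{3D} - 2)$, and since $\tfrac{5}{3}D + \tfrac{1}{3D} - 2 \ge 0$ for $D\ge 1$, it suffices to note $\partial_\alpha(\tau_2\alpha-\alpha+1) = \tau_2 - 1 \le 0$ and $\partial_{\tau_2}(\tau_2\alpha-\alpha+1) = \alpha > 0$, giving decrease in $\alpha$ and increase in $\tau_2$ (equivalently decrease as $\tau_2$ decreases).

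For the periodic and gradual bounds, I would substitute $\tau_1 = 1-\tau_2$ is already done, and verify two things for the bracketed quadratic $q(p) = a p^2 + b p + c$: first, that on the relevant range of $p$ (namely $p \le p_{th} \le 0.5$, or more safely $p \in [0,1]$) the quadratic is nonnegative and its sign of variation in each parameter is constant; second, that $\partial_\alpha q(p) \le 0$ and $\partial_{\tau_2} q(p) \ge 0$ pointwise in $p$. Collecting the $\alpha$-terms in Eq.~\eqref{eq: pattern 1} and Eq.~\eqref{eq: pattern 2} gives linear expressions in $\alpha$ whose coefficients I would bound using $D \ge 2$, $\tau_2 \in [0,1]$, and $p \in [0,1]$; similarly the $\tau_2$-terms are linear in $\tau_2$. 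Since a pointwise-in-$p$ derivative inequality for the bracket transfers directly to the bound $B(p)$ itself, this establishes the monotonicity of $B_p$ and $B_g$ in $\alpha$ and $\tau_2$.

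Finally, for the threshold $p_{th}$: by Proposition~\ref{proposition:Threshold}, $p_{th}$ is characterized (at least implicitly) by the crossing condition $B_r(p_{th}) = B_p(p_{th})$ (and the ordering with $B_g$). I would argue via the implicit function theorem: writing $F(p,\theta) = B_r(p) - B_p(p)$ for a parameter $\theta \in \{\alpha, -\tau_2, K\}$ that decreases all three bounds, and noting from the proof of Proposition~\ref{proposition:Threshold} that $\partial_p F < 0$ in a neighborhood of $p_{th}$ (the bounds cross transversally there), the sign of $dp_{th}/d\theta$ equals the sign of $\partial_\theta F = \partial_\theta B_r - \partial_\theta B_p$. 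The main obstacle — and where I would spend the most care — is showing this last difference has the right sign: it is not enough that both $B_r$ and $B_p$ decrease; one must show $B_r$ decreases \emph{at least as fast} as $B_p$ at the crossing point, which requires comparing the parameter-derivatives of the two bracketed polynomials evaluated at $p = p_{th}$ using the fact that $p_{th} \le 0.5$. I expect this to reduce, after substituting the crossing relation to eliminate one variable, to a sign check on a low-degree polynomial in $p_{th}$, $D$, and the remaining parameters, which can be dispatched by the same $D\ge 2$, $p\le 0.5$ estimates; an alternative, if the transversality argument is delicate, is to exhibit $p_{th}$ in closed form from the explicit quadratics in Corollary~\ref{corollary: case study} and differentiate directly.
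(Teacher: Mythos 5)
Your overall strategy coincides with the paper's: the paper offers no written proof beyond the remark that the proposition "can be established by examining the analytical expressions of the bounds and the threshold, and by evaluating their partial derivatives," so your proposal is essentially an expanded version of that intended argument. The parts you execute are correct — the $1/K$ prefactor immediately gives monotonicity in $K$, and for the random pattern $\partial_\alpha(\tau_2\alpha-\alpha+1)=\tau_2-1\le 0$ and $\partial_{\tau_2}(\tau_2\alpha-\alpha+1)=\alpha\ge 0$ settle items (i) and (iii).

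The genuine gap is in the step you defer, namely the claim that $\partial_\alpha q(p)\le 0$ and $\partial_{\tau_2}q(p)\ge 0$ hold pointwise for the periodic and gradual brackets over the whole range $D\ge 2$, $\tau_2\in[0,1]$, $p\in[0,1]$. That check fails. For the gradual bracket in Eq.~\eqref{eq: pattern 2}, $\partial_\alpha = 8\tau_2(D-2)p(1-p)+4\tau_2-2D+2$, whose maximum over $p$ (attained at $p=1/2$) equals $2D(\tau_2-1)+2$, which is strictly positive whenever $\tau_2>(D-1)/D$; concretely, at $D=3$, $\tau_2=0.9$, $p=0.5$ the bracket evaluates to $2.2+1.4\alpha$, \emph{increasing} in $\alpha$. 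Likewise, $\partial_{\tau_2}$ of the gradual bracket at $D=2$ reduces to $4\alpha-2<0$ for $\alpha<1/2$, contradicting item (iii); and the periodic bracket's $\alpha$-derivative factors as $(p-1)\bigl[(4\tau_2-1)p-(4\tau_2-2D+1)\bigr]$, which is positive for small $p$ when $D=2$ and $\tau_2>3/4$. So the stated monotonicities hold only on a restricted parameter region (roughly $\tau_2\le (D-1)/D$ for the gradual pattern and $\tau_2\le(2D-1)/4$ for the periodic one), and your plan as written cannot close without adding such hypotheses or restricting $p$. Your treatment of the threshold — implicit function theorem at the crossing, with the sign of $\partial_\theta B_r-\partial_\theta B_p$ as the crux — is the right framing and correctly isolates the comparison the paper silently skips, but it is left unexecuted and inherits the same difficulty, since it relies on the pointwise derivative signs above.
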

Proposition~\ref{proposition:The Effects of Parameters on the Threshold} can be established by examining the analytical expressions of the bounds and the threshold, and by evaluating their partial derivatives with respect to each parameter.  
From Proposition~\ref{proposition:The Effects of Parameters on the Threshold}, we could improve FL system performance by increasing the number of clients, assigning higher weights on new data, or accelerating the training process ($\tau_2$ decreases). Roughly speaking, improving FL performance via these parameter adjustments, the FL performance becomes more robust to the periodic pattern (i.e., has a smaller bound over a wider range of $p$) when compared with the gradual and random patterns.


\subsection{Algorithm Design}\label{sec:algorithm design}


Inspired by Theorem~\ref{theorem:expected generalization bound}, we can balance between data fitting and generalization by controlling the KL divergence between different distributions and the mutual information between each training sample and the output of the learning algorithm.
Based on this idea, we propose an algorithm that regularizes
the ERM algorithm~\cite{DBLP:conf/nips/Vapnik91} with KL divergence and mutual information. The proposed method is detailed in Algorithm~\ref{alo:our method}. 

For each client $k \in \mathcal{K}$, we define the client local training loss function of time slot $t$, which serves as the foundation for optimizing the FL model on individual clients. To reduce the KL divergence between different data distributions, we add KL divergence regularization terms between the current and previous data distributions to the client local training loss function. To reduce the mutual information between each individual training sample and the output hypothesis, we follow~\cite{DBLP:conf/nips/XuR17} and add an independent random noise term into the client local training loss function. Specifically, we formulate the KL regularization terms as $D_{\rm KL}(\mathcal{P}_{cur}^{\mathbf{w_k}}(\hat{Z}_{k,cur})\|\mathcal{P}_{pre}^{\mathbf{w}_k}(\hat{Z}_{k,pre}))$ and $D_{\rm KL}(\mathcal{P}_{pre}^{\mathbf{w}_k}(\hat{Z}_{k,pre})\|\mathcal{P}_{cur}^{\mathbf{w}_k}(\hat{Z}_{k,cur}))$ in the representation space, which also can be interpreted as matching the distribution of the learned representations from current and previous datasets~\cite{DBLP:conf/icml/ShenBW23,DBLP:journals/tit/DongGCSZL25,DBLP:conf/iclr/NguyenTGTB22}. Here, the local model $\mathbf{w}_k$ of client $k$ maps input data $Z_{k}$ to representation $\hat{Z}_k=h(Z_{k};\mathbf{w}_k)$, which corresponds to the output of the last hidden layer of $\mathbf{w}_k$. Note that $\mathbf{w}_{cur}=\frac{1}{K}\sum_{k=1}^{K}\mathbf{w}_k$. $\mathcal{P}_{cur}^{\mathbf{w}_k}(\hat{Z}_{k,cur})$ and $\mathcal{P}_{pre}^{\mathbf{w}_k}(\hat{Z}_{k,pre})$ denote the distributions over learned representations from the current and previous local datasets, respectively. In practice, we model these posterior distributions of representations using Gaussian distributions. To regulate the mutual information, we introduce an independent Gaussian noise $\epsilon \sim \mathcal{N}(\mu,\Sigma)$ to the local training loss function.
The local training loss function for a client $k$ is given by: 
\begin{multline}
f(\bold{w}_{k}) = \frac{1}{ N_{cur}}\sum\limits_{n=1}^{ N_{cur}}\ell(\bold{w}_{k},Z_{nk,cur}) \\
+\gamma D_{\rm KL}(\mathcal{P}_{cur}^{\mathbf{w}_k}(\hat{Z}_{k,cur})\|\mathcal{P}_{pre}^{\mathbf{w}_k}(\hat{Z}_{k,pre}))\\
+ \gamma D_{\rm KL}(\mathcal{P}_{pre}^{\mathbf{w}_k}(\hat{Z}_{k,pre})\|\mathcal{P}_{cur}^{\mathbf{w}_k}(\hat{Z}_{k,cur}))+\epsilon, \label{eq:objective function}
\end{multline}
where $\gamma$ is a hyper-parameter. Here, the first term focuses on minimizing the empirical risk over the current dataset, while the KL divergence terms ensure consistency between the learned representations distributions of the current dataset and the previous dataset and the noise term regulates the mutual information. Our goal is to minimize Eq.~\eqref{eq:objective function}.

It is difficult to obtain $D_{\rm KL}(\mathcal{P}_{cur}^{\mathbf{w}_k}(\hat{Z}_{k,cur}) \| \mathcal{P}_{pre}^{\mathbf{w}_k}(\hat{Z}_{k,pre}))$ and $D_{\rm KL}(\mathcal{P}_{pre}^{\mathbf{w}_k}(\hat{Z}_{k,pre}) \| \mathcal{P}_{cur}^{\mathbf{w}_k}(\hat{Z}_{k,cur}))$, therefore in practice we estimate these KL divergence terms using mini-batches.
Let  \((Z_{ik,cur})_{i=1}^B\) and \((Z_{ik,pre})_{i=1}^B\) be mini-batches drawn from the current dataset $\mathbf{S}_{cur}$ and the previous dataset $\mathbf{S}_{pre}$, where $B$ is the batch size. Their corresponding representations, denoted by \((\hat{Z}_{ik,cur})_{i=1}^B\) and \((\hat{Z}_{ik,pre})_{i=1}^B\), are obtained via the local model $\mathbf{w}_k$. 
We approximate the KL divergence terms using the empirical average over the mini-batch samples:
\begin{multline}
D_{\rm KL}(\mathcal{P}_{cur}^{\mathbf{w}_k}(\hat{Z}_{k,cur}) \| \mathcal{P}_{pre}^{\mathbf{w}_k}(\hat{Z}_{k,pre})) \approx\\
\frac{1}{B} \sum_{i=1}^B \left[ \log \mathcal{P}_{cur}^{\mathbf{w}_k}(\hat{Z}_{ik,cur})\right. 
\left.- \log \mathcal{P}_{pre}^{\mathbf{w}_k}(\hat{Z}_{ik,pre}) \right], \label{eq:estimate KL1}
\end{multline}
\begin{multline}
D_{\rm KL}(\mathcal{P}_{pre}^{\mathbf{w}_k}(\hat{Z}_{k,pre}) \| \mathcal{P}_{cur}^{\mathbf{w}_k}(\hat{Z}_{k,cur})) \approx\\
\frac{1}{B} \sum_{i=1}^B \left[ \log \mathcal{P}_{pre}^{\mathbf{w}_k}(\hat{Z}_{ik,pre})\right.
\left.- \log \mathcal{P}_{cur}^{\mathbf{w}_k}(\hat{Z}_{ik,cur}) \right]. \label{eq:estimate KL2}
\end{multline}
For each client $k$, after approximating the KL divergence terms using mini-batches as shown in \eqref{eq:estimate KL1} and \eqref{eq:estimate KL2}, the local training loss function in Eq.~\eqref{eq:objective function} is optimized using stochastic gradient descent. The gradients are computed with respect to the local model $\mathbf{w}_k$, and $\mathbf{w}_k$ is updated iteratively to minimize the local training loss. Once the local training is complete, the server aggregates the updated local models $\mathbf{w}_{k}$ from all participating clients to produce the global model $\mathbf{w}_{cur}$.
\begin{algorithm}[t]
\caption{Kullback-Leibler and Mutual Information-FedAVG (KLMI-FedAVG).}
\begin{algorithmic}[1]
\REQUIRE number of clients $K$, learning rate $\eta$ and initialized global model $\bold{w}_{0}$.
\STATE \textbf{Procedure} \textsc{ServerExecution}($\bold{w}_{0}$)
    \FOR{each communication round $r = 1, 2, \dots$}
        \FOR{each client $k$ \textbf{in parallel}}
            \STATE The local model $\bold{w}_{k} \gets$ \textsc{ClientUpdate}($k, \bold{w}_{r-1}$);
        \ENDFOR
        \STATE Global model $\bold{w}_{r} \gets \frac{1}{K} \sum_{k=1}^K \bold{w}_{k}$;
    \ENDFOR
\STATE \textbf{return} Global model $\bold{w}_{cur} \gets \bold{w}_{r}$
\STATE \textbf{end Procedure}

\STATE \textbf{Procedure} \textsc{ClientUpdate}($k, \bold{w}$)
    \STATE $\mathcal{B} \gets$ (split dataset into batches of size $B$);
    \FOR{each local epoch $i$ from 1 to $E$}
        \FOR{each batch $b \in \mathcal{B}$}
            \STATE Estimate the KL divergences in Eq.~\eqref{eq:objective function} using Eq.~\eqref{eq:estimate KL1} and Eq.~\eqref{eq:estimate KL2};
            \STATE Minimize the local training loss function $f(\bold{w})$ in Eq.~\eqref{eq:objective function} by gradient descent;
            \STATE $\bold{w} \gets \bold{w} - \eta \nabla f(\bold{w})$;
        \ENDFOR
    \ENDFOR
    \STATE \textbf{return} $\bold{w}$ to server
\STATE \textbf{end Procedure}
\end{algorithmic}
\label{alo:our method}
\end{algorithm}

\section{Tradeoffs between System Performance and Cost}\label{sec:Trade off between system error and cost}
In the previous section, we give an upper bound on $\overline{G}$. However, in practical systems, we cannot pursue the minimization of $\overline{G}$ indiscriminately, as reducing the error is often at the expense of additional investment, training, and communication costs in FL systems. 
In this section, we aim to characterize the tradeoff between $\overline{G}$ and system cost, under which we show the Pareto front of the FL system. The boundary indicates the best performance of the FL system. We consider the tradeoff resulting from the processing capacities of the clients. Specifically, a higher processing capacity induces a higher system cost, while it leads to a faster adaptation to real-time data distribution (i.e., a smaller $\tau_2$) and hence a smaller upper bound on $\overline{G}$. 

In the following, we first formulate a Stationary Generalization Error minimization problem subject to system cost constraint. Then, we characterize the performance-cost tradeoff in closed form and hence show the Pareto front despite the challenge of nonconvex programming. Our analysis focuses on general synchronous FL algorithms, including Algorithm~\ref{alo:our method}, FedAvg~\cite{DBLP:conf/aistats/McMahanMRHA17}, and its variants (e.g., FedProx~\cite{DBLP:conf/mlsys/LiSZSTS20}).



\subsection{Stationary Generalization Error Minimization Problem}
We consider the processing capacity of the clients for their local training, denoted by $f_k$ (in FLOPS) for $k\in\mathcal{K}$, as the decision variables to understand the performance-cost tradeoff. In the following, we first present the expression of the upper bound of $\overline{G}$ and cost function concerning $f_k$. Then, we formulate the optimization problem. 

\subsubsection{The Upper Bound on $\overline{G}$}
Let $\boldsymbol{f}=(f_k, k\in\mathcal{K})$ denote the processing capacity vector. Let $T^{UL}_k$ and $T^{DL}_k$ (in seconds) denote the time that client $k$ is required for uploading and downloading the model updates in each training round, respectively. $J_k$ is the number of floating point operations required by client $k$ to process one data unit. Each local training performs $E$ epochs. $N_{k}$ is the sample size. The duration of each training round~\cite{DBLP:conf/infocom/TangW21} is 
\begin{equation}\label{eq:tau}
\phi(\boldsymbol{f}) = \max_{k\in \{1,2,..,K\}}\left\{ \frac{N_kJ_k E}{f_k} + T^{UL}_k + T^{DL}_k\right\}.
\end{equation}
Suppose the FL algorithm continues for $R$ training rounds before its convergence.
Then, the duration needed for adaptation (i.e., the duration of stage II), denoted by $\tau_2(\boldsymbol{f})$, is 
\begin{equation}
    \tau_2(\boldsymbol{f})=R\cdot\phi(\boldsymbol{f}).
\end{equation}
According to Theorem~\ref{theorem:expected generalization bound}, the upper bound on $\overline{G}$ is a linear function of $\tau_2(\boldsymbol{f})$, which we denote as $B(\tau_2(\boldsymbol{f}))$.

\subsubsection{System Cost}
Given the processing capacity $f_k$, the cost of a client $k\in\mathcal{K}$ is defined as follows~\cite{DBLP:conf/infocom/TangW21}:
\begin{multline}\label{eq:C}
C_k(f_k)=(C^{UL}_k+C_k^{DL})R+C^{invt}_k f_k\\
+C^{comp}_k (f_k)^2N_kJ_kER.
\end{multline}
Parameters $C^{UL}_k$ and $C^{DL}_k$ correspond to the model uploading and downloading costs in each training round, respectively. Term $C^{invt}_kf_k$ is the investment cost per processing capacity. Term $C^{comp}_k (f_k)^2$ represents the operating cost that client $k$ needs to pay for conducting local training during one training round~\cite{DBLP:conf/infocom/TranBZMH19}.

\subsubsection{Problem Formulation}
To characterize the performance and cost tradeoff, we  introduce a budget for system cost, denoted by $\overline{C}$. Then, the Stationary Generalization Error minimization problem is formulated as:
\begin{subequations}\label{eq:multiobjective optimization}
\vspace{-2mm}
	\begin{align}
	\mathop{\text{minimize}}\limits_{\boldsymbol{f}} & ~~\textstyle  B(\tau_2(\boldsymbol{f}))\\
	\textrm{subject to} &~~ \sum\limits_kC_k(f_k)\leq \overline{C}.\label{eq:constraint}
 \vspace{-1mm}	
 \end{align}

\end{subequations}
Problem \eqref{eq:multiobjective optimization} is a nonconvex problem. This is because $\phi(\boldsymbol{f})$ contains a max operator, which is nonconvex. Despite this, we derive the closed-form relation between cost budget $\overline{C}$ and the optimal objective value, i.e., $B^* \triangleq B(\tau_2(\boldsymbol{f}^*))$, where $\boldsymbol{f}^*$ optimizes problem~\eqref{eq:multiobjective optimization}.

\subsection{Performance-Cost Tradeoff}
We first characterize some features of the optimal solution $\boldsymbol{f}^*$. Then, we transform problem \eqref{eq:multiobjective optimization} to an equivalent problem, with which we can obtain the tradeoff. The following two lemmas present the features of the optimal solution $\boldsymbol{f}^*$ with proofs given in Appendices~C and~D. First, a solution is optimal only if the equality holds for \eqref{eq:constraint}. That is, 
\begin{lemma}\label{lemma:optimal solution is equal}
    Processing capacity vector $\boldsymbol{f}^*$ is the optimal solution to problem \eqref{eq:multiobjective optimization} only if $\sum_kC_k(f_k^*)= \overline{C}$.
\end{lemma}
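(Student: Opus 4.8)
\emph{Proof plan.} The approach is a standard first--improvement argument by contradiction: if $\boldsymbol{f}^*$ were optimal while the budget constraint \eqref{eq:constraint} were slack, then uniformly enlarging every client's processing capacity by a tiny amount would remain feasible yet strictly lower the objective. The proof rests on three monotonicity facts that I would record first. (i) Each $C_k(f_k)$ in \eqref{eq:C} is a polynomial in $f_k$ with positive coefficients, hence continuous and \emph{strictly increasing} on $f_k>0$. (ii) For every $k\in\mathcal{K}$ the inner term $N_kJ_kE/f_k+T^{UL}_k+T^{DL}_k$ appearing in \eqref{eq:tau} is strictly decreasing in $f_k$ (using $N_kJ_kE>0$); since $\phi(\boldsymbol{f})$ is the maximum over the finitely many such terms, it is nonincreasing in each coordinate and, crucially, \emph{strictly decreasing} whenever \emph{all} coordinates are increased, and therefore so is $\tau_2(\boldsymbol{f})=R\,\phi(\boldsymbol{f})$. (iii) By Theorem~\ref{theorem:expected generalization bound} (and its specialization \eqref{eq:simplied upper bound}), $B(\tau_2)$ is affine and strictly increasing in $\tau_2$ — this is precisely the monotonicity invoked in the discussion preceding Problem~\eqref{eq:multiobjective optimization}, where a smaller $\tau_2$ yields a smaller bound on $\overline{G}$.

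Next I would run the contradiction. Suppose $\boldsymbol{f}^*$ solves \eqref{eq:multiobjective optimization} but $\delta\triangleq\overline{C}-\sum_kC_k(f_k^*)>0$. By the continuity in (i), there exists $\epsilon>0$ small enough that the perturbed vector $\boldsymbol{f}'\triangleq(f_k^*+\epsilon,\ k\in\mathcal{K})$ still satisfies $\sum_kC_k(f_k^*+\epsilon)\le\overline{C}$, so $\boldsymbol{f}'$ is feasible for \eqref{eq:multiobjective optimization}. By (ii), $\tau_2(\boldsymbol{f}')<\tau_2(\boldsymbol{f}^*)$, and then by (iii), $B(\tau_2(\boldsymbol{f}'))<B(\tau_2(\boldsymbol{f}^*))$. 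This contradicts the optimality of $\boldsymbol{f}^*$, so we must have $\sum_kC_k(f_k^*)=\overline{C}$.

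The step I expect to be the main obstacle — or at least the one requiring care — is (ii): because $\phi$ is a maximum, increasing a \emph{single} capacity need not decrease $\phi$ when the bottleneck lies at another client, which is exactly why the perturbation must raise \emph{every} $f_k^*$ simultaneously; once that is done, if $k'$ attains the new maximum then $\phi(\boldsymbol{f}')=g_{k'}(f_{k'}^*+\epsilon)<g_{k'}(f_{k'}^*)\le\phi(\boldsymbol{f}^*)$, giving the strict drop. Two minor provisos are worth flagging: the argument uses $N_kJ_kE>0$ for all $k$ (every client has data and performs nonzero computation), and it uses the strict positivity of the slope of $B(\cdot)$ in $\tau_2$, which holds whenever the system genuinely drifts. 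Finally, note that the nonconvexity of \eqref{eq:multiobjective optimization} induced by the $\max$ operator plays no role here, since the construction is a purely local feasibility-preserving perturbation and invokes no convexity.
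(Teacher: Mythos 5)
Your proposal is correct and follows essentially the same argument as the paper: assume the budget constraint is slack, perturb every client's capacity upward by a small amount to preserve feasibility, observe that $\phi$ (and hence $\tau_2$) strictly decreases so the affine increasing bound $B$ strictly decreases, and conclude by contradiction. Your write-up is actually more careful than the paper's on the two points it glosses over — why \emph{all} coordinates must be perturbed for the $\max$ to strictly drop, and the need for $B$ to be strictly increasing in $\tau_2$ — but these are elaborations of the same proof, not a different route.
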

Second, the $\max$ operator in \eqref{eq:tau} introduces non-convexity, which makes the problem \eqref{eq:multiobjective optimization} difficult to solve. Lemma~\ref{lemma:remove max} addresses this issue by handling the $\max$ operator.

\begin{lemma}\label{lemma:remove max}
    Let $\phi_k(f_k)=\frac{S_kD_kE}{f_k}+T^{UL}_k+T^{DL}_k$. Processing capacity vector $\boldsymbol{f}^*$ is the optimal solution only if $\phi_k(f_k^*)$ are identical for $k\in\mathcal{K}$. 
\end{lemma}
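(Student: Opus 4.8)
The plan is to argue by contradiction: suppose $\boldsymbol{f}^*$ is optimal but the per-client round durations $\phi_k(f_k^*)$ are not all equal. I will show that one can then construct a feasible $\boldsymbol{f}'$ that attains the same (or smaller) objective while strictly slacking the cost constraint, contradicting Lemma~\ref{lemma:optimal solution is equal}. First I would observe that $\phi(\boldsymbol{f}^*) = \max_k \phi_k(f_k^*)$, so there is a nonempty set of ``bottleneck'' clients achieving the maximum and at least one ``slack'' client $k_0$ with $\phi_{k_0}(f_{k_0}^*) < \phi(\boldsymbol{f}^*)$. Since $\phi_{k_0}$ is strictly decreasing in $f_{k_0}$, I can \emph{decrease} $f_{k_0}$ slightly to a value $f_{k_0}' < f_{k_0}^*$ such that $\phi_{k_0}(f_{k_0}')$ still does not exceed $\phi(\boldsymbol{f}^*)$; hence $\phi(\boldsymbol{f}') = \phi(\boldsymbol{f}^*)$ where $\boldsymbol{f}'$ agrees with $\boldsymbol{f}^*$ except in coordinate $k_0$. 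Consequently $\tau_2(\boldsymbol{f}') = R\phi(\boldsymbol{f}') = \tau_2(\boldsymbol{f}^*)$, so the objective $B(\tau_2(\cdot))$ is unchanged.

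Next I would examine the cost. From Eq.~\eqref{eq:C}, $C_{k_0}(f_{k_0}) = (C^{UL}_{k_0}+C^{DL}_{k_0})R + C^{invt}_{k_0} f_{k_0} + C^{comp}_{k_0} f_{k_0}^2 N_{k_0} J_{k_0} E R$, which is strictly increasing in $f_{k_0}$ for $f_{k_0}>0$. Therefore $C_{k_0}(f_{k_0}') < C_{k_0}(f_{k_0}^*)$, and since all other clients' costs are unchanged, $\sum_k C_k(f_k') < \sum_k C_k(f_k^*) = \overline{C}$, so $\boldsymbol{f}'$ is strictly feasible with the same objective value. But Lemma~\ref{lemma:optimal solution is equal} says any optimal solution must satisfy the cost constraint with equality; since $\boldsymbol{f}'$ achieves the optimal objective value yet has $\sum_k C_k(f_k') < \overline{C}$, $\boldsymbol{f}'$ cannot be optimal, and yet it attains $B(\tau_2(\boldsymbol{f}^*)) = B^*$ — a contradiction with the definition of $B^*$ as the optimal value. (Alternatively, and more directly, one can then \emph{increase} some bottleneck client's capacity using the freed-up budget to strictly decrease $\phi(\boldsymbol{f}')$ and hence strictly decrease the objective, contradicting optimality of $\boldsymbol{f}^*$.) Either way we conclude all $\phi_k(f_k^*)$ must coincide.

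The main obstacle is handling the degenerate possibility that the objective $B(\tau_2(\boldsymbol{f}))$ is insensitive to $\tau_2$ over the relevant range (e.g., if $B$ were locally constant), in which case equalizing the $\phi_k$ values would not be \emph{forced} by strict objective improvement alone; this is precisely why the argument must route through Lemma~\ref{lemma:optimal solution is equal} (cost-tightness) rather than through monotonicity of $B$ alone. A second, more technical point is ensuring the perturbation $f_{k_0}'$ stays in the feasible domain $f_{k_0}>0$ and that the neighborhood of $f_{k_0}^*$ in which $\phi_{k_0}$ remains below the bottleneck value is nonempty — both follow from continuity and strict monotonicity of $\phi_{k_0}$, so the perturbation is always realizable. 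I would also note in passing that the lemma gives a system of $K-1$ equations $\phi_1(f_1)=\cdots=\phi_K(f_K)$ which, together with the binding cost constraint from Lemma~\ref{lemma:optimal solution is equal}, pins down $\boldsymbol{f}^*$ and sets up the closed-form tradeoff derivation that follows.
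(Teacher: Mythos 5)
Your proposal is correct and follows essentially the same route as the paper's proof: both arguments pick a non-bottleneck client, decrease its processing capacity so that the round duration $\phi(\boldsymbol{f})$ (and hence the objective) is unchanged while the total cost strictly drops, and then invoke Lemma~\ref{lemma:optimal solution is equal} to reach a contradiction. Your parenthetical alternative—spending the freed budget on a bottleneck client to strictly reduce the objective—is a slightly more explicit way of closing the contradiction than the paper's terse final step, but it is the same underlying idea.
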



According to Lemmas \ref{lemma:optimal solution is equal} and \ref{lemma:remove max}, we can transform problem \eqref{eq:multiobjective optimization} to an equivalent problem. In the equivalent problem, $\tau_2$ is set to be the decision variable. That is, 
\begin{subequations}\label{eq:multiobjective optimization-2}
	\begin{align}
	\mathop{\text{minimize}}\limits_{\tau_2} & ~~\textstyle  B(\tau_2)\\
	\textrm{subject to} &~~ \sum\limits_kC_k(\phi_k^{-1}(\frac{\tau_2}{R}))= \overline{C}.\label{eq:constraint2}
	\end{align}
\end{subequations}
where  $\phi_k^{-1}(\cdot)$ is the inverse function of $\phi_k(\cdot)$. Problems \eqref{eq:multiobjective optimization} and \eqref{eq:multiobjective optimization-2} are equivalent in the following sense. 
\begin{proposition}\label{proposition:tradeoff}
If $\tau_2^*$ is the optimal solution to problem~\eqref{eq:multiobjective optimization-2}, then $f_k^*\triangleq\phi_k^{-1}(\frac{\tau_2^*}{R})$ for $k\in\mathcal{K}$ optimizes problem \eqref{eq:multiobjective optimization}. 
\end{proposition}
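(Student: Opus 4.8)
The plan is to show that the map $\boldsymbol{f}\mapsto\tau_2$ given by $\tau_2=R\cdot\phi(\boldsymbol{f})$, restricted to the set of points satisfying the optimality conditions of Lemmas~\ref{lemma:optimal solution is equal} and~\ref{lemma:remove max}, is a bijection onto the feasible set of problem~\eqref{eq:multiobjective optimization-2}, and that it preserves objective values. First I would invoke Lemma~\ref{lemma:remove max}: any optimal $\boldsymbol{f}^*$ of problem~\eqref{eq:multiobjective optimization} must have $\phi_k(f_k^*)$ identical across $k\in\mathcal{K}$, so on the optimal set the $\max$ in \eqref{eq:tau} is attained simultaneously by every client and $\phi(\boldsymbol{f}^*)=\phi_k(f_k^*)$ for each $k$. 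Since each $\phi_k(f_k)=\frac{N_kJ_kE}{f_k}+T_k^{UL}+T_k^{DL}$ is continuous and strictly decreasing in $f_k$ on $(0,\infty)$, it is invertible on its range, so $f_k=\phi_k^{-1}\!\big(\tfrac{\tau_2}{R}\big)$ is well-defined and uniquely determined by $\tau_2$. This establishes that the optimal-set constraint structure reduces the $K$-dimensional decision vector $\boldsymbol{f}$ to the single scalar $\tau_2$.

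Next I would translate the constraints. By Lemma~\ref{lemma:optimal solution is equal}, optimality forces $\sum_k C_k(f_k^*)=\overline{C}$; substituting $f_k^*=\phi_k^{-1}\!\big(\tfrac{\tau_2}{R}\big)$ gives exactly the constraint \eqref{eq:constraint2}. Conversely, given any $\tau_2$ feasible for \eqref{eq:multiobjective optimization-2}, define $f_k\triangleq\phi_k^{-1}\!\big(\tfrac{\tau_2}{R}\big)$; then $\phi_k(f_k)=\tfrac{\tau_2}{R}$ for all $k$, so $\phi(\boldsymbol{f})=\max_k\phi_k(f_k)=\tfrac{\tau_2}{R}$ and hence $\tau_2(\boldsymbol{f})=R\cdot\phi(\boldsymbol{f})=\tau_2$, while $\sum_k C_k(f_k)=\overline{C}\le\overline{C}$, so $\boldsymbol{f}$ is feasible for \eqref{eq:multiobjective optimization}. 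Moreover the objective of \eqref{eq:multiobjective optimization} at this $\boldsymbol{f}$ is $B(\tau_2(\boldsymbol{f}))=B(\tau_2)$, which is precisely the objective of \eqref{eq:multiobjective optimization-2} at $\tau_2$. Thus feasible-and-objective-preserving correspondences exist in both directions between the optimal set of \eqref{eq:multiobjective optimization} and the feasible set of \eqref{eq:multiobjective optimization-2}.

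Finally I would close the argument by a standard optimality transfer. Let $\tau_2^*$ be optimal for \eqref{eq:multiobjective optimization-2} and set $f_k^*\triangleq\phi_k^{-1}\!\big(\tfrac{\tau_2^*}{R}\big)$. By the construction above, $\boldsymbol{f}^*$ is feasible for \eqref{eq:multiobjective optimization} with value $B(\tau_2^*)$. Suppose, for contradiction, that some feasible $\boldsymbol{f}'$ of \eqref{eq:multiobjective optimization} achieves a strictly smaller value $B(\tau_2(\boldsymbol{f}'))<B(\tau_2^*)$. Since problem~\eqref{eq:multiobjective optimization} attains its optimum (the feasible set can be taken compact after noting $C_k$ is coercive in $f_k$, so $f_k$ is bounded), we may take $\boldsymbol{f}'$ to be an optimal solution, to which Lemmas~\ref{lemma:optimal solution is equal} and~\ref{lemma:remove max} apply; hence $\tau_2'\triangleq\tau_2(\boldsymbol{f}')$ satisfies \eqref{eq:constraint2} and is feasible for \eqref{eq:multiobjective optimization-2} with $B(\tau_2')<B(\tau_2^*)$, contradicting the optimality of $\tau_2^*$. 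Therefore $\boldsymbol{f}^*$ optimizes \eqref{eq:multiobjective optimization}. The main obstacle I anticipate is the careful handling of the direction of implication in Lemmas~\ref{lemma:optimal solution is equal} and~\ref{lemma:remove max}: they are stated as necessary conditions (``only if''), so the argument must route every optimal solution of \eqref{eq:multiobjective optimization} through these conditions rather than asserting that an arbitrary feasible point has equalized $\phi_k$ — and it must separately verify that the point produced from $\tau_2^*$ is genuinely feasible and genuinely optimal, not merely a stationary point.
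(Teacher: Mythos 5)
Your proof is correct and follows essentially the same route as the paper's: both reduce problem~\eqref{eq:multiobjective optimization} to problem~\eqref{eq:multiobjective optimization-2} via Lemmas~\ref{lemma:optimal solution is equal} and~\ref{lemma:remove max} and close with a contradiction that maps a hypothetically better $\boldsymbol{f}'$ to a better $\tau_2'$. If anything, your version is more careful than the paper's Case~2, since you make explicit the need to pass to an \emph{optimal} $\boldsymbol{f}'$ so that the ``only if'' lemmas apply, and to verify that the optimum of~\eqref{eq:multiobjective optimization} is attained.
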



The proof of Proposition~\ref{proposition:tradeoff} can be found in Appendix~E.
Now, we focus on solving \eqref{eq:multiobjective optimization-2}. For presentation simplicity, we represent 
 $B(\tau_2)=a\tau_2+b$, because  $B(\tau_2)$ is linear in $\tau_2$ based on Theorem \ref{theorem:expected generalization bound}. As a result, we represent the cost constraint \eqref{eq:constraint2} in the following form: 
 \begin{equation}\label{eq:constraint3}
     \sum\limits_k\frac{c_k}{(\tau_2-g_k)^2}+\frac{d_k}{\tau_2-h_k}+e_k= \overline{C}, 
 \end{equation}
where $c_k$, $g_k$, $d_k$, $h_k$, and $e_k$ are constant values and can be determined based on \eqref{eq:tau} and \eqref{eq:C}. Recall that $B^*$ denotes the optimal objective value of problem \eqref{eq:multiobjective optimization-2}, we have the following theorem.  

\begin{theorem}[Performance-Cost Tradeoff]\label{theorem:pareto-frontier}
The optimal value $B^*$ and system cost $\overline{C}$ satisfy
    \begin{equation}\label{eq:relation between B and cost}
\sum\limits_{k}\frac{c_k}{(\frac{B^*-b}{a}-g_k)^2}+\frac{d_k}{\frac{B^*-b}{a}-h_k}+e_k = \overline{C}.
\end{equation}
\end{theorem}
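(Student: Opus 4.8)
The plan is to exploit that $B(\tau_2)=a\tau_2+b$ is affine with $a>0$ (by Theorem~\ref{theorem:expected generalization bound} and the discussion preceding problem~\eqref{eq:multiobjective optimization}, a larger $\tau_2$ only enlarges the bound) and that, once the cost constraint is rewritten in the form~\eqref{eq:constraint3}, the map from $\tau_2$ to the required budget is strictly monotone. Strict monotonicity collapses the feasible set of problem~\eqref{eq:multiobjective optimization-2} to a single point, after which the claimed identity follows by inverting $B^*=a\tau_2^*+b$ and substituting.

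First I would study the constraint map $\Gamma(\tau_2)\triangleq\sum_k C_k\big(\phi_k^{-1}(\tau_2/R)\big)$. From~\eqref{eq:tau}, $\phi_k(f_k)=N_kJ_kE/f_k+T^{UL}_k+T^{DL}_k$ is continuous and strictly decreasing on $f_k>0$, so $\phi_k^{-1}$ is continuous and strictly decreasing on $\{t>T^{UL}_k+T^{DL}_k\}$, with explicit form $\phi_k^{-1}(t)=N_kJ_kE/(t-T^{UL}_k-T^{DL}_k)$; and from~\eqref{eq:C}, $C_k$ is continuous and strictly increasing on $f_k>0$. Hence each composition $C_k\circ\phi_k^{-1}$, and therefore $\Gamma$, is continuous and strictly decreasing on $\tau_2>R\max_k(T^{UL}_k+T^{DL}_k)$. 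Consequently, for every budget $\overline{C}$ in the range of $\Gamma$, the equality constraint~\eqref{eq:constraint2} has a unique solution $\tau_2^*=\Gamma^{-1}(\overline{C})$, which is the unique feasible point of~\eqref{eq:multiobjective optimization-2} and hence trivially its minimizer. (If one instead reads the cost condition as the inequality inherited from~\eqref{eq:constraint}, the same strict monotonicity turns it into $\tau_2\ge\Gamma^{-1}(\overline{C})$, and minimizing the strictly increasing $B(\tau_2)$ again forces $\tau_2^*=\Gamma^{-1}(\overline{C})$; either way $\Gamma(\tau_2^*)=\overline{C}$.)

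Next I would combine the two facts. Since $B^*=B(\tau_2^*)=a\tau_2^*+b$ with $a>0$, we solve for $\tau_2^*=(B^*-b)/a$. Writing $\Gamma(\tau_2^*)=\overline{C}$ in the rewritten form~\eqref{eq:constraint3} reads $\sum_k c_k/(\tau_2^*-g_k)^2+d_k/(\tau_2^*-h_k)+e_k=\overline{C}$, and substituting $\tau_2^*=(B^*-b)/a$ yields exactly~\eqref{eq:relation between B and cost}. Finally, Proposition~\ref{proposition:tradeoff} guarantees that $f_k^*\triangleq\phi_k^{-1}(\tau_2^*/R)$ solves the original problem~\eqref{eq:multiobjective optimization}, so $B^*$ is genuinely the optimal Stationary Generalization Error bound attainable under budget $\overline{C}$, and~\eqref{eq:relation between B and cost} traces out the Pareto front.

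The step I expect to require the most care is verifying that~\eqref{eq:constraint3} is the correct algebraic rewriting of~\eqref{eq:constraint2}: substituting $\phi_k^{-1}(\tau_2/R)=RN_kJ_kE/(\tau_2-R(T^{UL}_k+T^{DL}_k))$ into~\eqref{eq:C} and collecting terms, the $(f_k)^2$ contribution must produce the $c_k/(\tau_2-g_k)^2$ term and the linear $f_k$ contribution the $d_k/(\tau_2-h_k)$ term, with $g_k=h_k=R(T^{UL}_k+T^{DL}_k)$ and $e_k=(C^{UL}_k+C^{DL}_k)R$; this bookkeeping is routine but is the source of the explicit constants. A secondary point is to state the standing feasibility assumption explicitly, namely that $\overline{C}$ lies in the range of $\Gamma$ — equivalently, that it is large enough to admit some $\tau_2^*>R\max_k(T^{UL}_k+T^{DL}_k)$ — without which $\tau_2^*$, and hence $B^*$, is not well defined.
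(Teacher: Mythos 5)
Your proposal is correct and follows essentially the same route as the paper: the paper's own proof of Theorem~\ref{theorem:pareto-frontier} is precisely the substitution of $\tau_2^*=(B^*-b)/a$ into the rewritten cost constraint~\eqref{eq:constraint3}, with the budget-saturation and equivalence steps delegated to Lemmas~\ref{lemma:optimal solution is equal}--\ref{lemma:remove max} and Proposition~\ref{proposition:tradeoff}. Your additional monotonicity argument for $\Gamma$ (and the explicit identification of $c_k,d_k,e_k,g_k,h_k$) is a sound, slightly more self-contained justification of the same reduction, not a different method.
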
This is proven by substituting $\tau_2$ and $B^* = a\tau_2+b$ in Eq.~\eqref{eq:constraint3}. Based on Theorem~\ref{theorem:pareto-frontier}, we can plot Pareto frontier for the upper bound $B$ and FL system cost $\overline{C}$. In addition, according to Eq.~\eqref{eq:relation between B and cost}, Pareto frontier is convex, suggesting that such a frontier can be easily obtained by existing multi-objective optimization approaches in practical FL systems. 
The impact of parameters on the Pareto frontiers will be discussed in Section \ref{sec:experiment 2}.

\begin{figure}[t]
\vspace{-3mm}
\centering
\setlength{\abovecaptionskip}{-0.2pt}
\includegraphics[width=1\linewidth]{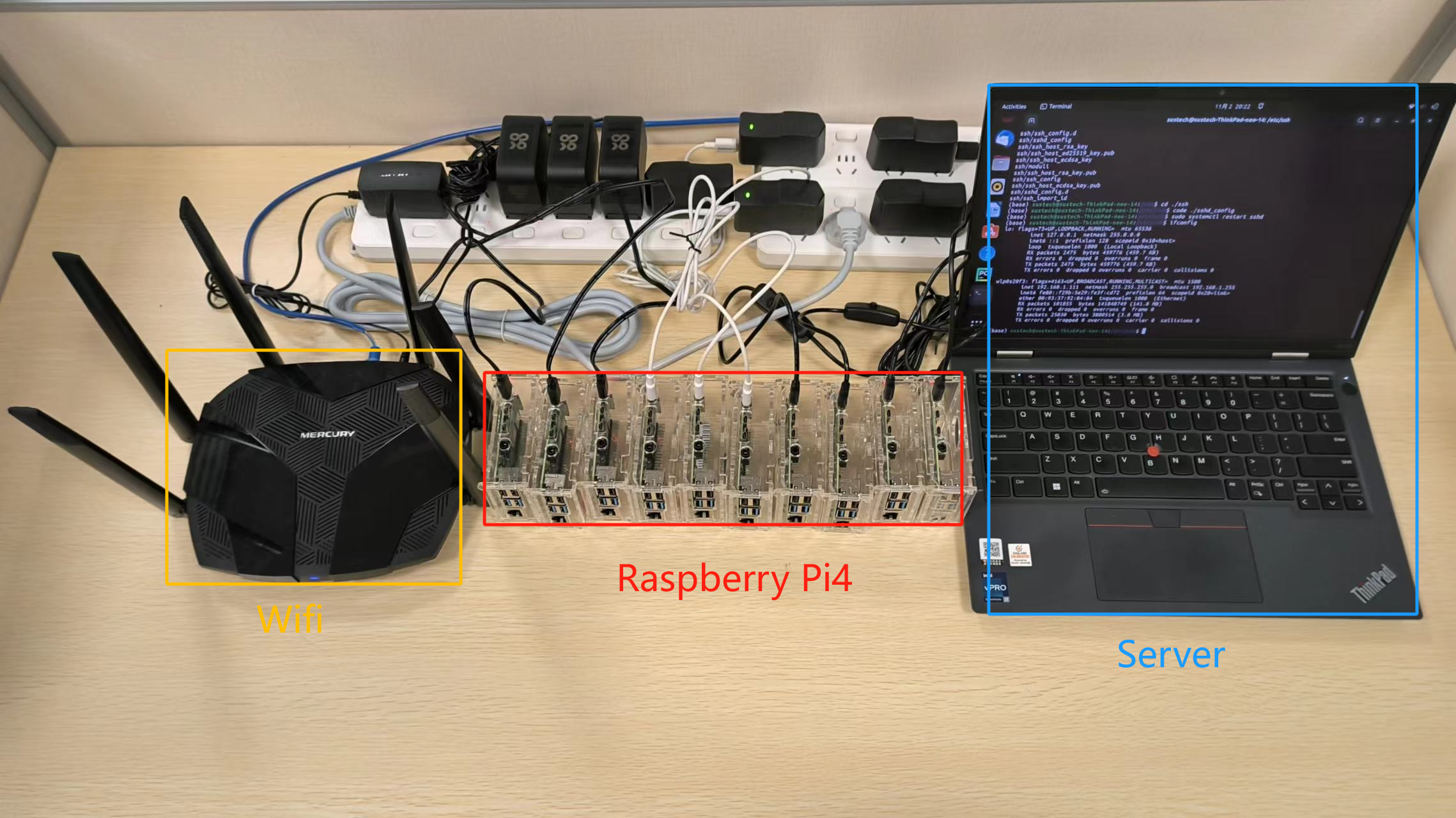}
\caption{Testbed consisting of Raspberry Pi4 devices, a laptop, and a Wi-Fi router. The Raspberry Pi4 devices are connected via the Wi-Fi network, with the laptop serving as the server for model aggregation.}\label{fig:testbed}
\vspace{-3mm}
\end{figure}

\section{Experiments}\label{sec:Experiments}
In Section~\ref{sec:experiment 1}, we built an FL testbed using Raspberry Pi4 devices to verify the upper bound of $\overline{G}$ (Theorem~\ref{theorem:expected generalization bound}). In Section~\ref{sec:experiment 3}, we compare our proposed Algorithm~\ref{alo:our method} with  baselines. In Section~\ref{sec:experiment 2}, we present Pareto frontiers according to Theorem~\ref{theorem:pareto-frontier}. Please refer to Appendix~G for additional experimental results.

\subsection{Upper Bound on the Stationary Generalization Error}\label{sec:experiment 1}
\subsubsection{Experimental Setup}\label{subsec:setup}
We built an FL system using Raspberry Pi4 devices. Our testbed is shown in Fig.~\ref{fig:testbed}. We use FedAvg algorithm~\cite{DBLP:conf/aistats/McMahanMRHA17} to aggregate the global model. We use datasets CIRCLE~\cite{DBLP:conf/sbia/GamaMCR04} and MNIST~\cite{mnist} with a 2-layer CNN model and CIFAR-10~\cite{krizhevsky2009learning} with a ResNet18 model to conduct experiments. We test CIRCLE and MNIST on our testbed, and CIFAR-10 on a server with a GPU NVIDIA A100. Specifically, CIRCLE is a synthetic datasets commonly used in concept drift experiments~\cite{DBLP:journals/tkde/LuLDGGZ19,DBLP:journals/inffus/CriadoCIRB22}. Samples are classified to two classes by the circle boundary for each concept. Different concepts are generated by moving the centre of circles. When applying MNIST, we generate different concepts by rotating the images, with a total of four concepts (i.e., 0, 90, 180, and 270 degrees). When using CIFAR-10, we use dirichlet distribution~\cite{DBLP:journals/corr/abs-1909-06335} to divide the dataset into different concepts, and the corresponding dirichlet parameter is set to 1. 

\subsubsection{Performance under Different Patterns}

\begin{figure}[t]
	\centering

	\subfloat[IID]{
		\includegraphics[width=0.47\linewidth]{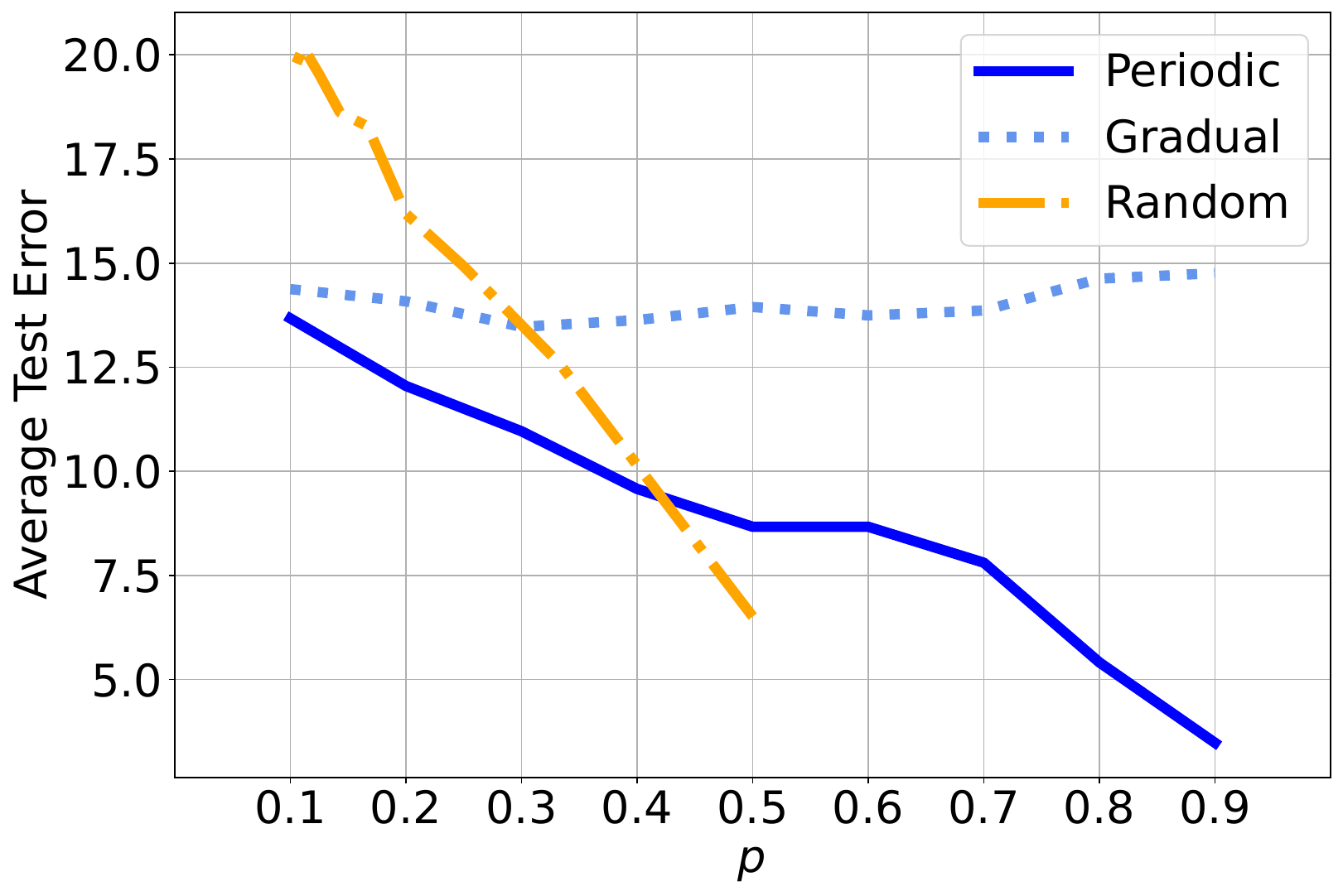}
		\label{fig:circle_iid}}
	\subfloat[Non-IID]{
		\includegraphics[width=0.47\linewidth]{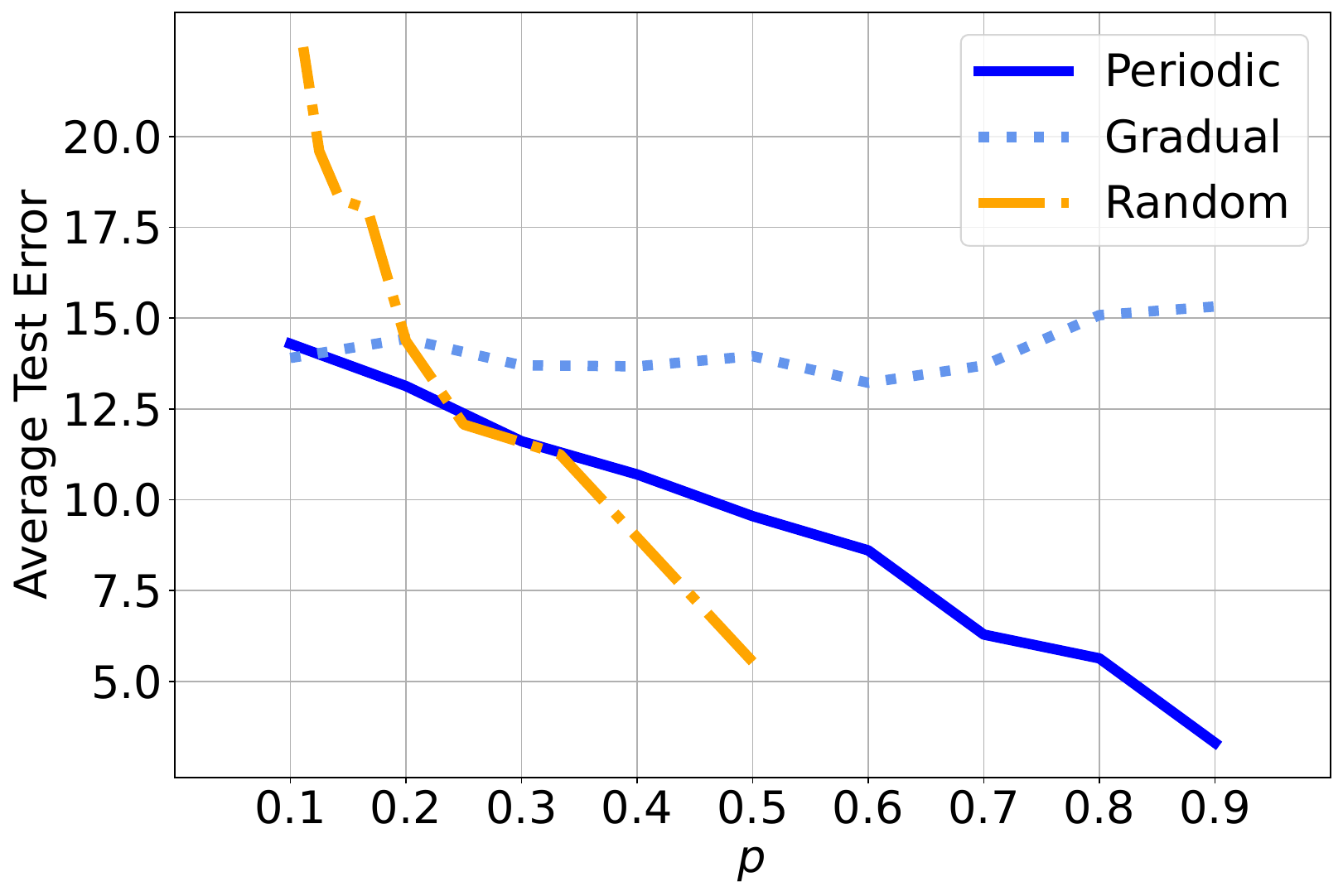}
		\label{fig:circle_non_iid}}

 \caption{CIRCLE: Impact on average test error with $D=5$.}
\label{fig:circle_simulation_compare_3_pattern_bound_p_advance_D=4}
\end{figure}

\begin{figure}[t]
	\centering
	\subfloat[IID]{
		\includegraphics[width=0.47\linewidth]{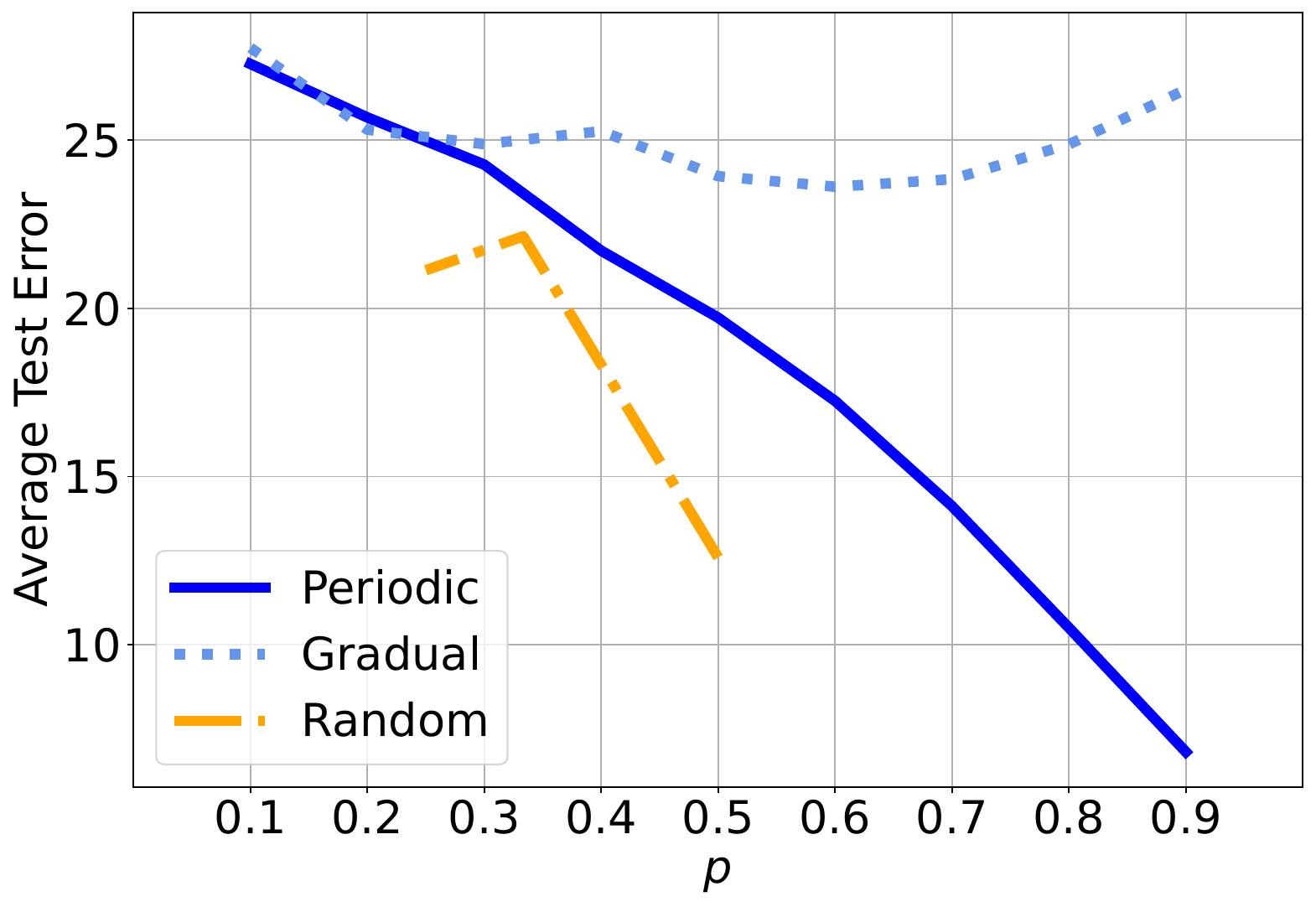}
		\label{fig:mnist_iid}}
	\subfloat[Non-IID]{
		\includegraphics[width=0.47\linewidth]{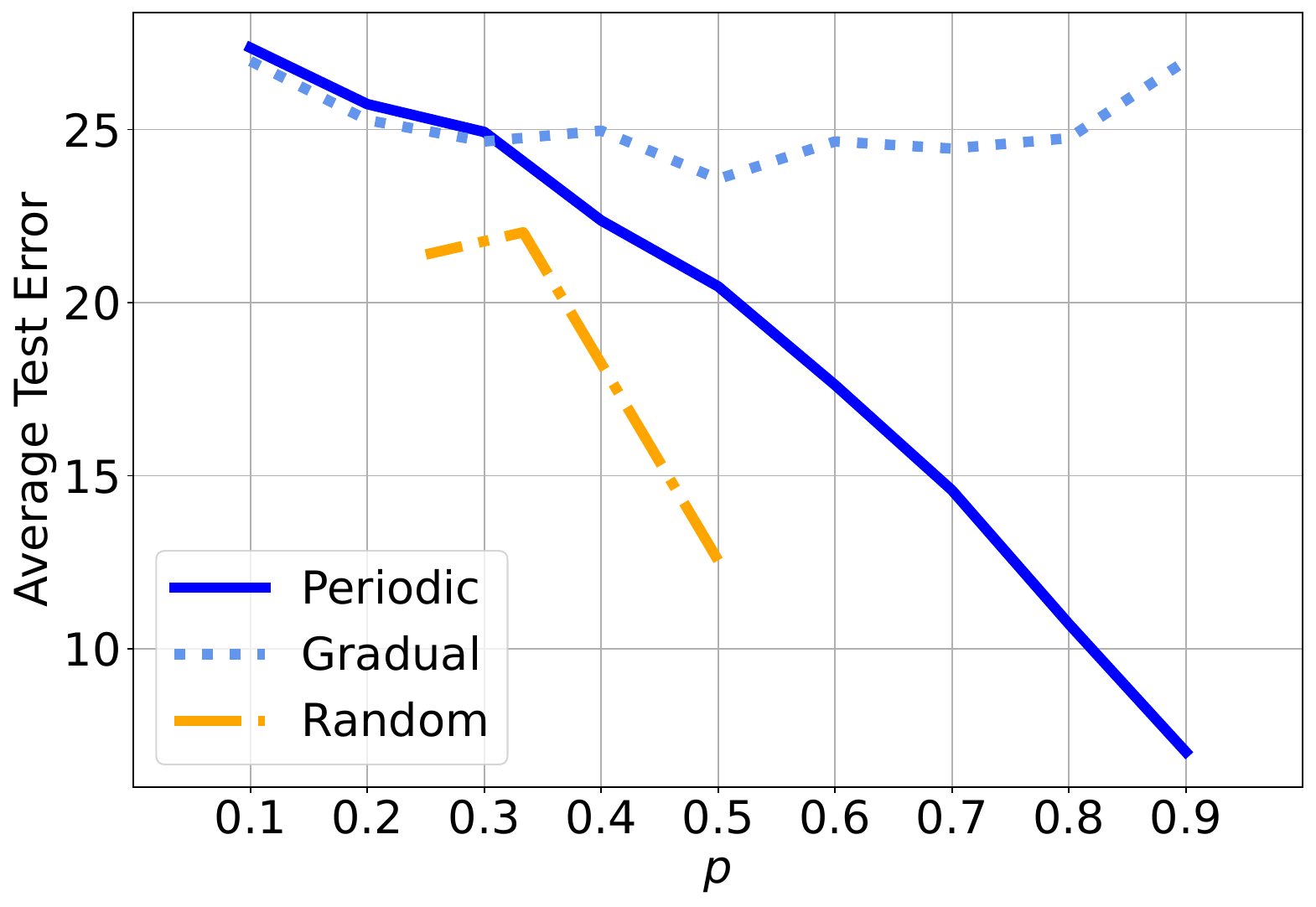}
		\label{fig:mnist_non_iid}}
 \caption{MNIST: Impact on average test error  with $D=4$ (i.e., images rotated by 0, 90, 180, 270 degrees).}
 \label{fig:mnist_simulation_compare_3_pattern_bound_p_advance_D=4}
\end{figure}

\begin{figure}[t]
	\centering
	\subfloat[IID]{
		\includegraphics[width=0.47\linewidth]{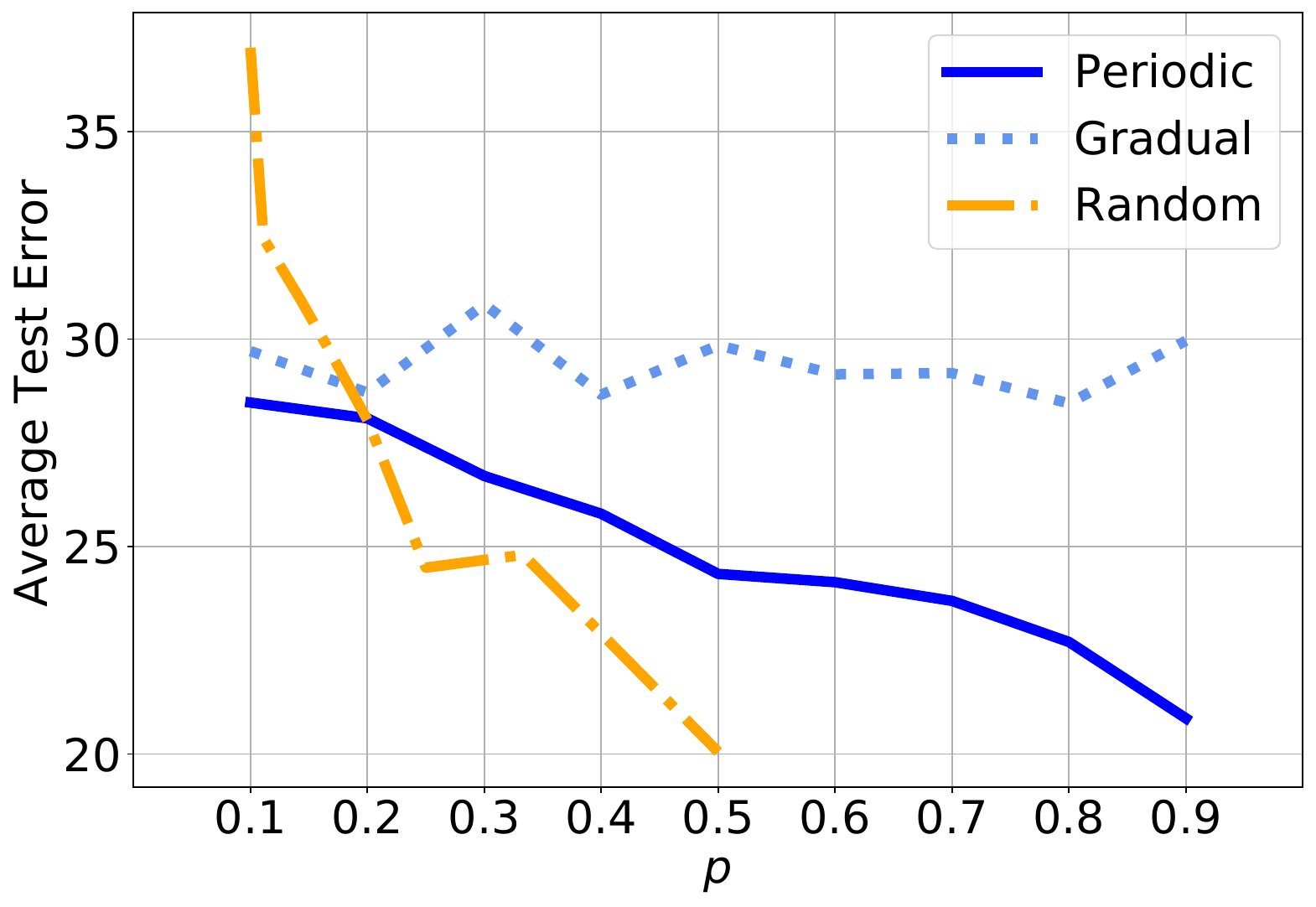}
		\label{fig:cifar_iid_D=5}}
	\subfloat[Non-IID]{
		\includegraphics[width=0.47\linewidth]{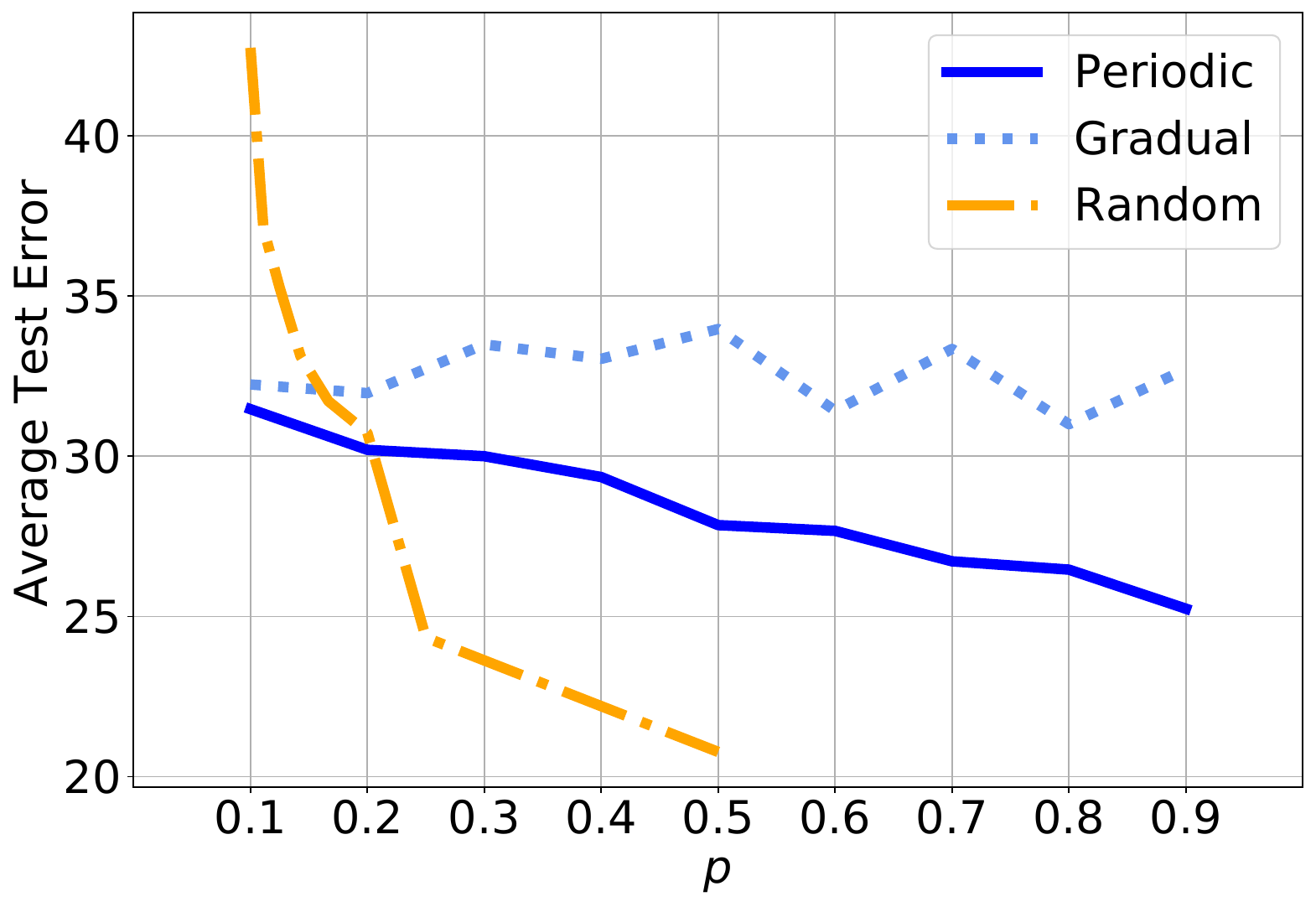}
		\label{fig:cifar_non_iid_D=5}}
 \caption{CIFAR-10: Impact on average test error with $D = 5$.} \label{fig:cifar_simulation_compare_3_pattern_bound_p_advance_D=5}
\end{figure}

\begin{figure}[t]
	\centering
	\subfloat[IID]{
		\includegraphics[width=0.47\linewidth]{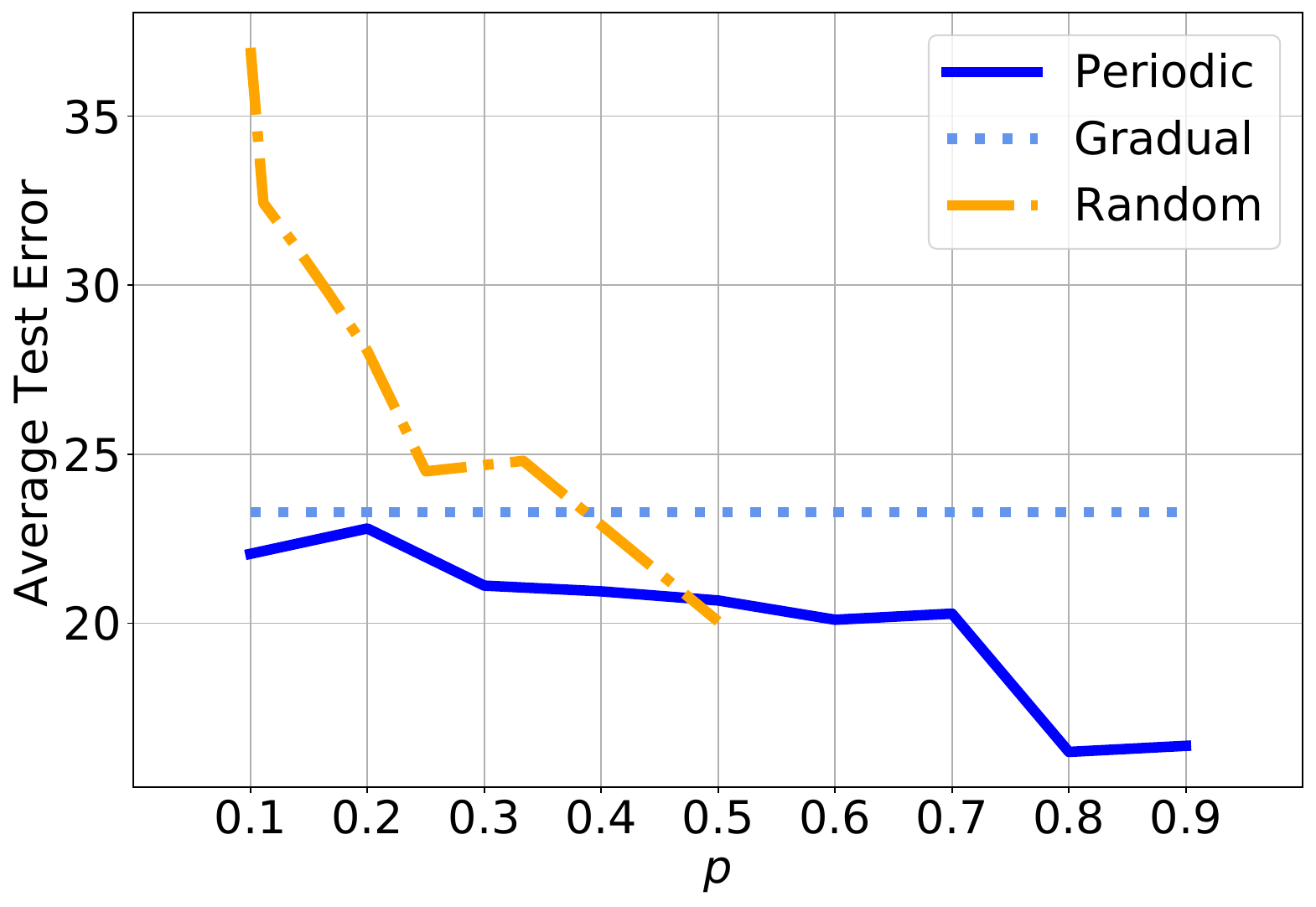}
		\label{fig:cifar_iid_D=2}}
	\subfloat[Non-IID]{
		\includegraphics[width=0.47\linewidth]{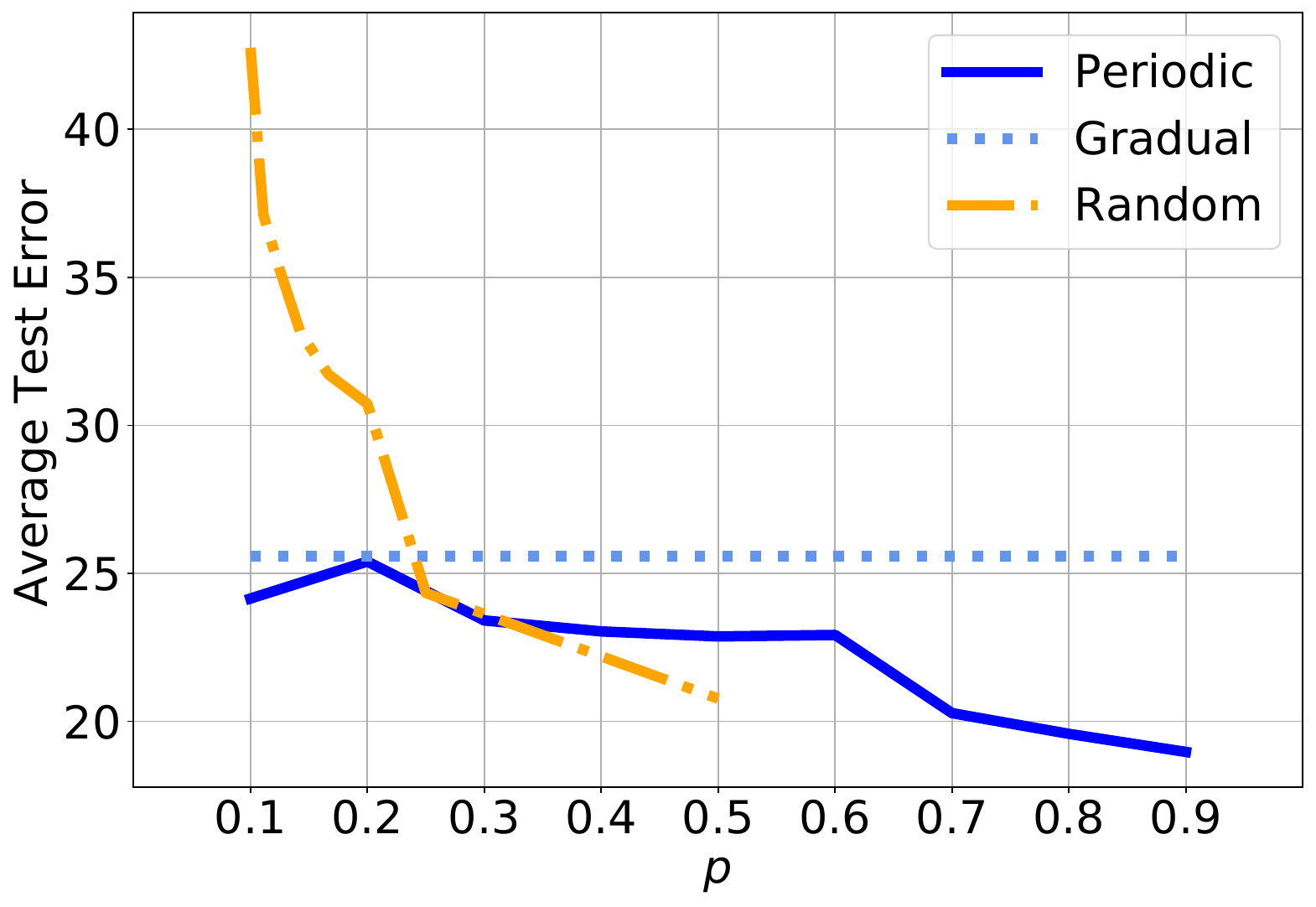}
		\label{fig:cifar_non_iid_D=2}}
 \caption{CIFAR-10: Impact on average test error  with $D=2$.}
\label{fig:cifar_simulation_compare_3_pattern_bound_p_advance_D=2}
\end{figure}



Figs.~\ref{fig:circle_simulation_compare_3_pattern_bound_p_advance_D=4}
--\ref{fig:cifar_simulation_compare_3_pattern_bound_p_advance_D=2}  present the changes in average test error over the transition probability $p$ for three patterns. In time slot \(t\), we denote \(E_{\text{test}}^{t-1,t-1}\) as the test error for \(\mathbf{w}_{t-1}\) on \(\mathcal{S}_{t-1}\) at Stage I, and \(E_{\text{test}}^{t-1,t}\) as the test error for \(\mathbf{w}_{t-1}\) on \(\mathcal{S}_t\). The average test error over \(T\) time slots is defined as \(\frac{1}{2T} \sum_{t=1}^T \left( E_{\text{test}}^{t-1, t-1} + E_{\text{test}}^{t-1, t} \right) \).


We consider 20 time slots for each experimental run. We present the results for independent and identically distributed (IID) data and non-IID data. For non-IID results, we use dirichlet distribution to divide datasets\cite{DBLP:journals/corr/abs-1909-06335}. Figs.~\ref{fig:circle_simulation_compare_3_pattern_bound_p_advance_D=4}--\ref{fig:cifar_simulation_compare_3_pattern_bound_p_advance_D=2} verify our theoretical results (see Fig.~\ref{fig:compare_3_pattern_bound_p}) on how FL system performance changes with $p$. Recall that a smaller $p$ implies a more severe concept drift phenomenon under random and periodic patterns. In contrast, changing $p$ does not affect the degree of  concept drift under gradual pattern. 
\begin{observation}[Threshold: Random and Periodic]\label{ob:threshold}
 As proven in Propositions~\ref{proposition:Threshold} and~\ref{proposition:The Effects of Parameters on the Threshold}, there exists a threshold  $p_{th} \leq 0.5$, such that $B_{r}(p)>B_{p}(p)$ for $p<p_{th}$ (see Figs.~\ref{fig:circle_simulation_compare_3_pattern_bound_p_advance_D=4},~\ref{fig:cifar_simulation_compare_3_pattern_bound_p_advance_D=5}, and~\ref{fig:cifar_simulation_compare_3_pattern_bound_p_advance_D=2}).\footnote{In MNIST dataset, there are at most four concepts. Thus, for random pattern, the transition probability $p$ cannot be lower than $1/4=0.25$ in experiments. In Fig.~\ref{fig:mnist_simulation_compare_3_pattern_bound_p_advance_D=4}, based on the trend of the random pattern, there might exists a threshold $p_{th}<0.25$.} As $D$ decreases, the threshold $p_{th}$ increases (see Figs.~\ref{fig:cifar_simulation_compare_3_pattern_bound_p_advance_D=5} and~\ref{fig:cifar_simulation_compare_3_pattern_bound_p_advance_D=2}).
\end{observation}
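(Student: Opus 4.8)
The statement splits into two parts: (i) existence of a threshold $p_{th}\le 0.5$ with $B_r(p)>B_p(p)$ for all $p<p_{th}$, and (ii) its monotone dependence on $D$ — $p_{th}$ increases as $D$ decreases. Part (i) is weaker than Proposition~\ref{proposition:Threshold}, which already gives the full chain $B_r(p)>B_p(p)>B_g(p)$ on $[0,p_{th})$ with $p_{th}\le 0.5$, so I would simply invoke it. Part (ii) is a genuinely new sensitivity claim — note that Proposition~\ref{proposition:The Effects of Parameters on the Threshold} treats only $\alpha,K,\tau_2$, not $D$ — and is where the work lies.

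For (ii) the plan is to follow exactly the method behind Proposition~\ref{proposition:The Effects of Parameters on the Threshold}: take the analytical expression for $p_{th}$ that the proof of Proposition~\ref{proposition:Threshold} produces and examine its partial derivative in $D$. Concretely I would argue from the closed forms in Corollary~\ref{corollary: case study}. The structural facts that make this tractable are that $B_r$ does not depend on $p$, while $B_p(\cdot;D)$ and $B_g(\cdot;D)$ are quadratics in $p$ whose coefficients are (at most linear) polynomials in $D$ over a common factor $1/(KD)$. Writing $\Phi_1(p;D)=B_r(D)-B_p(p;D)$ and $\Phi_2(p;D)=B_p(p;D)-B_g(p;D)$, the threshold is the first zero (as $p$ grows from $0$) of whichever gap is binding — call it $\Phi$ — so $p_{th}(D)$ solves $\Phi(p_{th}(D);D)=0$ with $\partial\Phi/\partial p<0$ at that point (the gap is positive just before, non-positive just after). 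The implicit function theorem then gives $\dfrac{dp_{th}}{dD}=-\dfrac{\partial\Phi/\partial D}{\partial\Phi/\partial p}$, whose sign is that of $-\,\partial\Phi/\partial D$.

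It then remains to show $\partial\Phi/\partial D>0$ on the admissible range, which forces $dp_{th}/dD<0$, i.e., $p_{th}$ increasing as $D$ decreases. Here I would use that $B_r(D)=\frac{r\sqrt{\Delta^T\Sigma^{-1}\Delta}}{K}(\tau_2\alpha-\alpha+1)\big(\tfrac{5}{3}D+\tfrac{1}{3D}-2\big)$ is strictly increasing in $D$ for $D\ge 1$ (derivative $\tfrac{5}{3}-\tfrac{1}{3D^2}>0$), whereas collecting the $D$-linear terms in $B_p(p;D)$ shows $B_p(p;D)=\frac{r\sqrt{\Delta^T\Sigma^{-1}\Delta}}{K}\big(2(\tau_2+1-\alpha)(1-p)+O(1/D)\big)$, so it stays bounded as $D\to\infty$; an analogous expansion holds for $B_g$. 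Hence the $D$-growth of $B_r$ dominates that of $B_p$ (and $B_g$), making $\partial\Phi/\partial D>0$. Throughout, the argument is understood over the range of $D$ for which $p_{th}\le 0.5$ still holds (so that it is consistent with part (i)), and for $p\ge 1/D$, since in the random pattern the transition probability cannot fall below $1/D$ (cf. the footnote to the Observation).

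The step I expect to be the main obstacle is identifying which of $\Phi_1,\Phi_2$ is binding at $p_{th}$ and confirming it stays binding as $D$ varies: the closed form of $p_{th}$ — and therefore its $D$-derivative — changes form when the binding constraint switches, so one must compare the two parameter-dependent quadratics carefully, and then sign $\partial\Phi/\partial D$ uniformly in $(\alpha,K,\tau_2)$ over the relevant $D$. This ultimately reduces to bounding a small number of explicit quadratic expressions, which is routine but delicate near small $D$.
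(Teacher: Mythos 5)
You are treating this Observation as a theorem, but in the paper it is an empirical claim: the paper offers no proof of it. The first sentence (existence of $p_{th}\le 0.5$ with $B_r(p)>B_p(p)$ for $p<p_{th}$) is simply a weakening of Proposition~1, as you correctly note, and the second sentence (``as $D$ decreases, $p_{th}$ increases'') is supported in the paper \emph{only} by comparing the experimental curves for $D=5$ and $D=2$ on CIFAR-10; no analytical derivative of $p_{th}$ in $D$ is ever computed, and Proposition~2 deliberately omits $D$ from its list of parameters. So your part~(ii) is an attempt to prove something the paper merely observes, which is legitimate ambition but should be flagged as such rather than presented as reconstructing an existing argument.

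As an analytical argument, part~(ii) has a concrete flaw at its foundation: you assert that ``$B_r$ does not depend on $p$,'' but this contradicts the paper's own proof of Proposition~1, which describes $B_r(p)$ as ``a function composed of a linear term $p$ and a reciprocal term $1/p$'' and uses $B_r'(p)<B_p'(p)<0$. The reason is that in the theoretical random pattern the transition probability is tied to the state count ($p=1/D$), so the $D$ appearing in the random-pattern bound of Corollary~1 is effectively $1/p$ once the three bounds are viewed as functions of a common $p$. This entanglement means your $\partial\Phi/\partial D$ at fixed $p$ is not well defined without first deciding how $p$ and $D$ are to be varied independently in the random pattern, and your key dominance argument --- that $B_r$ grows like $\tfrac{5}{3}D$ while $B_p$ stays bounded, hence $\partial\Phi/\partial D>0$ --- collapses under the paper's parametrization, where that linear-in-$D$ term is really a $1/p$ term carrying no independent $D$-dependence. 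Beyond this, the computation is only a plan: you yourself leave open which gap ($\Phi_1$ or $\Phi_2$) is binding, and for this Observation only the $B_r$-versus-$B_p$ intersection $p_{pr}$ matters (the threshold here is $\min\{p_{pr},0.5\}$, not the three-way threshold of Proposition~1), so $\Phi_2$ should not enter at all. The honest status of the second claim is therefore: consistent with the figures, plausible from the closed forms, but not established either by the paper or by your sketch.
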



\begin{observation}[Performance Degradation: Random and Periodic]\label{observation:The Difference Among Patterns} In Figs.~\ref{fig:circle_simulation_compare_3_pattern_bound_p_advance_D=4}--\ref{fig:cifar_simulation_compare_3_pattern_bound_p_advance_D=2}, when $p$ is small (e.g., $p<1/D$), the FL performance of the random pattern suffers from more severe degradation than the periodic pattern; otherwise, the situation reverses. 
\end{observation}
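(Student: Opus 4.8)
The plan is to treat Observation~\ref{observation:The Difference Among Patterns} as the empirical shadow of the analytical bound comparison: first establish, from the closed forms in Corollary~\ref{corollary: case study}, that the bounds $B_p(\cdot)$ and $B_r(\cdot)$ cross exactly once near $p=1/D$ with the stated ordering on each side, and then note that Figs.~\ref{fig:circle_simulation_compare_3_pattern_bound_p_advance_D=4}--\ref{fig:cifar_simulation_compare_3_pattern_bound_p_advance_D=2} exhibit the same crossover for the measured average test error.

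First I would show $B_p(\cdot)$ is convex and nonincreasing on $[0,1]$, while $B_r(\cdot)$ is constant in $p$. From Eq.~\eqref{eq: pattern 1}, the leading coefficient of the quadratic $B_p(p)$ is $\tfrac{r\sqrt{\Delta^T\Sigma^{-1}\Delta}}{KD}\big((4\alpha+1)\tau_2-\alpha+1\big)\ge 0$ for all $\alpha\in(0,1]$, $\tau_2\in[0,1]$, and the abscissa of its vertex exceeds $1$ because its numerator minus its denominator simplifies to $2(D-1)(\tau_2+1-\alpha)\ge 0$; hence $B_p(\cdot)$ is nonincreasing on $[0,1]$. Eq.~\eqref{eq: pattern 3} shows $B_r(p)\equiv B_r$. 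Therefore $q(p)\triangleq B_r-B_p(p)$ is nondecreasing on $[0,1]$, so it changes sign at most once; combined with Proposition~\ref{proposition:Threshold}, which supplies $q(p)>0$ for $p<p_{th}$, this yields a single crossing point $p_{th}\in(0,1]$ with $B_r>B_p(p)$ (random more degraded) for $p<p_{th}$ and $B_p(p)>B_r$ (periodic more degraded) for $p>p_{th}$ --- exactly the dichotomy in the statement, with $p_{th}$ playing the role of the informal threshold ``$1/D$''.

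It then remains to pin $p_{th}$ to $1/D$. I would substitute $p=1/D$ into Eq.~\eqref{eq: pattern 1}, subtract Eq.~\eqref{eq: pattern 3}, factor out $\tfrac{r\sqrt{\Delta^T\Sigma^{-1}\Delta}}{KD}$, and determine the sign of the remaining polynomial in $(D,\alpha,\tau_2)$ by separating its $\tau_2$-linear and $\tau_2$-free parts and bounding each using $D\ge 2$, $\alpha\in(0,1]$, $\tau_2\in[0,1]$; this decides whether $p_{th}$ exceeds or falls below $1/D$ and hence certifies that $p<1/D$ is the operative regime for the random pattern to dominate, the complementary branch following automatically from the single-crossing property. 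I expect this sign check to be the main obstacle: the expressions in Corollary~\ref{corollary: case study} are bulky and the comparison is genuinely tight, since the periodic chain accumulates only a few small adjacent-state KL divergences while the random chain averages the much larger all-pairs divergences, so one must also separate the small-$D$ regime --- where the crossover really lies near $1/D$, matching the figures --- from the large-$D$ regime, where $B_r$ dominates $B_p$ over essentially all of $[0,1]$, $p_{th}$ degenerates toward $1$, and the ``otherwise'' branch becomes vacuous (consistent with Observation~\ref{ob:threshold}, that $p_{th}$ increases as $D$ decreases). Finally, the experimental half of the claim --- that the curves of measured test error, not merely the bounds, cross near $p=1/D$ --- cannot be proved and is instead read off from Figs.~\ref{fig:circle_simulation_compare_3_pattern_bound_p_advance_D=4}--\ref{fig:cifar_simulation_compare_3_pattern_bound_p_advance_D=2}.
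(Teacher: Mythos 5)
Your overall framing is right and matches what the paper actually does: Observation~\ref{observation:The Difference Among Patterns} is an empirical statement about the measured test-error curves, and the paper ``proves'' it only by inspection of Figs.~\ref{fig:circle_simulation_compare_3_pattern_bound_p_advance_D=4}--\ref{fig:cifar_simulation_compare_3_pattern_bound_p_advance_D=2} together with the remark that the same crossover appears in the analytical plot of Fig.~\ref{fig:compare_3_pattern_bound_p}. Your single-crossing argument (leading coefficient of $B_p$ nonnegative, vertex abscissa beyond $1$ via the identity $2(D-1)(\tau_2+1-\alpha)\ge 0$, hence $B_p$ nonincreasing; sign at small $p$ supplied by Proposition~\ref{proposition:Threshold}) is correct and is essentially a cleaner rerun of the existence-and-uniqueness part of the paper's own proof of Proposition~\ref{proposition:Threshold}. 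You also correctly flag that the experimental half of the claim is read off the figures, not proved.

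The genuine gap is the final step, where you propose to ``pin $p_{th}$ to $1/D$'' by a sign check of $B_p(1/D)-B_r$. That step would fail: the paper never asserts $p_{th}=1/D$ (the observation says ``e.g., $p<1/D$'', an illustrative regime), and Proposition~\ref{proposition:The Effects of Parameters on the Threshold} explicitly states that $p_{th}$ moves with $\alpha$ and $\tau_2$, so the crossing point cannot coincide with $1/D$ uniformly over admissible parameters; your polynomial sign check at $p=1/D$ would come out with different signs for different $(\alpha,\tau_2)$, and no amount of case-splitting on $D$ rescues an exact identification. A secondary caveat: you take Eq.~\eqref{eq: pattern 3} at face value and treat $B_r$ as constant in $p$, which is what the displayed corollary says, but the paper's own proof of Proposition~\ref{proposition:Threshold} treats $B_r(p)$ as containing a linear term in $p$ and a reciprocal term $1/p$ on $(0,0.5]$ (and Fig.~\ref{fig:compare_3_pattern_bound_p} plots it as varying with $p$), so the constancy premise is consistent with the main text but not with the appendix; the single-crossing conclusion survives either way because the appendix also gives $B_r'(p)<B_p'(p)$ on $(0,0.5]$, but you should not rely on constancy as such.
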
 
Observations~\ref{ob:threshold} and~\ref{observation:The Difference Among Patterns}  also hold in Fig.~\ref{fig:compare_3_pattern_bound_p}. These results suggest that when the concept drift phenomenon is significant (e.g., $p<1/D$), random pattern is more harmful to the system than periodic pattern. Perhaps counter-intuitive, when the concept drift phenomenon is relatively minor (e.g., $p>1/D$), the periodic pattern reduces the system performance at a higher degree than random pattern, requiring effective algorithms for retraining and adaptation.   

\begin{observation}[Threshold: Gradual and Periodic Patterns]\label{ob:gvsp}
   In contrast to the analytical results, there may not exist a threshold $p_{th}$ such that $B_{p}(p)>B_{g}(p)$ for $p<p_{th}$ (see~\ref{fig:cifar_simulation_compare_3_pattern_bound_p_advance_D=5} and Fig.~\ref{fig:cifar_simulation_compare_3_pattern_bound_p_advance_D=2}).
\end{observation}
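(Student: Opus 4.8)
The plan is to establish Observation~\ref{ob:gvsp} in two complementary ways, since it is a statement about the measured average test-error curves $B_p(\cdot)$ and $B_g(\cdot)$ on the testbed rather than about the information-theoretic quantity $\overline{G}$. The direct route is experimental: run the FedAvg-based protocol on CIFAR-10 with $D=5$ and $D=2$ using the periodic and gradual state machines of Fig.~\ref{fig:three pattern ilustration}, sweep the transition probability $p$ over $(0,1)$, and for each run report the average test error $\tfrac{1}{2T}\sum_{t=1}^{T}\left(E_{\text{test}}^{t-1,t-1}+E_{\text{test}}^{t-1,t}\right)$ over $T=20$ time slots. To prove the observation it suffices to exhibit one value of $p$ below the random-versus-periodic threshold $p_{th}$ of Observation~\ref{ob:threshold} at which $B_g(p)\ge B_p(p)$ — equivalently, to show that on $(0,p_{th})$ the gradual and periodic curves cross or never satisfy $B_p(p)>B_g(p)$. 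This directly contradicts the ordering $B_p(p)>B_g(p)$ for all $p<p_{th}$ predicted by Proposition~\ref{proposition:Threshold}, which is precisely the claim.

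The second, explanatory route pins down why that prediction breaks. The ordering in Proposition~\ref{proposition:Threshold} is inherited from Corollary~\ref{corollary: case study}, whose closed forms rest on Definition~\ref{assumption: multivariate Gaussian distribution}, where the concept means satisfy $\mu_i-\mu_j=|i-j|\Delta$ so that $D_{\rm KL}(\pi_i\|\pi_j)$ grows quadratically in the index gap $|i-j|$. The concepts used in the experiments (Dirichlet repartitions for CIFAR-10, $90^{\circ}$ rotations for MNIST) are instead essentially equidistant: there is no sense in which concept $i$ and concept $i+2$ are ``twice as far apart'' as $i$ and $i+1$. I would therefore re-evaluate the simplified bound in Eq.~\eqref{eq:simplied upper bound} under the replacement $D_{\rm KL}(\pi_i\|\pi_j)=c\,\mathbf{1}\{i\neq j\}$ for a constant $c>0$. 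The stationary laws are unchanged — both the periodic chain with self-loops and the symmetric gradual walk on the cycle are doubly stochastic and hence have the uniform stationary distribution — so the only quantities that change are the probabilities that consecutive and one-apart distributions coincide, namely $\Pr[\pi_{cur}=\pi_{pre}]$, $\Pr[\pi_{nxt}=\pi_{cur}]$, $\Pr[\pi_{nxt}=\pi_{pre}]$, equal to $(p,p,p^2)$ for the periodic chain and $(0,0,2p(1-p))$ for the gradual chain. Substituting, the difference $B_g(p)-B_p(p)$ factors as $p$ times a bracket that tends to $\tau_1(1-\alpha)+\tau_2(3\alpha-2)$ as $p\to 0^{+}$. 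Whenever this limit is nonnegative — in particular for large $\alpha$ or small $\tau_2$, which are the regimes of Figs.~\ref{fig:cifar_simulation_compare_3_pattern_bound_p_advance_D=5}--\ref{fig:cifar_simulation_compare_3_pattern_bound_p_advance_D=2} — one gets $B_g(p)\ge B_p(p)$ for all sufficiently small $p$, so no threshold with the ordering of Proposition~\ref{proposition:Threshold} can exist. This both proves and explains the observation.

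The main obstacle is conceptual rather than computational. Because the curves in Observation~\ref{ob:gvsp} are realized errors of a deep model on finite real data, not the analytic bound $\overline{G}$, no dataset-agnostic theorem forces a fixed periodic-versus-gradual ordering: the comparison of Proposition~\ref{proposition:Threshold} is between \emph{upper bounds} derived under the linear-mean Gaussian model of Definition~\ref{assumption: multivariate Gaussian distribution}, and it need not survive either a change of that model or the passage from bound to achieved error. Hence the observation can be demonstrated only on the chosen datasets and rationalized through the equidistant-concept recomputation above; making the rationalization fully rigorous would require a validated model of the inter-concept KL divergence for each dataset, which lies outside the steady-state framework of this paper. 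The remaining steps — the collision-probability computation and the sign check of the bracket — mimic the derivation of Corollary~\ref{corollary: case study} and are routine.
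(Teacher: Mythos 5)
Your proposal is sound, and the experimental half of it is exactly what the paper does: Observation~\ref{ob:gvsp} is an empirical statement, established by exhibiting the measured curves in Fig.~\ref{fig:cifar_simulation_compare_3_pattern_bound_p_advance_D=5} and Fig.~\ref{fig:cifar_simulation_compare_3_pattern_bound_p_advance_D=2} and noting that the predicted ordering $B_p(p)>B_g(p)$ on $(0,p_{th})$ does not materialize. Where you genuinely diverge from the paper is in the \emph{explanation} of why Proposition~\ref{proposition:Threshold} fails to transfer. The paper's stated reason is asymmetry of the divergence in real data, $D_{\rm KL}(\pi_i\|\pi_j)\neq D_{\rm KL}(\pi_j\|\pi_i)$: because the gradual chain transitions in both directions around the cycle while the periodic chain only moves forward, both orientations of the KL divergence enter the gradual bound, which (the paper argues) makes that bound looser than the symmetric Gaussian calculation of Definition~\ref{assumption: multivariate Gaussian distribution} suggests. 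Your reason is different and arguably more quantitative: you keep symmetry but drop the index-linear mean model $\mu_i-\mu_j=|i-j|\Delta$, replacing it with equidistant concepts $D_{\rm KL}(\pi_i\|\pi_j)=c\,\mathbf{1}\{i\neq j\}$, and you actually recompute the collision probabilities $(p,p,p^2)$ versus $(0,0,2p(1-p))$ and the resulting sign of $B_g(p)-B_p(p)$ as $p\to 0^{+}$; those computations are correct (both chains are doubly stochastic, so the stationary law stays uniform, and the bracket limit $\tau_1(1-\alpha)+\tau_2(3\alpha-2)$ checks out). Each explanation buys something the other does not: the paper's asymmetry argument applies even when the quadratic-in-$|i-j|$ structure is retained, while your equidistance argument shows the ordering can flip even under perfectly symmetric divergences, which is the more relevant failure mode for Dirichlet-partitioned CIFAR-10 where concepts have no natural linear ordering. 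One caveat: at the parameters of Fig.~\ref{fig:compare_3_pattern_bound_p_1} ($\alpha=0.5$, $\tau_1=\tau_2=0.5$) your bracket limit is exactly zero, so the sign there is decided by the next-order term; your conclusion should be stated as holding in the regimes where the limit is strictly positive, as you partly do. Both rationalizations are heuristic rather than rigorous, and you correctly identify why no rigorous proof is possible: the observation concerns achieved test error of a deep network, not the bound $\overline{G}$, so no ordering theorem survives the passage from bound to realization.
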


Observation~\ref{ob:gvsp} holds because in experiments, the distance between data distributions might be asymmetry, i.e., $D_{\rm KL}(\pi_{i}\|\pi_{j})\neq D_{\rm KL}(\pi_{j}\|\pi_{i})$, which makes the upper bound for gradual pattern looser. Note that we do not compare the performance degradation between gradual and periodic patterns, because $p$ does not reflect the degree of concept drift under gradual  pattern. 

\begin{table}[t]\caption{Parameter settings}\label{table:para}
\resizebox{\linewidth}{!}{
\begin{tabular}{ll|ll}
\toprule
\centering
Param. & Value & Param. & Value\\
\midrule
$r$ &  0.5 &$\Delta^T\Sigma^{-1}\Delta $ & 1\\
$R$ & 150 &DL speed &78.26 Mbps~\cite{speed}\\
$E$ & 5 &UL speed & 42.06 Mbps~\cite{speed}\\
$T$& 600 seconds&Invest. cost& \$0.22 per GHz per hour~\cite{cloud}\\
$J_k$& 0.01 gigacycles&DL energy&3 joules per Mbit~\cite{DBLP:conf/imc/BalasubramanianBV09}\\
$N_k$& 600 samples&UL energy&3 joules per Mbit~\cite{DBLP:conf/imc/BalasubramanianBV09}\\
    Model size &  0.16 Mbits &Elec. rate & \$0.174 per kWh~\cite{electricity}\\
\bottomrule
\end{tabular}
}
\end{table}

\subsection{Algorithm Comparison}\label{sec:experiment 3}
This set of experiments aims to demonstrate the performance of our proposed Algorithm~\ref{alo:our method} when compared with baselines in mitigating performance degradation due to concept drift.
\subsubsection{Experimental Setup}
To fairly compare different methods, we use a slightly different setting from Section~\ref{sec:experiment 1}.
We use the Fashion-MNIST and CIFAR-10 datasets to test our method. For the Fashion-MNIST experiments, we use a simple convolutional neural network with four 3×3 convolutional layers (followed by an average pooling layer). For CIFAR-10 experiments, we use a ResNet18 model. 
In the concept drift scenarios, we generate 4 different data distributions by rotating the images. We divide a dataset into 4 sub-datasets. The images in each sub-dataset are rotated counter clockwise by $0, 90, 180, 270$ degrees respectively. The data distribution will switch randomly to one of the predefined distributions ($0$, $90$, $180$, or $270$ degrees) every 100 training rounds. The experiment consists of a total of 1200 training rounds.

We consider three FL Concept Drift Algorithms as baselines, including \textbf{AdaptiveFedAvg}\cite{DBLP:conf/ijcnn/CanonacoBMR21} and \textbf{FLASH}\cite{DBLP:conf/icml/PanchalCMMSMG23}, and one KL Guided domain adaptation algorithm \textbf{KLDA}\cite{DBLP:conf/iclr/NguyenTGTB22}. We also consider \textbf{ERM} algorithm~\cite{DBLP:conf/nips/Vapnik91} as a baseline without any drift adaptation mechanism. When new data distributions emerge every 100 training rounds, we fine-tune the model using these methods to adapt to the new data. For detailed algorithm parameter settings, please refer to Appendix~F.

\subsubsection{Results}
Table~\ref{table:Test accuracy comparison} shows the results of the Fashion-MNIST and CIFAR-10 experiments for the periodic pattern. 
\begin{observation}[The Average Test Accuracy]\label{observation:The Average Test Accuracy}
     The average test accuracy of the baselines is relatively similar across methods. In contrast, our approach outperforms the baselines significantly.
\end{observation}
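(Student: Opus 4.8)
The plan is to establish this observation empirically through the controlled experiments of Section~\ref{sec:experiment 3}, supported by Table~\ref{table:Test accuracy comparison}, and to explain the numerical gap using Theorem~\ref{theorem:expected generalization bound}. First I would fix the drift protocol (rotating Fashion-MNIST / CIFAR-10 into four concepts that switch randomly every 100 rounds over 1200 rounds), and run each method---ERM, AdaptiveFedAvg, FLASH, KLDA, and Algorithm~\ref{alo:our method}---under identical client partitions, model architectures, and communication schedules. I would then average the Stage-I and Stage-II test accuracies over all time slots, exactly as in the average-test-error definition used earlier, to obtain the numbers reported in Table~\ref{table:Test accuracy comparison}.

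Second, to make the comparison meaningful I would tune each baseline's own hyperparameters (the learning-rate adaptation constants for AdaptiveFedAvg/FLASH, the KL weight for KLDA) on a held-out slot so none is handicapped, and repeat each configuration over several random seeds to confirm the accuracy gap between our method and the best baseline exceeds run-to-run variability. The ``baselines are relatively similar'' part then follows from the structural fact that AdaptiveFedAvg, FLASH, and ERM contain no mechanism that reduces the KL-divergence term in the bound---they only accelerate re-fitting or do nothing---so they land in a comparable accuracy band; KLDA aligns representations but was designed for centralized domain adaptation and does not regularize mutual information, so it does not consistently dominate the others.

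Third, the ``our approach outperforms significantly'' part I would argue by connecting the two regularizers of Algorithm~\ref{alo:our method} directly to the two dominant terms of the bound in Theorem~\ref{theorem:expected generalization bound}: the symmetric KL penalty between current- and previous-concept representations shrinks the $D_{\rm KL}(\pi_{pre}\|\pi_{cur})$-type contributions, while the injected Gaussian noise $\epsilon$ suppresses $I(\mathbf{w}_{cur},Z)$. Since the method attacks precisely the quantities the theory identifies as controlling $\overline{G}$, and since a smaller $\overline{G}$ reflects better generalization on \emph{both} the current concept (Stage~I) and the next unseen concept (Stage~II), the averaged test accuracy should improve across all drift slots simultaneously rather than only during adaptation.

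The main obstacle I anticipate is not mathematical but experimental fairness and robustness: one must ensure the observed gap is genuine rather than an artifact of favorable hyperparameter choices for our method or unfavorable ones for the baselines, and that the mini-batch KL estimators in Eqs.~\eqref{eq:estimate KL1}--\eqref{eq:estimate KL2} are stable enough (in terms of representation dimensionality, batch size $B$, and the weight $\gamma$) that the regularization helps rather than injects harmful noise. Reporting variance across seeds and a sensitivity sweep over $\gamma$ would be the way to close this gap and make the observation convincing.
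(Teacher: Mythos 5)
Your proposal matches the paper's own justification: the observation is supported empirically by Table~\ref{table:Test accuracy comparison} under exactly the drift protocol you describe, and then explained mechanistically by noting that ERM/AdaptiveFedAvg/FLASH only accelerate re-fitting (or do nothing) rather than attacking the KL-divergence and mutual-information terms of Theorem~\ref{theorem:expected generalization bound}, while Algorithm~\ref{alo:our method} targets both. The only minor divergence is in the explanation for KLDA's shortfall --- the paper attributes it to KLDA not exploiting the joint distribution of samples and labels, whereas you attribute it to the absence of mutual-information regularization --- and your additional suggestions (seed variance, hyperparameter sensitivity sweeps) go beyond what the paper actually reports but do not change the argument.
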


The validity of Observation~\ref{observation:The Average Test Accuracy} can be attributed to the following reasons. ERM is a method that fine-tunes the model without employing advanced techniques to address performance degradation caused by concept drift. AdaptiveFedAvg and FLASH are methods designed to address the concept drift problem in FL. They detect concept drift based on the gradient update size and adaptively increase the learning rate to accelerate the model's adaptation to new concepts. However, these two adaptive learning rate methods struggle to mitigate the model's performance degradation. KLDA is a domain adaptation method based on KL divergence for centralized machine learning. 
Although this method also uses the KL divergence between representation distributions to guide training, it does not exploit the joint distribution of samples and labels. As a result, its average test accuracy is lower than that of our method.



We provide further details about the changes in test accuracy on Fashion-MNIST and CIFAR-10 datasets during training for the periodic pattern, as illustrated in Fig.~\ref{fig:example16-big} and Fig.~\ref{fig:example19-big}. Specifically, Fig.~\ref{fig:example16} and Fig.~\ref{fig:example19} illustrate training curves, while Fig.~\ref{fig:example16-1} and Fig.~\ref{fig:example19-1} highlight the moments when the model is fine-tuned and when new data appears (concept drift). To improve the clarity and compactness of Fig.~\ref{fig:example16-1} and Fig.~\ref{fig:example19-1}, ``Fine tuned" is abbreviated as ``FT", and ``Drift" is abbreviated as ``D".
\begin{observation}[Training Curves]\label{observation:Fashion-MNIST traning curve}
    Figs.~\ref{fig:example16-big} and~\ref{fig:example19-big} illustrate that when drift occurs, our method results in a smaller drop in model accuracy, showing that it effectively addresses concept drift in FL.
\end{observation}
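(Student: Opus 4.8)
The plan is to justify Observation~\ref{observation:Fashion-MNIST traning curve} by tracing the empirically measured accuracy drop at each drift point back to the quantities that Algorithm~\ref{alo:our method} is explicitly constructed to control, and then arguing that the baselines fail to control those same quantities. The anchor is Theorem~\ref{theorem:expected generalization bound}: the Stationary Generalization Error is bounded by sums of terms $\psi^{*-1}\!\big(I(\mathbf{w}_{cur},Z)+D_{\rm KL}(\cdot\|\cdot)\big)$, so the degradation incurred when $\Pi_t$ switches to an unseen $\pi_{nxt}$ is governed by (i) the mutual information between the output model and individual training samples and (ii) the KL divergence between the training distributions and $\pi_{nxt}$, measured — after the representation-space reduction of Section~\ref{sec:algorithm design} — on the distributions $\mathcal{P}^{\mathbf{w}_k}(\hat Z)$ of learned representations. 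First I would argue that the two KL-regularization terms in Eq.~\eqref{eq:objective function} drive $\mathcal{P}^{\mathbf{w}_k}_{cur}(\hat Z_{k,cur})$ and $\mathcal{P}^{\mathbf{w}_k}_{pre}(\hat Z_{k,pre})$ toward each other, so that at the end of local training the divergences entering the stage-II terms of Eq.~\eqref{eq:system error bound} are small; since the experimental drift merely cycles among $D=4$ fixed rotation concepts, a representation made approximately invariant across the already-seen rotations is also approximately invariant for the one that recurs, which is precisely the regime in which the bound predicts a small $G_2(\pi_{nxt},\mathbf{w}_{cur})$, i.e.\ a small accuracy drop at the drift point.

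Next I would dispose of the mutual-information term: the injected Gaussian noise $\epsilon\sim\mathcal{N}(\mu,\Sigma)$ in Eq.~\eqref{eq:objective function}, following~\cite{DBLP:conf/nips/XuR17}, upper-bounds $I(\mathbf{w}_{cur},Z)$, and by the concavity of $\psi^{*-1}(\cdot)$ noted after Theorem~\ref{theorem:expected generalization bound} this tightens every summand of Eq.~\eqref{eq:system error bound}. Combining the two effects yields a strictly smaller bound on $\overline{G}$ for Algorithm~\ref{alo:our method} than for plain ERM, whose loss contains neither regularizer. For the comparison against AdaptiveFedAvg and FLASH I would observe that an adaptive learning rate only rescales the optimization trajectory and speeds convergence, leaving $I(\mathbf{w}_{cur},Z)$ and the representation-space KL divergences essentially untouched, so it cannot reduce the quantities that Theorem~\ref{theorem:expected generalization bound} identifies as controlling the drop; against KLDA I would note that it aligns only the \emph{marginal} representation distributions and ignores the label, hence controls a weaker divergence than the joint $D_{\rm KL}$ appearing in the stage-I/stage-II terms of Eq.~\eqref{eq:system error bound}. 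Finally I would aggregate the measured gaps $E_{\text{test}}^{t-1,t-1}-E_{\text{test}}^{t-1,t}$ over the $1200$ rounds of the periodic experiment and exhibit that they are uniformly smaller for our method, matching the ordering predicted above and plotted in Figs.~\ref{fig:example16-big} and~\ref{fig:example19-big}.

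The main obstacle I anticipate is that Observation~\ref{observation:Fashion-MNIST traning curve} concerns finite-sample test accuracy on a specific network, whereas Theorem~\ref{theorem:expected generalization bound} bounds a population-level steady-state generalization gap; making the link rigorous would require showing (a) that the mini-batch estimators in Eqs.~\eqref{eq:estimate KL1}--\eqref{eq:estimate KL2} actually decrease the \emph{population} representation-space KL divergence rather than merely its empirical surrogate, and (b) that the empirical-risk term is controlled separately, since in principle a regularizer could shrink the generalization gap by damaging the fit. For this reason I would present the argument as a heuristic justification of the mechanism, with the training curves of Figs.~\ref{fig:example16-big} and~\ref{fig:example19-big} as the primary evidence and Theorem~\ref{theorem:expected generalization bound} supplying the qualitative explanation, rather than claiming a formal theorem.
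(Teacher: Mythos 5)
Your proposal is correct and ends up where the paper does: Observation~\ref{observation:Fashion-MNIST traning curve} is an empirical claim whose only real evidence is the training curves in Figs.~\ref{fig:example16-big} and~\ref{fig:example19-big}, with Theorem~\ref{theorem:expected generalization bound} and the regularizer design of Eq.~\eqref{eq:objective function} supplying only a qualitative mechanism (the paper gives exactly this rationale in its discussion of Observation~\ref{observation:The Average Test Accuracy}, including the same critiques of ERM, the adaptive-learning-rate baselines, and KLDA's marginal-only alignment). Your closing caveat---that the population-level bound cannot be formally converted into a statement about finite-sample test-accuracy drops---is the right stance, and the paper likewise makes no such formal claim.
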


\begin{table}[t]
\centering
\caption{Average test accuracy for Fashion-MNIST and CIFAR-10 (periodic pattern).}
\begin{tabular}{lcc}
\toprule
Algorithms   & Fashion-MNIST&CIFAR-10\\ 
\midrule
ERM    &53.02\% & 66.93\% \\ 
AdaptiveFedAvg   &49.26\% & 64.91\% \\ 
FLASH   &51.72\% & 59.38\%\\ 
KLDA   & 51.84\% & 65.78\% \\ 
Ours   & \textbf{61.23}\% & \textbf{72.58}\%\\ 
\bottomrule
\end{tabular}
\label{table:Test accuracy comparison}
\end{table}

\begin{figure}[t]
	\centering
	\subfloat[]{
		\includegraphics[width=1\linewidth]{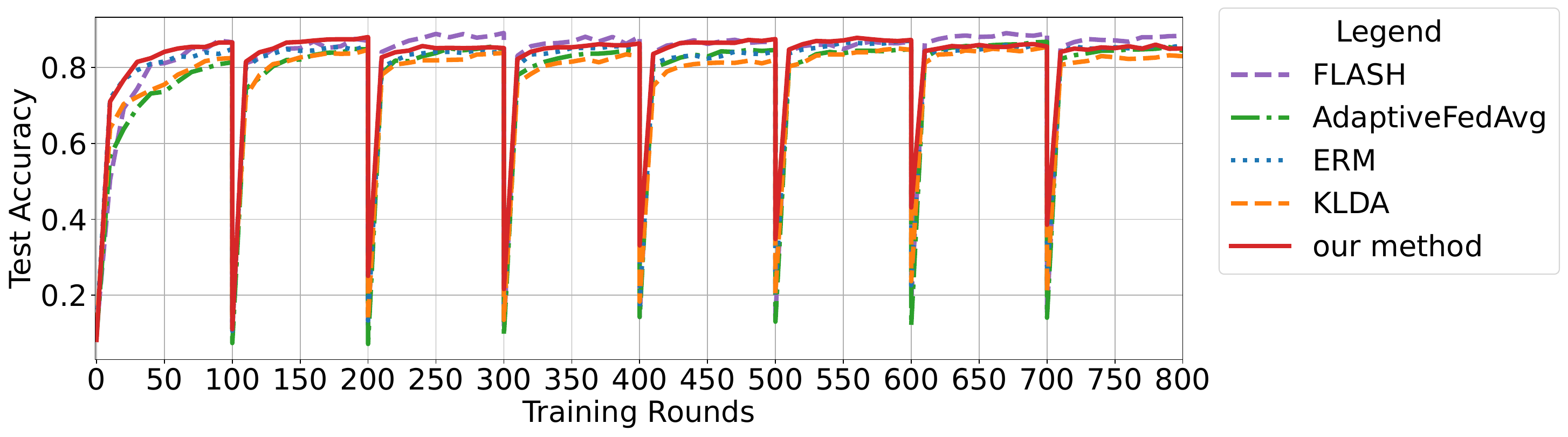}
		\label{fig:example16}}\\
        \centering
	\subfloat[]{
		\includegraphics[width=1\linewidth]{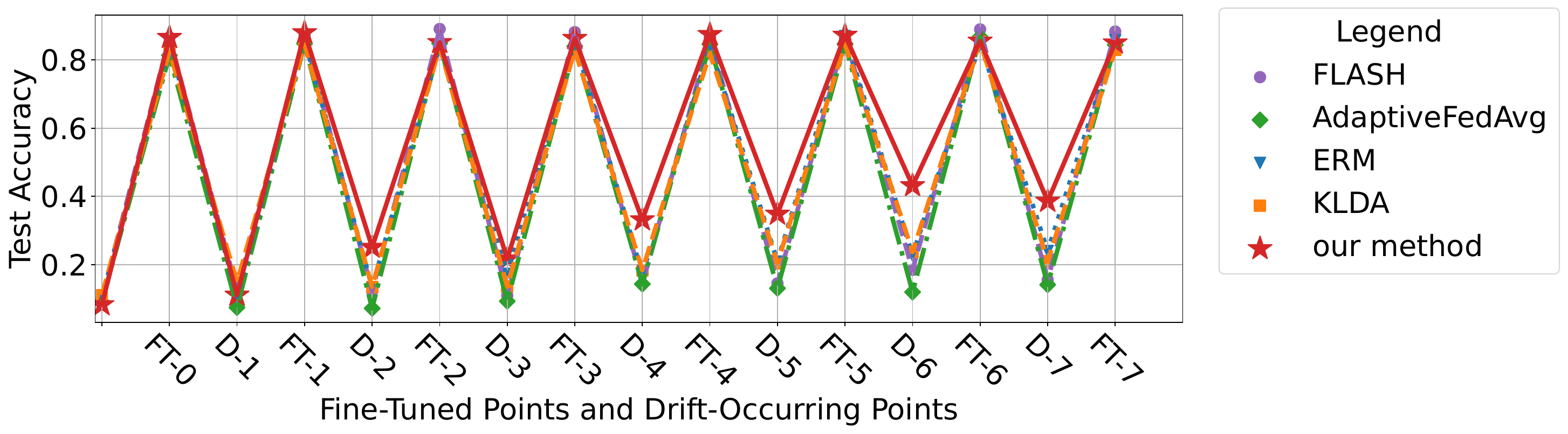}
		\label{fig:example16-1}}
 \caption{Accuracy curves for Fashion-MNIST dataset (periodic pattern): (a)~Training curve; (b) test accuracy of retrain and drift.}
 \label{fig:example16-big}

\end{figure}

\begin{figure}[t]
	\centering
	\subfloat[]{
		\includegraphics[width=1\linewidth]{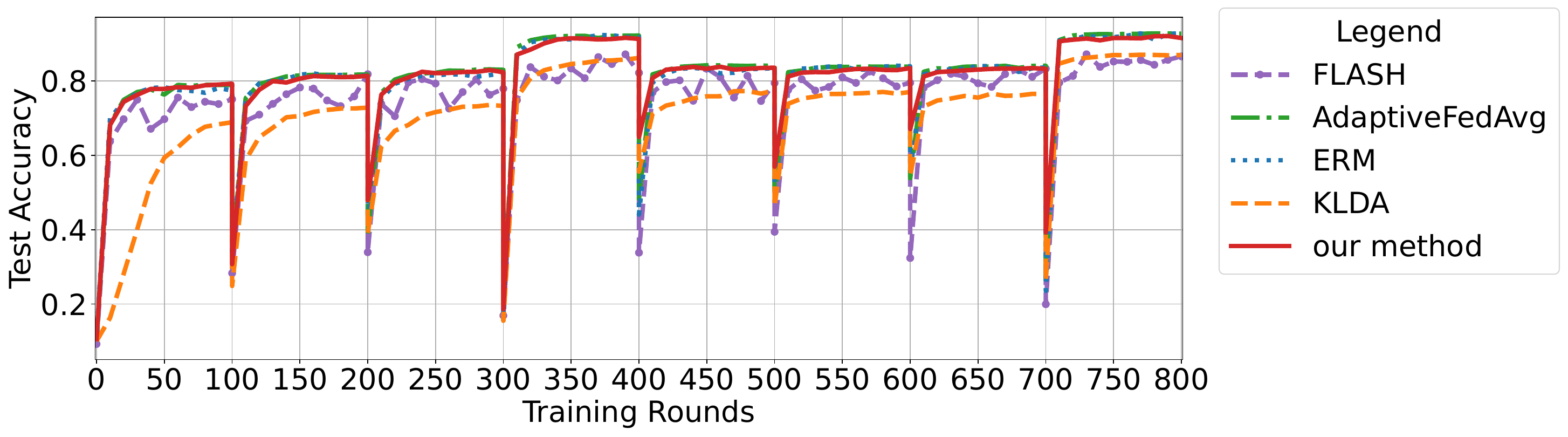}
		\label{fig:example19}}\\
        \centering
	\subfloat[]{
		\includegraphics[width=1\linewidth]{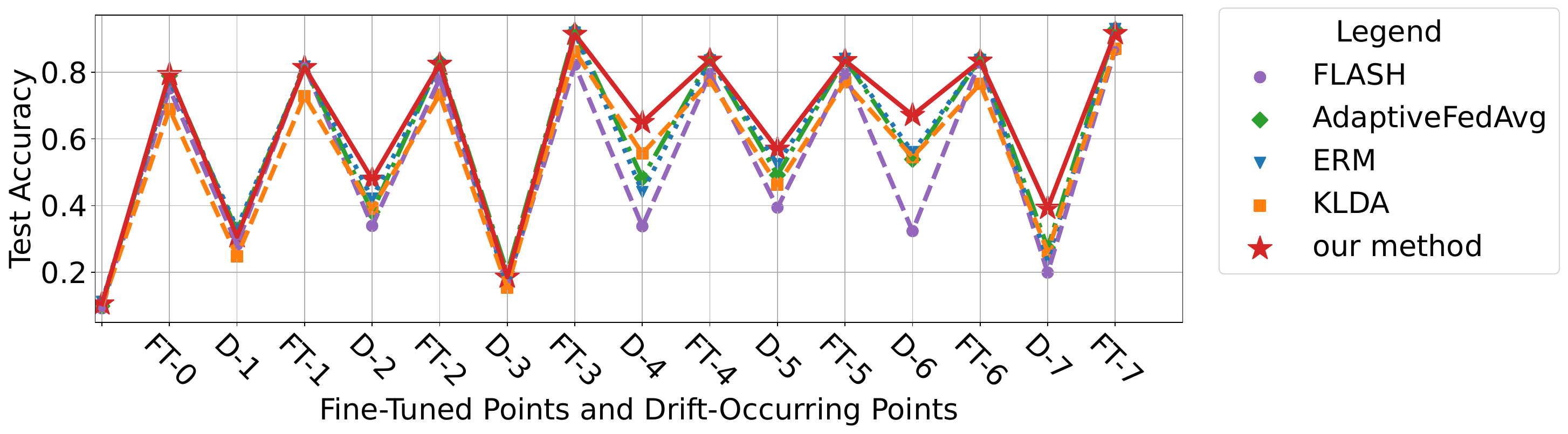}
		\label{fig:example19-1}}
 \caption{Accuracy curves for CIFAR-10 dataset (periodic pattern): (a) Training curve; (b) test accuracy of retrain and drift.}
 \label{fig:example19-big}
\end{figure}


\subsection{Tradeoff between Performance and Cost}\label{sec:experiment 2}
\begin{figure}[t]
	\centering
	\subfloat[$K$]{
		\includegraphics[width=0.47\linewidth]{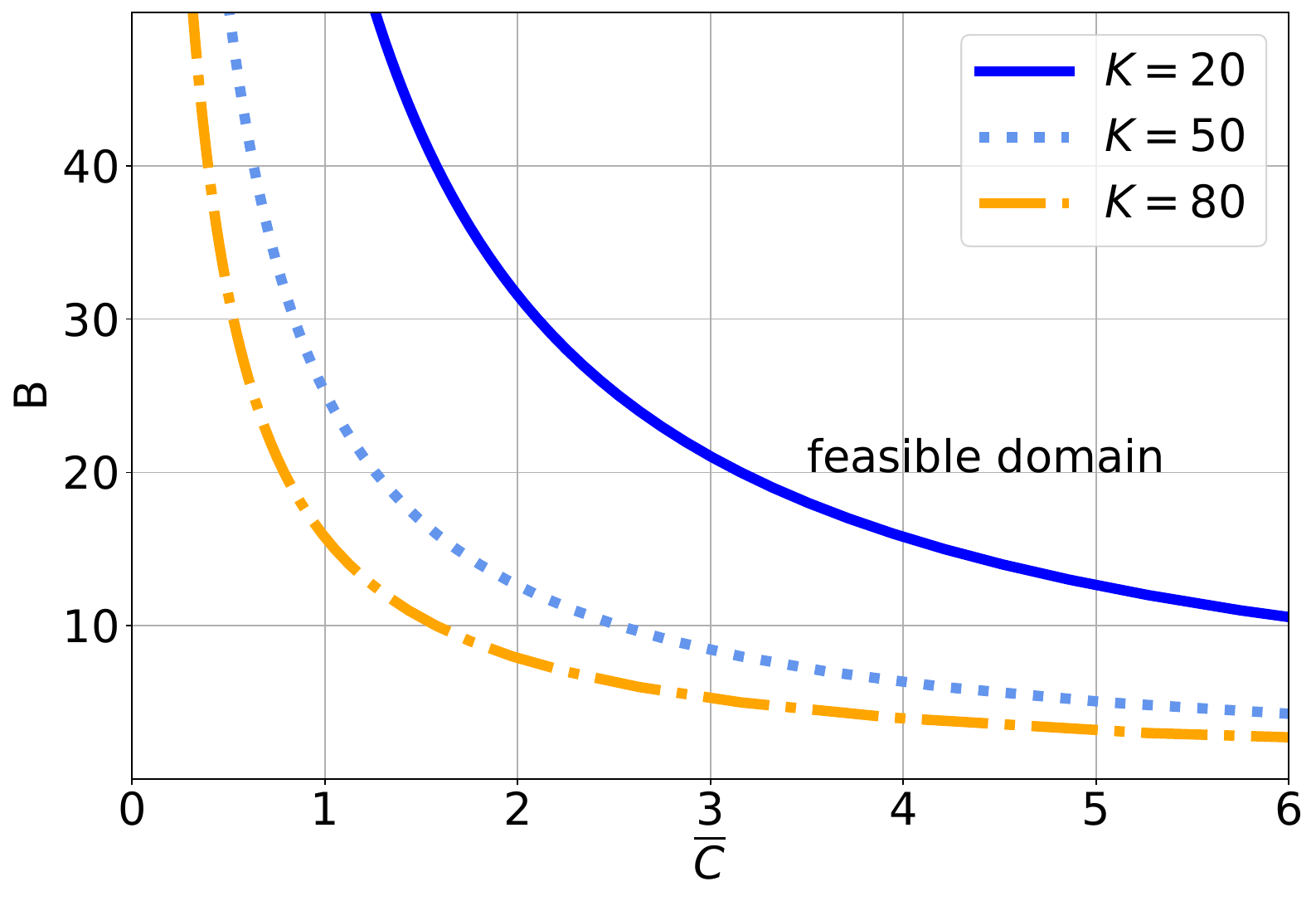}
		\label{fig:tradeoff_B_C_periodic_K}}
  	\subfloat[$D$]{  
		\includegraphics[width=0.47\linewidth]{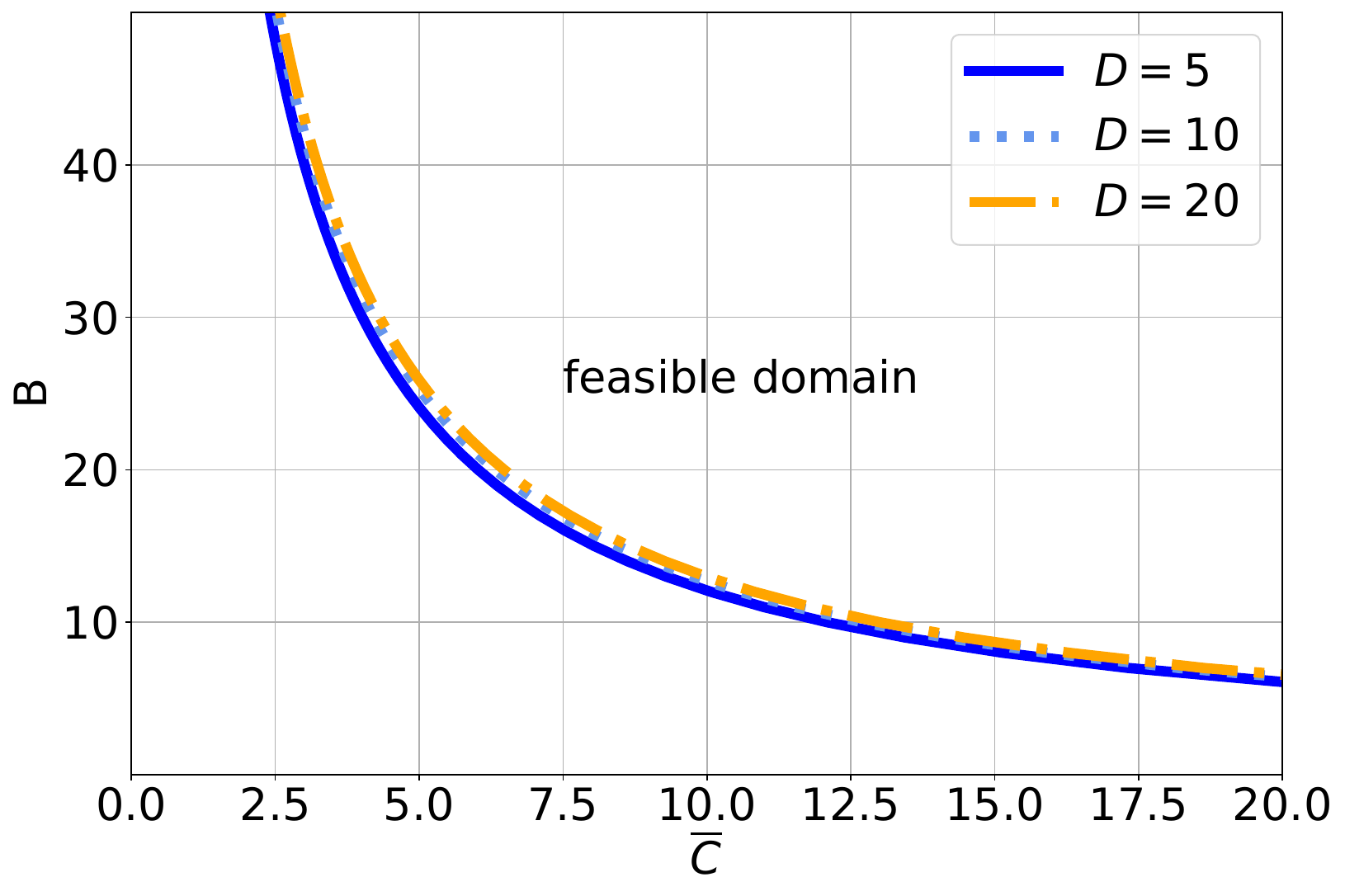}
		\label{fig:tradeoff_B_C_periodic_D}}
 \caption{Pareto frontiers for periodic pattern.}
 \label{fig:tradeoff_B_C_periodic}
\end{figure}

\begin{figure}[t]
	\centering
	\subfloat[$K$]{
		\includegraphics[width=0.47\linewidth]{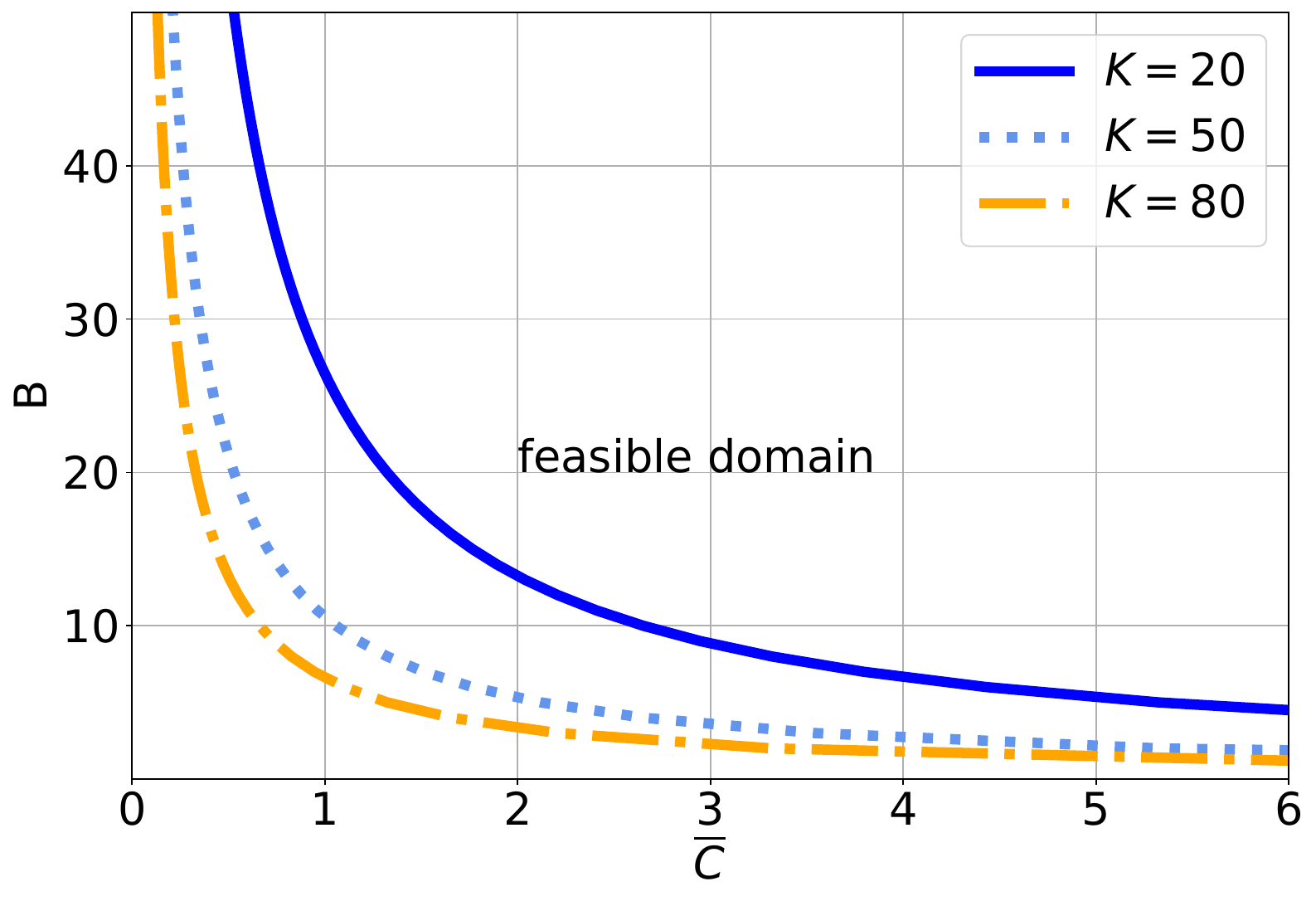}
		\label{tradeoff_B_C_gradual_K}}
  	\subfloat[$D$]{
		\includegraphics[width=0.47\linewidth]{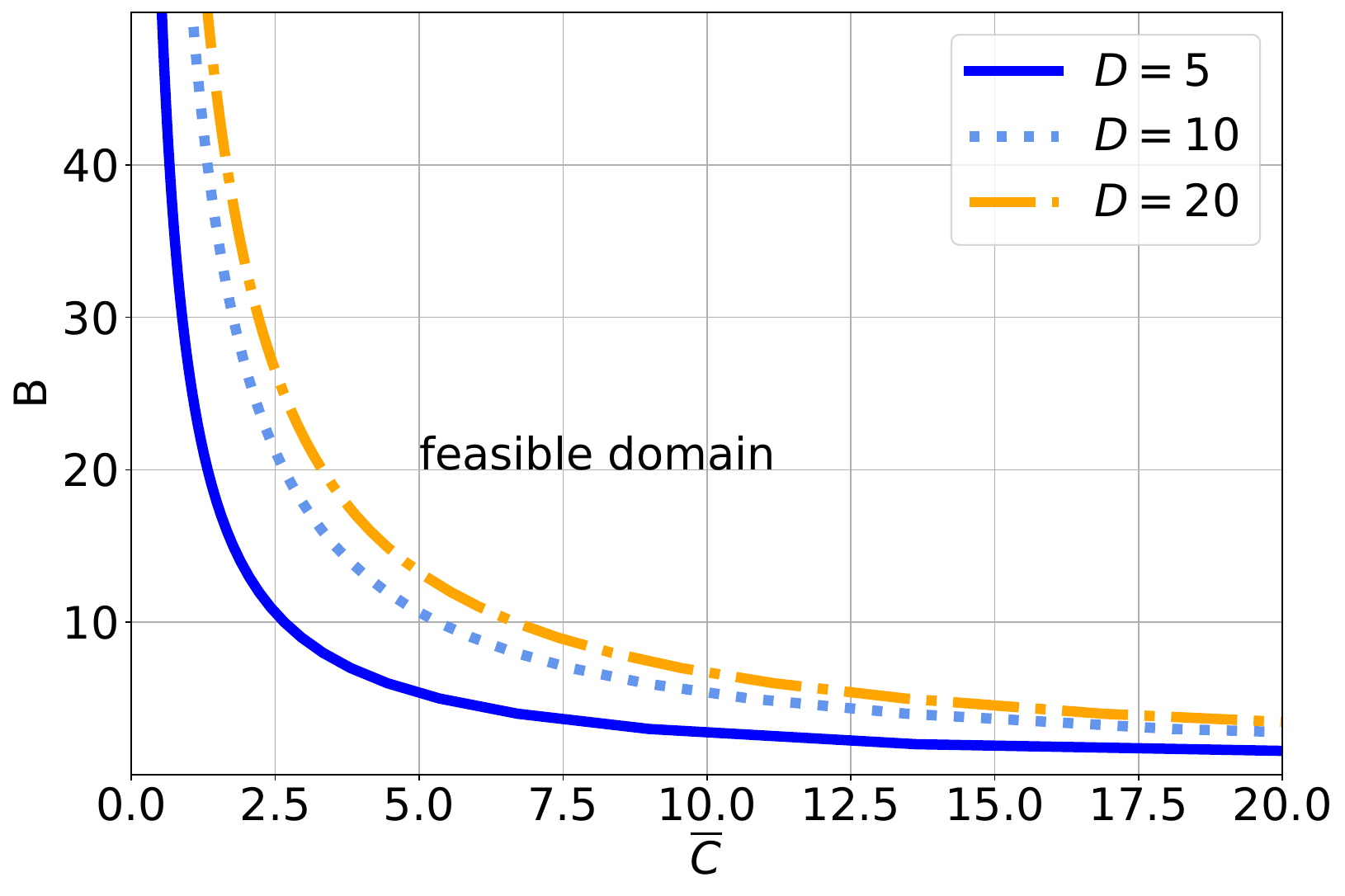}
		\label{tradeoff_B_C_gradual_D}}
 \caption{Pareto frontiers for gradual pattern.}
 \label{fig:tradeoff_B_C_gradual}
\end{figure}

\begin{figure}[t]
	\centering
	\subfloat[$K$]{
		\includegraphics[width=0.47\linewidth]{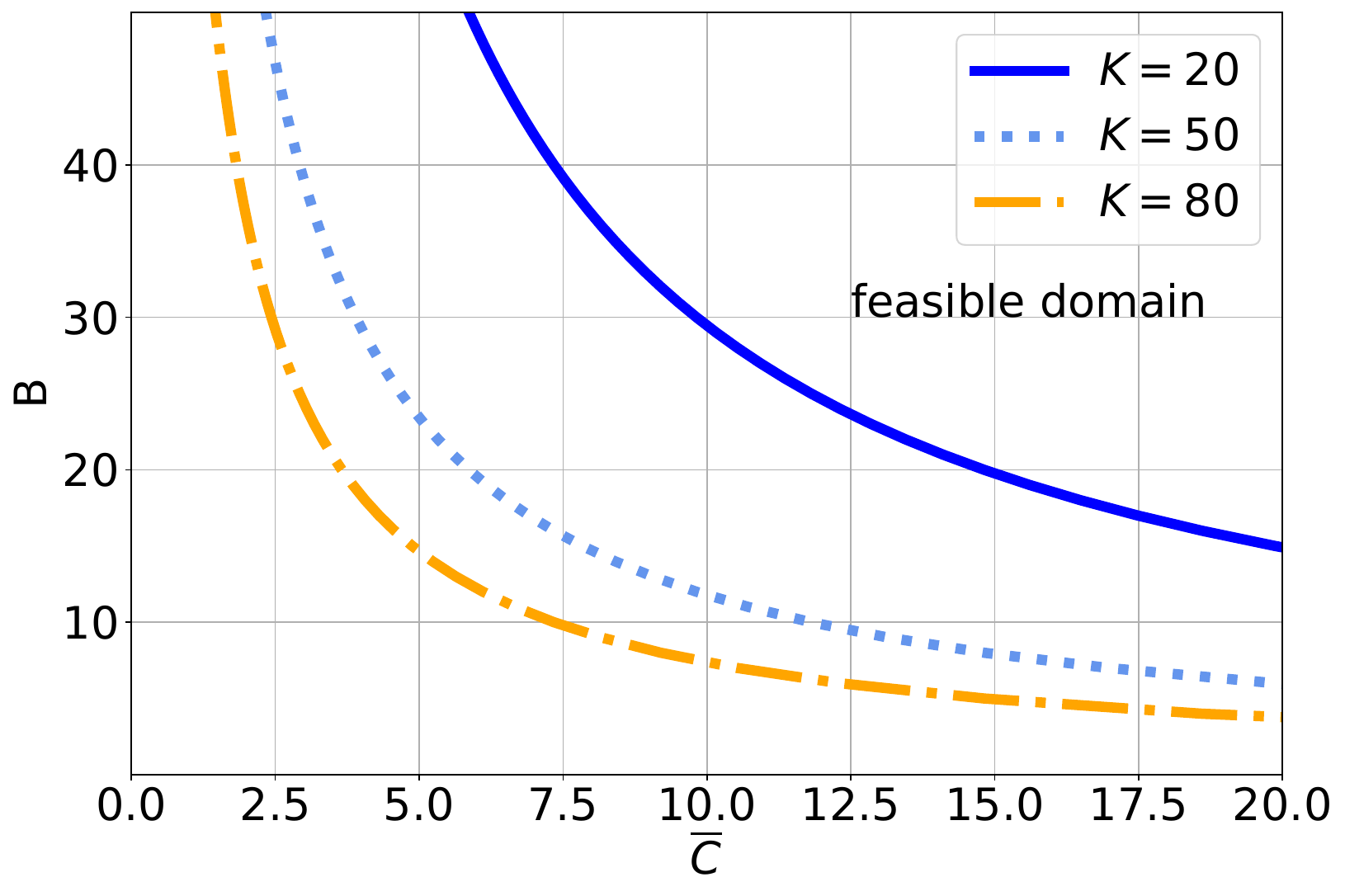}
		\label{tradeoff_B_C_random_K}}
	\subfloat[$D$]{
		\includegraphics[width=0.47\linewidth]{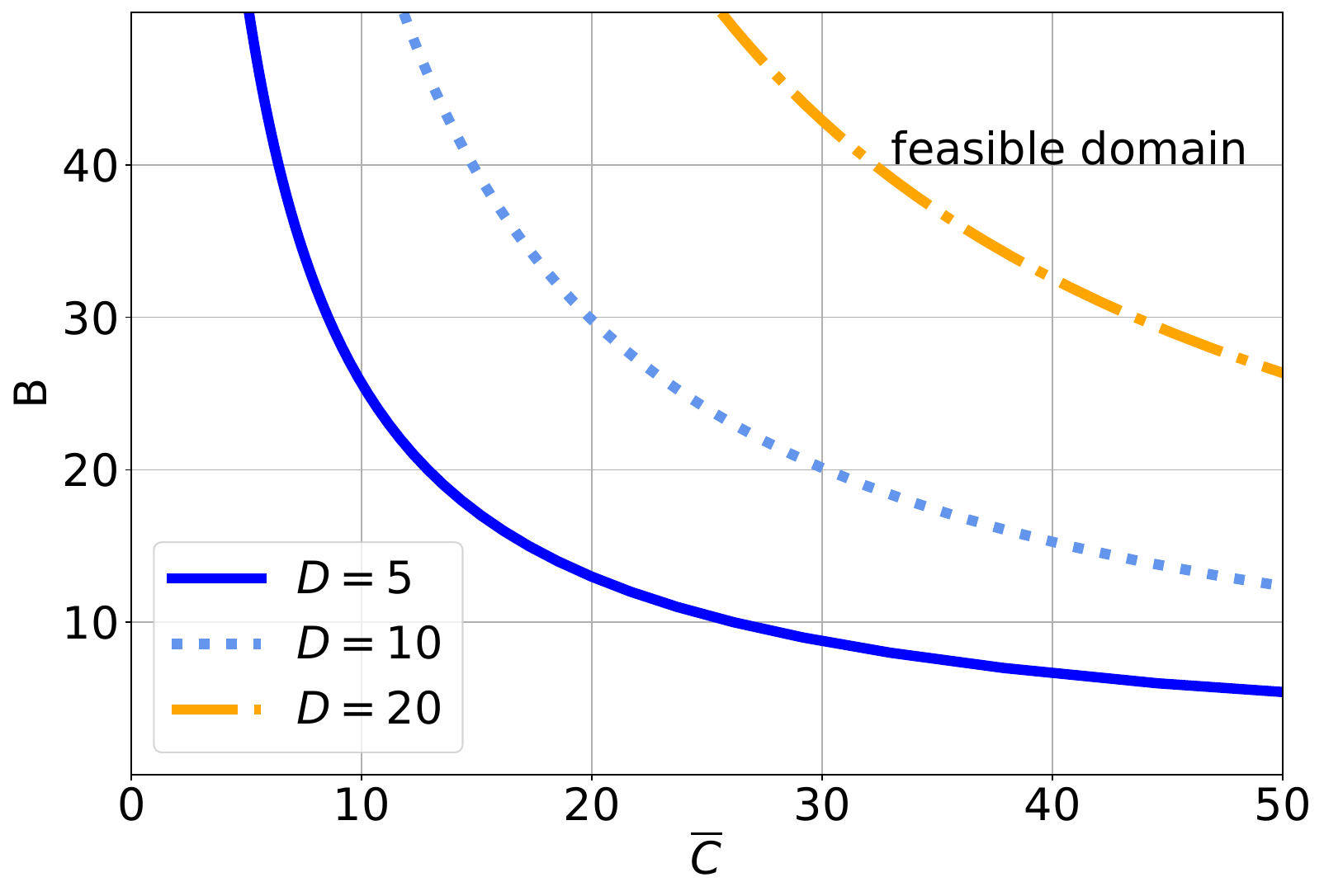}
		\label{tradeoff_B_C_random_D}}
 \caption{Pareto frontiers for random pattern.}
 \label{fig:tradeoff_B_C_random}
\end{figure}

We present the results on the performance-cost tradeoff  (i.e., the tradeoff between the Stationary Generalization Error and system cost) according to Theorem~\ref{theorem:pareto-frontier} with the parameter settings given in Table~\ref{table:para}. Figs.~\ref{fig:tradeoff_B_C_periodic},~\ref{fig:tradeoff_B_C_gradual}, and~\ref{fig:tradeoff_B_C_random} display the Pareto frontiers for the periodic, gradual, and random patterns, respectively. 

\begin{observation}[The impact of Parameters on the Pareto Frontiers]\label{observation:The impact of Parameters on the Pareto Frontiers}
When $K$ increases or $D$ decreases, the performance-cost tradeoff becomes better, i.e., the tradeoff curves move towards the origin such that a lower error can be achieved under a given cost budget.
\end{observation}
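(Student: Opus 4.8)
\textbf{Proof proposal for Observation~\ref{observation:The impact of Parameters on the Pareto Frontiers}.}

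The plan is to argue directly from the closed-form Pareto relation in Theorem~\ref{theorem:pareto-frontier}, namely Eq.~\eqref{eq:relation between B and cost}, together with the explicit case-study bounds in Corollary~\ref{corollary: case study}. Recall that the Pareto frontier is the locus of pairs $(\overline{C}, B^*)$ satisfying
\[
\sum_{k}\frac{c_k}{(\frac{B^*-b}{a}-g_k)^2}+\frac{d_k}{\frac{B^*-b}{a}-h_k}+e_k = \overline{C},
\]
where $B(\tau_2)=a\tau_2+b$ is the (pattern-specific) linear upper bound on $\overline{G}$ and $a,b>0$ are read off from the relevant expression in Corollary~\ref{corollary: case study}. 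The key observation is that only $a$ and $b$ carry the dependence on $K$ and $D$; the constants $c_k,d_k,e_k,g_k,h_k$ come solely from the cost model~\eqref{eq:tau}--\eqref{eq:C} and are independent of $K$ and $D$. So ``the tradeoff curve moves toward the origin'' should be made precise as: for each fixed cost budget $\overline{C}$, the optimal value $B^*$ is monotone decreasing in $K$ and monotone increasing in $D$.

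First I would fix $\overline{C}$ and view the left-hand side of Eq.~\eqref{eq:relation between B and cost} as a function $\Phi(\tau_2)\triangleq\sum_k \tfrac{c_k}{(\tau_2-g_k)^2}+\tfrac{d_k}{\tau_2-h_k}+e_k$ of the underlying decision variable $\tau_2=\frac{B^*-b}{a}$, and show that on the feasible branch $\Phi$ is strictly monotone in $\tau_2$ — this follows from the structure of $\phi_k^{-1}$ and the fact that cost is strictly increasing in processing capacity $f_k$ (hence strictly decreasing in $\tau_2$), which is exactly the content used in Lemmas~\ref{lemma:optimal solution is equal} and~\ref{lemma:remove max}. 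Consequently the constraint $\Phi(\tau_2)=\overline{C}$ pins down a unique $\tau_2^\star(\overline{C})$ that does not depend on $K$ or $D$, and therefore $B^* = a\,\tau_2^\star(\overline{C}) + b$. It then suffices to check the monotonicity of the coefficients $a$ and $b$: inspecting Eqs.~\eqref{eq: pattern 1}, \eqref{eq: pattern 2}, \eqref{eq: pattern 3} one sees the overall prefactor $\tfrac{r\sqrt{\Delta^T\Sigma^{-1}\Delta}}{KD}$ (or $\tfrac{r\sqrt{\Delta^T\Sigma^{-1}\Delta}}{K}$) makes both $a$ and $b$ strictly decreasing in $K$; and the bracketed polynomial in $p$, once divided through, is increasing in $D$ for each pattern (for the random pattern this is immediate from the factor $\tfrac{5}{3}D+\tfrac{1}{3D}-2$, which is increasing for $D\ge 1$; for the periodic and gradual patterns one differentiates the per-$D$ bracket with respect to $D$ using $\tau_2\in(0,1)$, $\alpha\in(0,1]$, $p\in[0,1]$). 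Combining these with $B^*=a\tau_2^\star+b$ and $\tau_2^\star>0$ gives: $B^*$ decreases when $K$ increases and increases when $D$ increases, for every fixed $\overline{C}$, which is the claimed shift of the frontier toward the origin.

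The main obstacle I anticipate is the $D$-monotonicity of the periodic and gradual bracketed polynomials: unlike the clean random-pattern factor, these are quadratics in $p$ whose coefficients mix $D$, $\alpha$, and $\tau_2$, so one must verify that the partial derivative in $D$ has a sign that is uniform over the whole parameter box $p\in[0,1]$, $\alpha\in(0,1]$, $\tau_2\in(0,1)$. I would handle this by grouping terms so that the $D$-derivative of the bracket divided by $D$ is manifestly a sum of nonnegative contributions (using $1-\alpha\ge 0$ and $p^2-p\le 0$), reducing to a short sign check at the endpoints $p\in\{0,1\}$ and $\alpha\in\{0,1\}$ by linearity/convexity in those variables. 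A secondary technical point is ensuring the feasible branch of $\Phi$ is the correct one (i.e.\ $\tau_2$ lies in the admissible interval determined by the poles $g_k,h_k$), which follows because physically realizable capacities $f_k<\infty$ force $\tau_2>R\max_k(T^{UL}_k+T^{DL}_k)$, keeping us on the monotone branch; this is already implicit in the derivation of Theorem~\ref{theorem:pareto-frontier} and needs only to be stated.
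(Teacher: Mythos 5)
You should first note that the paper does not prove Observation~\ref{observation:The impact of Parameters on the Pareto Frontiers} at all: it is presented as an \emph{empirical} observation read off the Pareto frontiers that are plotted by numerically evaluating the relation of Theorem~\ref{theorem:pareto-frontier} under the specific parameter settings of Table~\ref{table:para}. Your attempt to turn it into an analytical statement is therefore a genuinely different, more ambitious route --- but as written it has two concrete gaps.

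First, your central simplification is internally inconsistent. You define $\Phi(\tau_2)=\sum_k \frac{c_k}{(\tau_2-g_k)^2}+\frac{d_k}{\tau_2-h_k}+e_k$ and then claim that the root $\tau_2^\star(\overline{C})$ of $\Phi(\tau_2)=\overline{C}$ ``does not depend on $K$.'' But the sum runs over the $K$ clients, and by \eqref{eq:C} each client contributes a strictly positive cost (e.g.\ the communication term $(C^{UL}_k+C^{DL}_k)R$); hence increasing $K$ increases $\Phi(\tau_2)$ pointwise, and since $\Phi$ is decreasing on the feasible branch, $\tau_2^\star$ \emph{increases} with $K$. The $1/K$ prefactor in $a$ and $b$ therefore competes against a growing $\tau_2^\star$, and the net sign of $\partial B^*/\partial K$ is not determined without quantitative comparison --- which is presumably why the paper states this only as a numerical observation. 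Second, the uniform $D$-monotonicity you hope to verify for the periodic and gradual brackets is false. Writing the periodic bound \eqref{eq: pattern 1} as $\frac{r\sqrt{\Delta^T\Sigma^{-1}\Delta}}{K}\bigl(2(1-p)(\tau_2+1-\alpha)+\frac{1}{D}Q(p,\alpha,\tau_2)\bigr)$ with $Q$ the $D$-independent part of the bracket, one has $\partial B_p/\partial D\propto -Q/D^2$, and at $p=0$, $Q=(4\alpha-1)\tau_2-1+\alpha$, which is \emph{positive} for, e.g., $\alpha=0.9$, $\tau_2=0.5$; there the bound decreases as $D$ grows, contradicting the sign you need. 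Consistently, the paper's own Proposition~\ref{proposition:The Effects of Parameters on the Threshold} lists monotonicity in $\alpha$, $K$, and $\tau_2$ but conspicuously omits $D$. A correct write-up should either restrict to the Table~\ref{table:para} parameter regime and present the claim as a numerical finding (as the paper does), or explicitly identify the sub-region of $(p,\alpha,\tau_2,K,D)$ on which both monotonicities provably hold.
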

Observation~\ref{observation:The impact of Parameters on the Pareto Frontiers} shows that when the number of clients increases or when the system contains a smaller set of possible drifts, we could use lower cost budget to achieve better FL system performance.

\begin{observation}[Impact of Patterns]\label{observation:The Significance of Impact of Parameters on the Pareto Frontiers}
Increasing $K$ has the strongest impact on improving the tradeoff under random pattern, followed by periodic pattern, and then gradual pattern. 
\end{observation}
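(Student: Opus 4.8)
The plan is to derive, for each drift pattern, a closed-form description of how the Pareto frontier of Theorem~\ref{theorem:pareto-frontier} shifts as $K$ grows, and then to compare these shifts across patterns using Proposition~\ref{proposition:Threshold}. Write the pattern-specific bound from Corollary~\ref{corollary: case study} as $B_{\bullet}(\tau_2)=a_{\bullet}\tau_2+b_{\bullet}$ for $\bullet\in\{r,p,g\}$. Inspecting Corollary~\ref{corollary: case study}, both $a_{\bullet}$ and $b_{\bullet}$ carry a common prefactor $1/K$ (for the periodic and gradual cases there is an additional $1/D$, which we absorb into the pattern-specific part), and $a_{\bullet}\ge 0$ in the parameter regime of interest (verifiable directly from the expressions, using e.g.\ $D\ge 2$).

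First I would handle the cost side. Assuming homogeneous clients, Lemmas~\ref{lemma:optimal solution is equal} and~\ref{lemma:remove max} reduce the constraint~\eqref{eq:constraint3} to $K\,f_0(\tau_2)=\overline{C}$, where $f_0$ is the single-client cost-versus-time function built from~\eqref{eq:tau} and~\eqref{eq:C}; crucially $f_0$ depends neither on the drift pattern nor on $K$, and $f_0'<0$ on its domain. Hence the optimizer is $\tau_2^{*}=f_0^{-1}(\overline{C}/K)$ and the frontier reads $B_{\bullet}^{*}(\overline{C},K)=\tfrac{1}{K}\,\Phi_{\bullet}\!\big(f_0^{-1}(\overline{C}/K)\big)$, with $\Phi_{\bullet}\triangleq K B_{\bullet}$ pattern-specific but $K$-free.

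Next I would differentiate $B_{\bullet}^{*}$ with respect to $K$, eliminating $\partial\tau_2^{*}/\partial K$ via the constraint. A short computation gives
\begin{equation}
\frac{\partial B_{\bullet}^{*}}{\partial K}=-\frac{1}{K}\Big(a_{\bullet}\,\beta(\tau_2^{*})+b_{\bullet}\Big),\qquad \beta(\tau_2)\triangleq \tau_2+\frac{f_0(\tau_2)}{f_0'(\tau_2)},
\end{equation}
where $\beta(\cdot)$ is pattern-independent and $K$-independent. Thus the magnitude of the $K$-shift of the frontier at a fixed budget is $\tfrac{1}{K}\big(a_{\bullet}\beta(\tau_2^{*})+b_{\bullet}\big)$, which equals, up to the common factor $1/K$, the pattern-specific bound $B_{\bullet}$ evaluated at the number $x=\beta(\tau_2^{*})$ in place of $\tau_2$. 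Provided $\beta(\tau_2^{*})$ falls in the range over which Proposition~\ref{proposition:Threshold} is established (a short check using the monotonicity of $f_0$, as in the argument behind Theorem~\ref{theorem:pareto-frontier}) and $a_{\bullet}\beta(\tau_2^{*})+b_{\bullet}>0$, Proposition~\ref{proposition:Threshold} yields $B_r(x)>B_p(x)>B_g(x)$ for $p<p_{th}$, hence $\big|\partial B_r^{*}/\partial K\big|>\big|\partial B_p^{*}/\partial K\big|>\big|\partial B_g^{*}/\partial K\big|$, which is exactly the claimed ordering; the common negative sign also confirms that each such shift moves the frontier toward the origin, consistent with Observation~\ref{observation:The impact of Parameters on the Pareto Frontiers}.

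The main obstacle is the coupling between $K$ and the optimizer $\tau_2^{*}$ introduced by the cost constraint: one must show the induced correction $a_{\bullet}f_0(\tau_2^{*})/f_0'(\tau_2^{*})$ neither flips the sign of $a_{\bullet}\beta(\tau_2^{*})+b_{\bullet}$ nor reverses the pattern ordering. The clean resolution is precisely that this correction is pattern-independent once the $1/K$ (resp.\ $1/(KD)$) factor has been stripped, so the cross-pattern comparison collapses to Proposition~\ref{proposition:Threshold}. A caveat I would state explicitly is that Proposition~\ref{proposition:Threshold} only guarantees the ordering for $p<p_{th}$, so the observation should be read in the non-negligible-drift regime exercised by Figs.~\ref{fig:tradeoff_B_C_periodic}--\ref{fig:tradeoff_B_C_random}; for $p\ge p_{th}$ the periodic and random curves may swap. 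Heterogeneous clients are handled identically, replacing $K f_0$ by $\sum_k f_{0,k}$, since the aggregate remains pattern-independent.
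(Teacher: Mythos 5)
The paper does not actually prove this statement: it is presented as an empirical observation read off the Pareto frontiers of Figs.~\ref{fig:tradeoff_B_C_periodic}--\ref{fig:tradeoff_B_C_random}, which are computed numerically from Theorem~\ref{theorem:pareto-frontier} under the specific parameter settings of Table~\ref{table:para}. Your proposal is therefore a genuinely different route: you formalize ``impact of $K$'' as $\bigl|\partial B_{\bullet}^{*}/\partial K\bigr|$ at fixed budget, exploit the fact that both the slope and intercept of $B_{\bullet}(\tau_2)$ carry a common $1/K$ factor while the cost constraint collapses to $Kf_0(\tau_2)=\overline{C}$, and reduce the cross-pattern comparison to Proposition~\ref{proposition:Threshold}. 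This is more informative than the paper's treatment --- it explains \emph{why} the frontiers separate the way they do and makes the dependence on the drift regime ($p<p_{th}$) explicit, whereas the paper's observation is tied to one parameter choice. The derivative computation itself is correct, and your homogeneity reduction is consistent with Lemmas~\ref{lemma:optimal solution is equal} and~\ref{lemma:remove max}.

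There is, however, one step that does not go through as written. You invoke Proposition~\ref{proposition:Threshold} to order $a_{\bullet}x+b_{\bullet}$ at $x=\beta(\tau_2^{*})=\tau_2^{*}+f_0(\tau_2^{*})/f_0'(\tau_2^{*})$. Since $f_0>0$ and $f_0'<0$, you have $\beta(\tau_2^{*})<\tau_2^{*}$ and possibly $\beta(\tau_2^{*})<0$, i.e., outside the interval $\tau_2\in(0,1)$ on which the bounds and Proposition~\ref{proposition:Threshold} are defined. Moreover, Proposition~\ref{proposition:Threshold} orders $B_r,B_p,B_g$ as functions of the transition probability $p$ with $\tau_2$ held at its actual value; its proof rests on the quadratic-in-$p$ structure whose coefficients depend on $\tau_2$, and the threshold $p_{th}$ itself moves with $\tau_2$ (Proposition~\ref{proposition:The Effects of Parameters on the Threshold}). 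Substituting an arbitrary (possibly negative) argument for $\tau_2$ can change the sign of the $\tau_2$-coefficient and hence flip the ordering, so the claimed inequality chain for $a_{\bullet}\beta+b_{\bullet}$ needs a direct verification from the explicit expressions in Corollary~\ref{corollary: case study} rather than a citation of Proposition~\ref{proposition:Threshold}. You flag this as a ``short check,'' but it is the crux of the argument, not a formality; until it is done, your derivation establishes the ordering only when $\beta(\tau_2^{*})$ happens to land in the regime where the Proposition~\ref{proposition:Threshold} ordering is known to hold.
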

Considering addressing the concept drift phenomenon by increasing the number of clients. From Observation~\ref{observation:The Significance of Impact of Parameters on the Pareto Frontiers}, such an approach is more effective under random pattern, followed by periodic pattern and then gradual pattern. This result further suggests the necessity of understanding the concept drift patterns for retraining and adaptation algorithm design. 


\section{Conclusion}\label{sec:conclusion}
In this study, we proposed a theoretical framework for FL under concept drift. We modeled changes in data distribution and introduced $\overline{G}$ to quantify system performance. Furthermore, we established an upper bound on $\overline{G}$ in terms of mutual information and KL divergence. To address performance degradation caused by concept drift, we developed an algorithm to mitigate its effects. To explore the performance-cost tradeoff, we formulated and solved the upper bound minimization problem, deriving the Pareto frontiers for performance and cost.
Our theoretical and experimental results demonstrate that drift patterns significantly impact system performance, with random patterns proving particularly detrimental. Additionally, empirical evaluations show that our method consistently outperforms relevant baselines. These findings suggest that KL divergence and mutual information can serve as effective guides for training FL systems under concept drift.
Future research could focus on developing pattern-aware FL retraining or adaptation algorithms to further enhance the performance and adaptability of FL systems. For instance, leveraging the statistical characteristics of data to estimate concept drift patterns could facilitate informed retraining decisions that balance cost and system performance.

\appendices

\appendices

\section{Proof of Theorem 1}\label{sec:appendix Proof of Theorem 1}


\setcounter{theorem}{0} 
\begin{theorem}[Upper Bound of the Stationary Generalization Error]
The Stationary Generalization Error is bounded as:
\vspace{-2mm}
\begin{multline}  \label{eq:system error bound}
\overline{G}\leq \frac{1}{K^2}\sum\limits_{\pi_{pre},\pi_{cur},\pi_{nxt}}p(\pi_{pre})p(\pi_{cur}|\pi_{pre})p(\pi_{nxt}|\pi_{cur})\\
\left(\tau_1\sum\limits_{k=1}^{K}\left(\frac{\alpha}{N_{cur}}\!\!\sum\limits_{n=1}^{N_{cur}}\!\psi^{*-1}(I(\mathbf{w}_{cur},Z_{nk,cur}))\right.+\frac{1-\alpha}{N_{pre}}\right.\\
\left.\sum\limits_{n=N_{cur}+1}^{N}\!\!\!\!\!\psi^{*-1}(I(\mathbf{w}_{cur},Z_{nk,pre})+D_{\rm KL}(\pi_{pre}\|\pi_{cur}))\!\right)\\
\!\!\!\!\!\!\!+\tau_2\sum\limits_{k=1}^{K}\left(\frac{\alpha}{N_{cur}}\sum\limits_{n=1}^{N_{cur}}\psi^{*-1}(I(\mathbf{w}_{cur},Z_{nk,cur})\right.\\
\left.+D_{\rm KL}(\pi_{cur}\|\pi_{nxt}))\left.+\frac{1-\alpha}{N_{pre}}\right.\right.\\
\left.\left.\sum\limits_{n=N_{cur}+1}^{N}\!\!\!\!\!\psi^{*-1}(I(\mathbf{w}_{cur},Z_{nk,pre})\!+\!D_{\rm KL}(\pi_{pre}\|\pi_{nxt}))\!\right)\!\right).\!\!\!\!\!
\end{multline}
\end{theorem}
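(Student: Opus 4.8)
The plan is to reduce $\overline{G}$ to a sum of elementary ``single-source'' generalization terms, bound each one by a single application of the Donsker--Varadhan variational formula \eqref{eq:KL divergence} together with Lemma~\ref{lemma:inverse function of psi}, and then reassemble. Fix a realization of the triple $(\pi_{pre},\pi_{cur},\pi_{nxt})$. Using $\alpha+(1-\alpha)=1$ and the fact that the empirical-risk term is common to \eqref{eq:stage I-mean generalization error} and \eqref{eq:stage II-mean generalization error}, both $\mathbb{E}_{\mathbf{w}S}[G_1]$ and $\mathbb{E}_{\mathbf{w}S}[G_2]$ expand as averages over clients $k$ and over the $N_{cur}+N_{pre}$ individual samples of terms of the form $\mathbb{E}_{\mathbf{w}}[L_{\pi_{test}}(\mathbf{w}_{cur})]-\mathbb{E}_{\mathbf{w},Z}[\ell(\mathbf{w}_{cur},Z)]$, where the test distribution is $\pi_{cur}$ for $G_1$ and $\pi_{nxt}$ for $G_2$, and $Z$ is drawn from its training distribution ($\pi_{cur}$ for the $cur$-samples, $\pi_{pre}$ for the $pre$-samples). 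This bookkeeping produces exactly the four cases in \eqref{eq:system error bound}: no mismatch for $(G_1,cur)$, $D_{\rm KL}(\pi_{pre}\|\pi_{cur})$ for $(G_1,pre)$, $D_{\rm KL}(\pi_{cur}\|\pi_{nxt})$ for $(G_2,cur)$, and $D_{\rm KL}(\pi_{pre}\|\pi_{nxt})$ for $(G_2,pre)$.

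The core estimate is the following single-term bound: if $(\mathbf{w},Z)$ has joint law $P_{\mathbf{w},Z}$ whose $Z$-marginal is $\pi_{train}$, then for any $\pi_{test}$,
\[
\mathbb{E}_{\mathbf{w}}\bigl[L_{\pi_{test}}(\mathbf{w})\bigr]-\mathbb{E}_{P_{\mathbf{w},Z}}\bigl[\ell(\mathbf{w},Z)\bigr]\;\le\;\psi^{*-1}\!\bigl(I(\mathbf{w};Z)+D_{\rm KL}(\pi_{train}\|\pi_{test})\bigr).
\]
To establish it I would invoke \eqref{eq:KL divergence} with $U=P_{\mathbf{w},Z}$, $V=P_{\mathbf{w}}\otimes\pi_{test}$, and test function $f=-\lambda\bigl(\ell(\mathbf{w},Z)-L_{\pi_{test}}(\mathbf{w})\bigr)$; centering by the conditional mean $L_{\pi_{test}}(\mathbf{w})$ is what lets Assumption~\ref{assumption:cumulant generating function} bound the residual log-moment-generating term by $\psi(\lambda)$ uniformly over $\mathbf{w}$. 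The chain rule for relative entropy gives $D_{\rm KL}(P_{\mathbf{w},Z}\|P_{\mathbf{w}}\otimes\pi_{test})=I(\mathbf{w};Z)+D_{\rm KL}(\pi_{train}\|\pi_{test})$ (the KL contribution coming from the $Z$-marginal $\pi_{train}$ versus $\pi_{test}$), so dividing by $\lambda$ and taking $\inf_{\lambda\in[0,b)}$ yields $\psi^{*-1}(\cdot)$ via Lemma~\ref{lemma:inverse function of psi}; with $\pi_{train}=\pi_{test}$ this recovers the individual-sample bound of~\cite{DBLP:journals/jsait/BuZV20} and, more generally, the transfer-learning-type analysis of~\cite{DBLP:conf/isit/WuMAZ20}.

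I expect the main obstacle to be the federated aggregation, i.e.\ obtaining the $1/K^2$ prefactor rather than the $1/K$ that the naive client-wise decomposition above yields. Since $\mathbf{w}_{cur}=\tfrac1K\sum_k\mathbf{w}_k$ is an average of (essentially) independent local updates, one has to expand $\mathbf{w}_{cur}$ inside both the population and empirical risks and argue that the cross-client pairs $(\mathbf{w}_j,S_k)$ with $j\neq k$ are independent and hence contribute no genuine generalization gap beyond the distribution mismatch, so that only the ``diagonal'' contributions survive and each client enters at order $1/K^2$; each surviving term is then controlled by the single-term bound applied to the aggregated model $\mathbf{w}_{cur}$, following the distributed-learning machinery of~\cite{DBLP:journals/entropy/BarnesDP22}. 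Finally I would assemble: multiply the stage-I and stage-II estimates by $\tau_1$ and $\tau_2$, sum over $k$ and $n$ with the weights $\alpha/N_{cur}$ and $(1-\alpha)/N_{pre}$, and take the outer expectation over $(\pi_{pre},\pi_{cur},\pi_{nxt})$, which at stationarity of the Markov chain equals $\sum_{\pi}p(\pi_{pre})p(\pi_{cur}\mid\pi_{pre})p(\pi_{nxt}\mid\pi_{cur})(\cdot)$ --- all mutual informations in \eqref{eq:system error bound} being understood as conditional on the realized distributions. Linearity of expectation then reproduces \eqref{eq:system error bound}.
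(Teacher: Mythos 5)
Your proposal follows essentially the same route as the paper's own proof: the same per-sample, per-client decomposition into the four (stage, data-source) cases with their respective KL mismatch terms, the same Donsker--Varadhan estimate with $V=P_{\mathbf{w}_{cur}}\otimes\pi_{test}$ combined with Lemma~\ref{lemma:inverse function of psi} to obtain $\psi^{*-1}\bigl(I(\mathbf{w}_{cur};Z)+D_{\rm KL}(\pi_{train}\|\pi_{test})\bigr)$, and the same appeal to the distributed-learning bound of Barnes et al. to obtain the $1/K^2$ aggregation factor. The only detail the paper states explicitly that you leave implicit is that this last step is taken from their Theorem~4 and requires assuming the loss has the form of a Bregman divergence.
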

\begin{proof}

To bound the value of $\overline{G}$ in Eq.~(6), 
we need to bound $G(\pi_{cur},\pi_{nxt})$ in Eq.~(5). In particular, we need to bound $\mathbb{E}_{\bold{w}S}[G_1(\pi_{cur},\bold{w}_{cur})]$ and $\mathbb{E}_{\bold{w}S}[G_2(\pi_{nxt},\bold{w}_{cur})]$, respectively. By substituting these bounds into Eq.~(5) and then Eq.~(6), Theorem~1 can be proven.

We first bound $\mathbb{E}_{\bold{w}S}[G_1(\pi_{cur},\bold{w}_{cur})]$. For simplicity, we employ notation $Z_{n}$, $ n=1,2,...,N$. For $n=1,2,...,N_{cur}$, $Z_n\sim \pi_{cur}$. For $n=N_{cur}+1, ..., N$, $Z_n\sim \pi_{pre}$. By substituting Eq.~(1) and Eq.~(2) to Eq.~(3) and according to the properties of expectation, $\mathbb{E}_{\bold{w}S}[G_1(\pi_{cur},\bold{w}_{cur})]$ can be represented as:
\vspace{-2mm}
\begin{multline}
\label{eq:prove stage I-2}
\frac{1}{NK}\sum\limits_{k=1}^{K}\bigg(\sum\limits_{n=1}^{N_{cur}}\frac{\alpha N}{N_{cur} }\mathbb{E}_{\mathbf{w}_{cur}Z_{n}}\left[\mathbb{E}_{Z_{cur}}\!\!\left[\ell(\mathbf{w}_{cur},Z_{cur})\right]\right.\\
\left.-\ell(\mathbf{w}_{cur},Z_{nk,cur})\right]\\
+\sum\limits_{n=N_{cur}+1}^{ N}\frac{(1-\alpha)N}{N_{pre}}
\mathbb{E}_{\mathbf{w}_{cur}Z_{n}}\left[\mathbb{E}_{Z_{cur}}\!\!\left[
\ell(\mathbf{w}_{cur},Z_{cur})\right]\right.\\
\left.-\ell(\mathbf{w}_{cur},Z_{nk,pre})\right]\bigg).
\end{multline}

To bound the first term of Eq.~\eqref{eq:prove stage I-2}, we link it with the definition of KL divergence in Eq.~(10).
Let $U=P_{\bold{w}_{cur}Z_n}$, $V=P_{\bold{w}_{cur}}\otimes \pi_{cur}$ in Eq.~(10), and define $f\triangleq\lambda \ell\left(\bold{w}_{cur}, Z_n\right)$ for some $\lambda$. The joint probability between an instance $Z_n$ and $\bold{w}_{cur}$ is $P_{\bold{w}_{cur}Z_n}(\bold{w}_{cur},Z_n)$. Then, we have 
\begin{multline}\label{eq:prove stage I-3}
\mathbb{E}_{\bold{w}_{cur} Z_{n}}\left[\lambda \ell\left(\bold{w}_{cur}, Z_{n}\right)\right] \\
\leq\! D_{\rm KL}\!\left(P_{\bold{w}_{cur} Z_{n}} \| P_{\bold{w}_{cur}} \!\otimes\! \pi_{cur}\right)\\
+\log \mathbb{E}_{w_{cur}\otimes \pi_{cur}}\!\left[e^{\lambda \ell\left(\bold{w}_{cur}, Z_{n}\right)}\!\right].
\end{multline}
We can bound the first term of Eq. \eqref{eq:prove stage I-2} using Eq. (\ref{eq:prove stage I-3}) and Eq.~(7) for $n=1, \ldots, N_{cur}$:
\vspace{-2mm}
\begin{equation} \label{eq:prove stage I-4}
\begin{aligned}
&\mathbb{E}_{\bold{w}_{cur} Z_{n}}\left[\mathbb{E}_{Z_{cur}}[\ell(\bold{w}_{cur},Z_{cur})]-\ell\left(\bold{w}_{cur}, Z_{nk,cur}\right)\right] \\
&\leq-\frac{1}{\lambda}\left(D_{\rm KL}\left(P_{\bold{w}_{cur} Z_{nk,cur}} \| P_{\bold{w}_{cur}} \otimes \pi_{cur}\right)+\psi(\lambda)\right)\\
&=-\frac{1}{\lambda}\left(I\left(\bold{w}_{cur} ; Z_{nk,cur}\right)+D_{\rm KL}\left(P_{Z_{n}} \| \pi_{cur}\right)+\psi(\lambda)\right)\\
&\stackrel{(a)}=-\frac{1}{\lambda}\left(I\left(\bold{w}_{cur} ; Z_{nk,cur}\right)+\psi(\lambda)\right).
\end{aligned}
\end{equation}
Here, (a) holds because $P_{Z_n}=\pi_{cur}$ when $Z_n\sim \pi_{cur}$ in Stage I for $n=1, \ldots, N_{cur}$.
Further, by minimizing the right hand side (RHS) of Eq. \eqref{eq:prove stage I-4} and substituting Eq.~(9), Eq.~\eqref{eq:prove stage I-4} can be bounded by:
\begin{equation}\label{eq:minimizing the RHS}
\begin{aligned}
&\mathbb{E}_{\bold{w}_{cur} Z_{n}}\left[\mathbb{E}_{Z_{cur}}\left[\ell(\bold{w}_{cur},Z_{cur})\right]-\ell\left(\bold{w}_{cur}, Z_{nk,cur}\right)\right]\\
&\leq \min _{\lambda \in\left[0,b\right]} \frac{1}{\lambda}\left(I\left(\bold{w}_{cur} ; Z_{nk,cur}\right)+\psi(-\lambda)\right)\\
&=\psi^{*-1}\left(I\left(\bold{w}_{cur} ; Z_{nk,cur}\right)\right).
\end{aligned}
\end{equation}

To prove the bound of the second term in Eq. \eqref{eq:prove stage I-2}, for $n=N_{cur}+1, \ldots, N$ and $P_{Z_n}=\pi_{pre}$, we follow the same proof path as above and obtain:
\begin{equation}\label{eq:minimizing the RHS-2}
\begin{aligned}
&\mathbb{E}_{\bold{w}_{cur} Z_{n}}\left[\mathbb{E}_{Z_{cur}}\left[\ell(\bold{w}_{cur},Z_{cur})\right]-\ell(\bold{w}_{cur}, Z_{nk,pre})\right]\\
&\leq \psi^{*-1}\left(I(\bold{w}_{cur} ; Z_{nk,pre})+D_{\rm KL}(\pi_{pre} \| \pi_{cur})\right).
\end{aligned}
\end{equation}

Note that since $\pi_{pre}$ and $\pi_{cur}$ may be different due to the transition between states, then the KL divergence $D_{\rm KL}(\pi_{pre}\| \pi_{cur})\neq 0$, i.e., $D_{\rm KL}(\pi_{pre}\| \pi_{cur})$ cannot be canceled out in Eq. \eqref{eq:prove stage I-4}.
Now we bound $\mathbb{E}_{\bold{w}S}[G_1(\pi_{cur}, \bold{w}_{cur})]$ by summing over $n$ and $k$ using the upper bounds in Eq. \eqref{eq:minimizing the RHS} and Eq. \eqref{eq:minimizing the RHS-2} based on~[29, Theorem 4]. Following~[29, Theorem 4], we assume that the loss function is in the form of Bregman divergence.  And the upper bound of $\mathbb{E}_{\bold{w}S}[G_1(\pi_{cur}, \bold{w}_{cur})]$ can be represented as:



\begin{multline}
\label{eq:prove stage I-5}
\sum\limits_{k=1}^{K}\Bigg(\frac{\alpha}{N_{cur}K^2}\sum\limits_{n=1}^{N_{cur}}\psi^{*-1}\left(I\left(\mathbf{w}_{cur},Z_{nk,cur}\right)\right)\\
+\left.\frac{1-\alpha}{N_{pre}K^2}\sum\limits_{n=N_{cur}+1}^{N}\psi^{*-1}\!\left(I(\mathbf{w}_{cur},Z_{nk,pre})\right.\right.\\
\left.+D_{\rm KL}(\pi_{pre}\|\pi_{cur})\right)\Bigg).
\end{multline}

Following the same proof idea used in proving~\eqref{eq:prove stage I-5}, we can derive the upper bound of $\mathbb{E}_{\bold{w}S}[G_2(\pi_{nxt},\bold{w}_{cur})]$ as follows:

\begin{multline}
\label{eq:prove stage II-1}
\!\!\!\!\!\sum\limits_{k=1}^{K}\!\Bigg(\!\frac{\alpha}{N_{cur}K^2}\!\sum\limits_{n=1}^{N_{cur}}\psi^{*-1}\!\left(I(\mathbf{w}_{cur},\!Z_{nk,cur})\!+\!D_{\rm KL}(\pi_{cur}\|\pi_{nxt})\right)\\
+\left.\frac{1-\alpha}{N_{pre}K^2}\sum\limits_{n=N_{cur}+1}^{N}\psi^{*-1}\left(I(\mathbf{w}_{cur},Z_{nk,pre})\right.\right.\\
\left.+D_{\rm KL}(\pi_{pre}\|\pi_{nxt})\right)\Bigg).
\end{multline}

By substituting the upper bound of $\mathbb{E}_{\bold{w}S}[G_1(\pi_{cur}, \bold{w}_{cur})]$ in~\eqref{eq:prove stage I-5} and the upper bound of $\mathbb{E}_{\bold{w}S}[G_2(\pi_{nxt},\bold{w}_{cur})]$ in~\eqref{eq:prove stage II-1} to Eq.~(6), we can derive the upper bound of $\overline{G}$ in Eq.~(12).
\end{proof}

\section{Proof of Proposition 1}\label{sec: appendix Proof of Proposition 1}
\setcounter{proposition}{0} 
\begin{proposition}[Threshold]\label{proposition:Threshold}
    There exists a threshold $p_{th} \leq 0.5$, such that $B_{r}(p)>B_{p}(p)>B_{g}(p)$ for any $p<p_{th}$.
\end{proposition}
\begin{proof}
    First, we prove that there exists a unique intersection point of $B_{p}(p)$ and $B_{g}(p)$ within $(0,1)$. \textbf{(i)} We establish the existence by showing that $\lim\limits_{p\rightarrow0^{+}}B_{p}(p)>\lim\limits_{p\rightarrow0^{+}}B_{g}(p)$ and $\lim\limits_{p\rightarrow1^{-}}B_{g}(p)>\lim\limits_{p\rightarrow1^{-}}B_{p}(p)$. \textbf{(ii)} We prove uniqueness. When $0<p<1$, $B_{p}(p)$ is monotonically decreasing. When $D>2$ and $0<p\leq 0.5$, $B_{g}(p)$ is monotonically decreasing. When $D>2$, $0.5<p<1$, $B_{g}(p)$ is monotonically increasing. When $D=2$, $B_{g}(p)$ is a constant function. Due to (i) and (ii), $B_{p}(p)$ and $B_{g}(p)$ have a unique intersection point within $(0,1)$. Let $p_{pg}$ be the x-coordinate of the intersection point of $B_{p}(p)$ and $B_{g}(p)$.
    
    Similarly, we can prove that if $B_{r}(0.5)\leq B_{p}(0.5)$, then $B_{p}(p)$ and $B_{r}(p)$ have a unique intersection point within $(0,0.5]$. Let $p_{pr}$ be the x-coordinate of the intersection point of $B_{p}(p)$ and $B_{r}(p)$.

    Second, we prove that $p_{th} = \min\{p_{pg} ,p_{pr} ,0.5\}$ by the graph of $B_{p}(p)$, $B_{g}(p)$ and $B_{r}(p)$.  $B_{p}(p)$ is a quadratic function opening upwards defined on $(0,1)$, and the axis of symmetry of $B_{p}(p)$ is larger than 1. when $D>2$, $B_{g}(p)$ is a quadratic function opening upwards defined on $(0,1)$, and the axis of symmetry of $B_{g}(p)$ is $0.5$; when $D=2$, $B_{g}(p)$ is a constant function defined on $(0,1)$. $B_{r}(p)$ is a function composed of a linear term $p$ and a reciprocal term $1/p$, defined on $(0,0.5]$. Let $B^{'}_{p}(p)$, $B^{'}_{g}(p)$, and $B^{'}_{r}(p)$ denote the derivative of $B_{p}(p)$, $B_{g}(p)$, and $B_{r}(p)$, respectively. When $0 < p \leq 0.5$, $0 \geq B^{'}_{g}(p) > B^{'}_{p}(p) > B^{'}_{r}(p)$. When $0.5<p<1$, $B^{'}_{g}(p) > 0 > B^{'}_{p}(p)$. 
    After drawing the graph of $B_{p}(p)$, $B_{g}(p)$ and $B_{r}(p)$, we can prove the following two cases. \textbf{(i)} $p_{pg}\leq 0.5$. If $B_{r}(0.5)\geq B_{p}(0.5)$, then $p_{th}=p_{pg}$; if $B_{r}(0.5) < B_{p}(0.5)$, then $p_{th}=\min\{p_{pg},p_{pr}\}$. \textbf{(ii)} $p_{pg} > 0.5$. If $B_{r}(0.5)\geq B_{p}(0.5)$, then $p_{th}=0.5$; if $B_{r}(0.5) < B_{p}(0.5)$, then $p_{th}=p_{pr}$. Thus, $B_{r}(p)>B_{p}(p)>B_{g}(p)$ for $p<p_{th}$.
\end{proof}

\begin{table}[t]
\centering
\caption{Average test accuracy for Fashion-MNIST and CIFAR-10 (random pattern).}
\begin{tabular}{lcc}
\toprule
Algorithms   & Fashion-MNIST&CIFAR-10\\ 
\midrule
ERM    &55.25\% & 71.75\% \\ 
AdaptiveFedAvg   &53.48\% & 72.79\% \\ 
FLASH   &55.20\% & 63.18\%\\ 
KLDA   & 53.21\% & 69.60\% \\ 
Ours   & \textbf{67.24}\% & \textbf{76.83}\%\\ 
\bottomrule
\end{tabular}
\label{table:random pattern Test accuracy comparison}
\end{table}

\begin{table}[t]
\centering
\caption{Average test accuracy for Fashion-MNIST and CIFAR-10 (gradual pattern).}
\begin{tabular}{lcc}
\toprule
Algorithms   & Fashion-MNIST&CIFAR-10\\ 
\midrule
ERM    &53.46\% & 64.61\% \\ 
AdaptiveFedAvg   &49.40\% & 65.37\% \\ 
FLASH   &50.84\% & 56.76\%\\ 
KLDA   & 52.06\% & 64.38\% \\ 
Ours   & \textbf{61.05}\% & \textbf{68.69}\%\\ 
\bottomrule
\end{tabular}
\label{table:gradual pattern Test accuracy comparison}
\end{table}

\begin{figure}[t]
	\centering
	\subfloat[]{
		\includegraphics[width=1\linewidth]{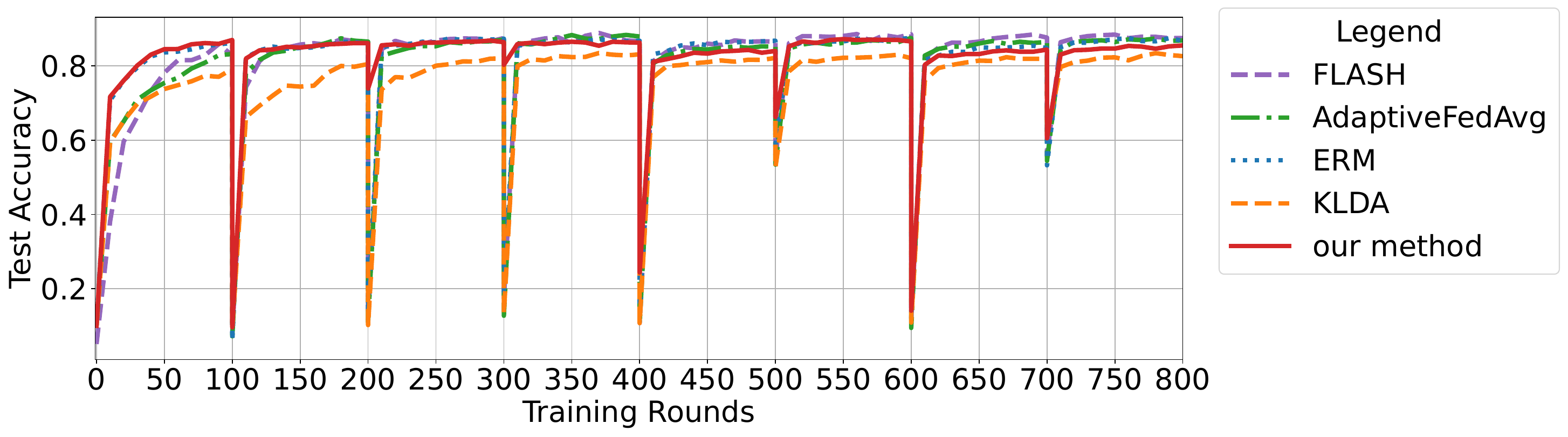}
		\label{fig:example16-randompattern}}\\
        \centering
	\subfloat[]{
		\includegraphics[width=1\linewidth]{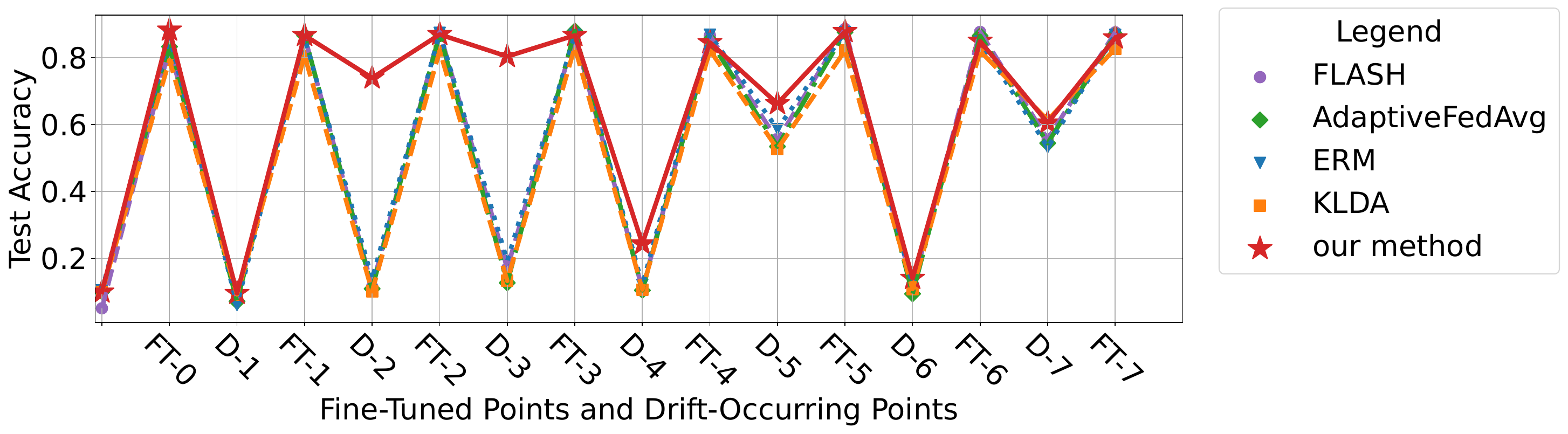}
		\label{fig:example16-1-random-pattern}}
 \caption{Accuracy curves for Fasion-MNIST dataset (random pattern): (a) Training curve; (b) test accuracy of retrain and drift.}
 \label{fig:example16-big-random-pattern}
\end{figure}

\begin{figure}[t]
	\centering
	\subfloat[]{
		\includegraphics[width=1\linewidth]{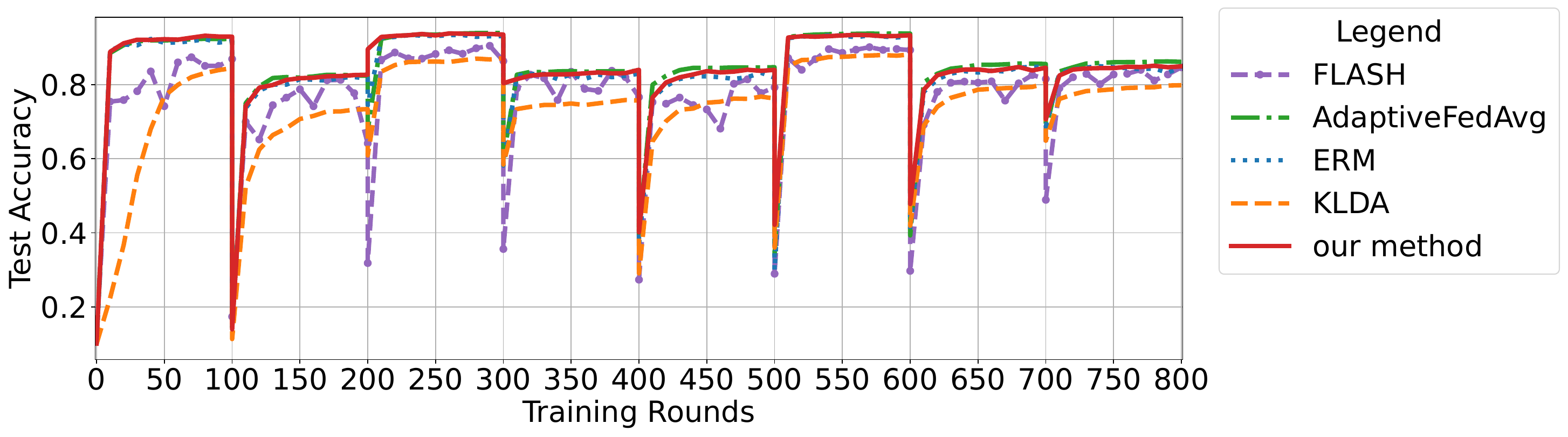}
		\label{fig:example19-randompattern}}\\
        \centering
	\subfloat[]{
		\includegraphics[width=1\linewidth]{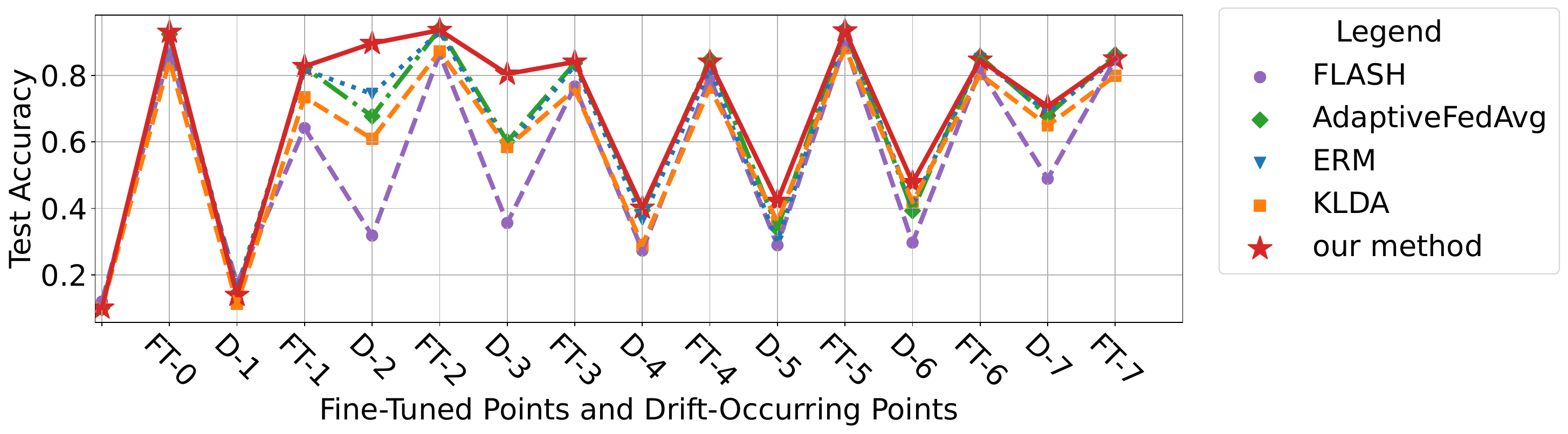}
		\label{fig:example19-1-randompattern}}
 \caption{Accuracy curves for CIFAR-10 dataset (random pattern):  (a) Training curve; (b) test accuracy of retrain and drift. }
 \label{fig:example19-big-random-pattern}
\end{figure}

\begin{figure}[t]
	\centering
	\subfloat[]{
		\includegraphics[width=1\linewidth]{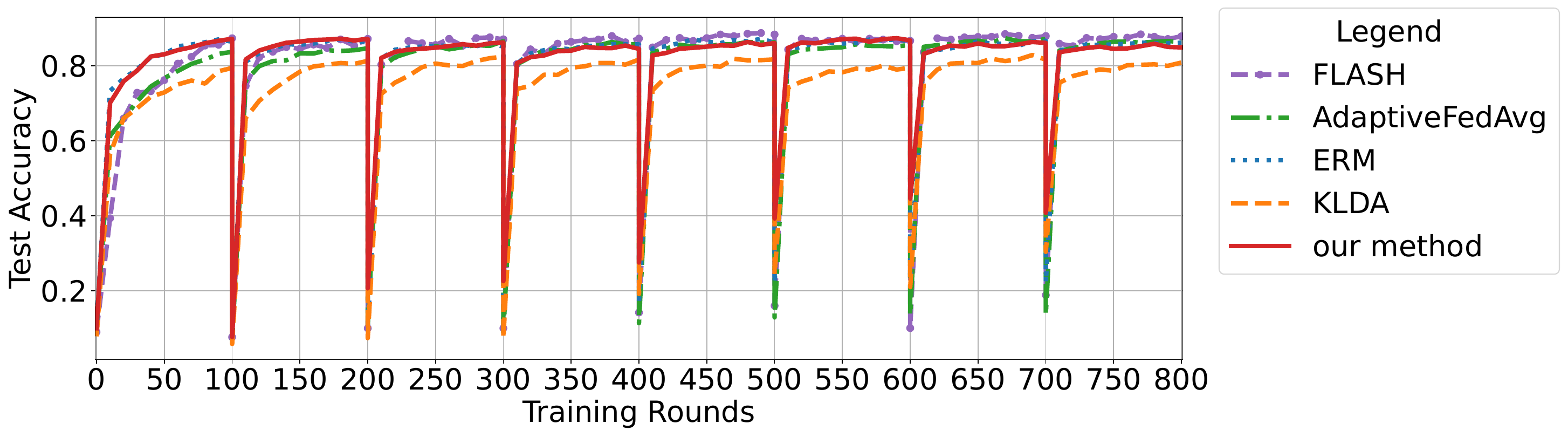}
		\label{fig:example16-gradualpattern}}\\
        \centering
	\subfloat[]{
		\includegraphics[width=1\linewidth]{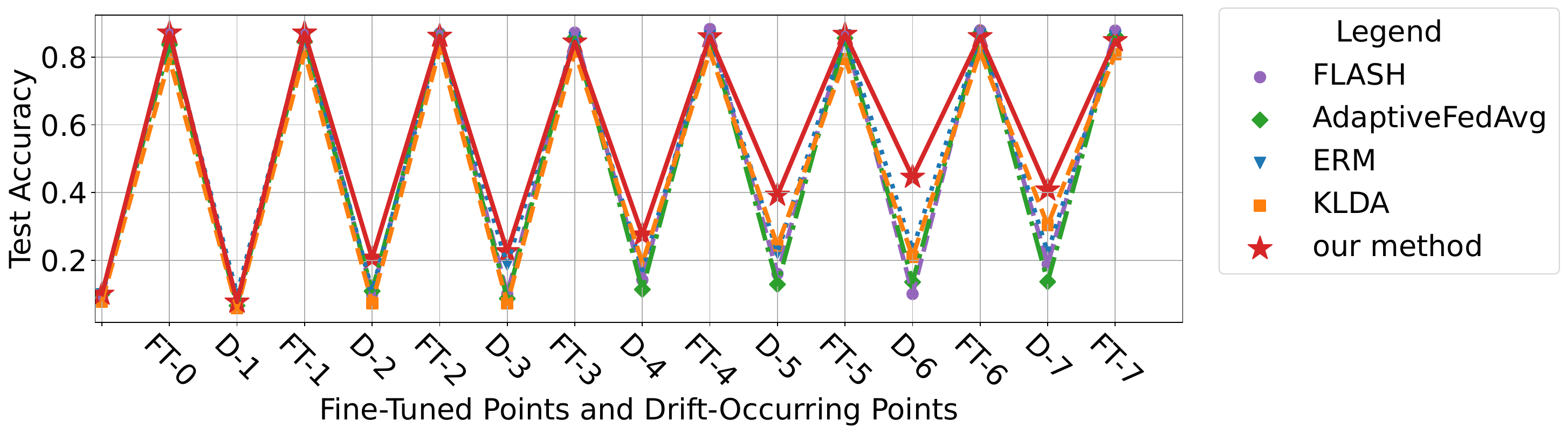}
		\label{fig:example16-1-gradual-pattern}}
 \caption{Accuracy curves for Fasion-MNIST dataset (gradual pattern): (a) Training curve; (b) test accuracy of retrain and drift.}
 \label{fig:example16-big-gradual-pattern}
\end{figure}

\begin{figure}[t]
	\centering
	\subfloat[]{
		\includegraphics[width=1\linewidth]{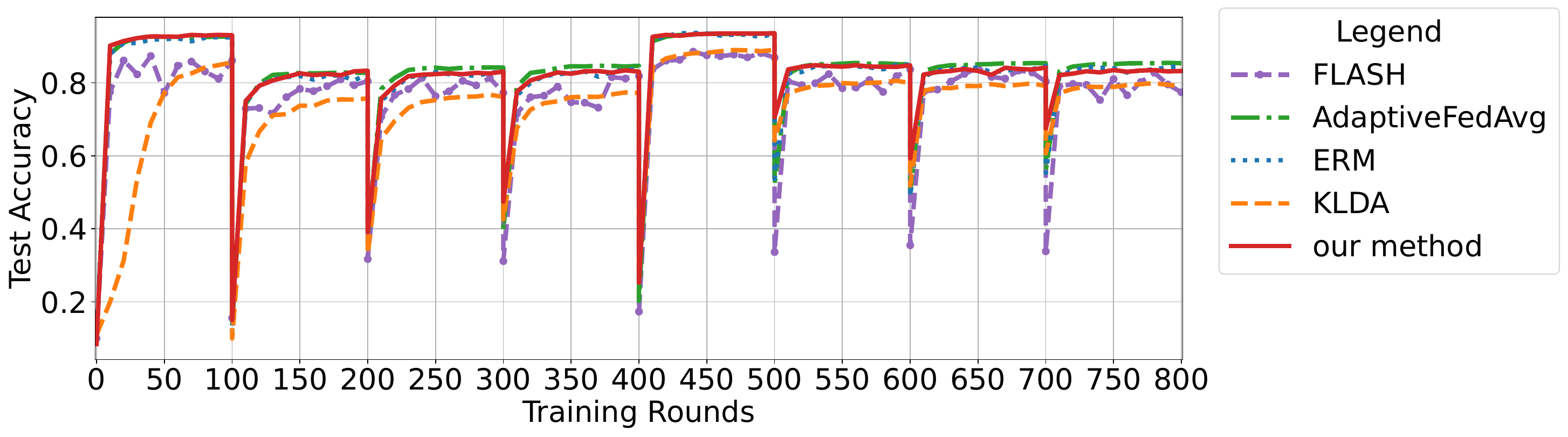}
		\label{fig:example19-randompattern}}\\
        \centering
	\subfloat[]{
		\includegraphics[width=1\linewidth]{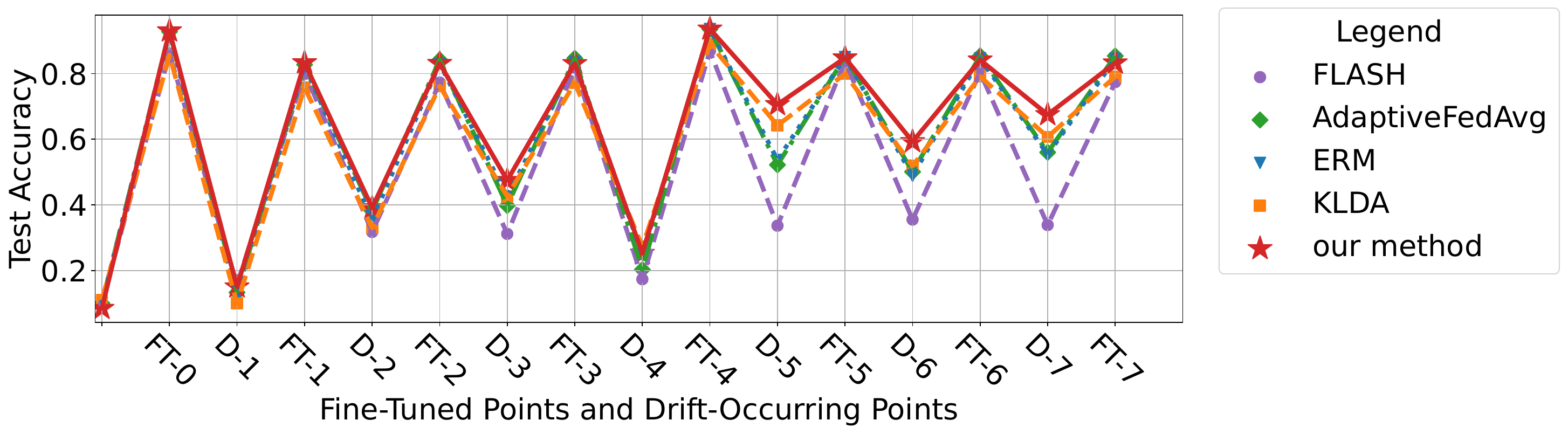}
		\label{fig:example19-1-gradualpattern}}
 \caption{Accuracy curves for CIFAR-10 dataset (gradual pattern):  (a) Training curve; (b) test accuracy of retrain and drift. }
 \label{fig:example19-big-gradual-pattern}
\end{figure}

\vspace{-5mm}

\section{Proof of Lemma 2}\label{sec:appendix Proof of Lemma 2}
\setcounter{lemma}{1} 
\begin{lemma}
    Processing capacity vector $\boldsymbol{f}^*$ is the optimal solution to problem~(24) only if $\sum_kC_k(f_k^*)= \overline{C}$.
\end{lemma}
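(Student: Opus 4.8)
The plan is to argue by contradiction via a perturbation argument: if a feasible $\boldsymbol{f}^*$ leaves the cost constraint~\eqref{eq:constraint} slack, then a small uniform increase of all processing capacities stays feasible yet strictly lowers the objective, so $\boldsymbol{f}^*$ cannot be optimal. First I would assemble the monotonicity and continuity facts that drive this. From~\eqref{eq:C}, each $C_k(\cdot)$ is continuous and strictly increasing on $(0,\infty)$, because $C_k^{invt}f_k$ and $C_k^{comp}(f_k)^2N_kJ_kER$ are strictly increasing in $f_k$ while the remaining terms do not depend on $f_k$. From~\eqref{eq:tau}, $\phi(\boldsymbol{f})$ is a maximum of terms $N_kJ_kE/f_k+T_k^{UL}+T_k^{DL}$, each strictly decreasing in its own $f_k$; hence $\phi$ is non-increasing in every coordinate, and strictly decreasing whenever \emph{all} coordinates are strictly increased. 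Since $\tau_2(\boldsymbol{f})=R\phi(\boldsymbol{f})$ and, as noted after Theorem~\ref{theorem:expected generalization bound}, the bound $B(\tau_2)=a\tau_2+b$ is strictly increasing in $\tau_2$ (a longer adaptation horizon yields a larger bound on $\overline{G}$, i.e.\ $a>0$), the composite map $\boldsymbol{f}\mapsto B(\tau_2(\boldsymbol{f}))$ is strictly decreased by a strict increase of all coordinates of $\boldsymbol{f}$.

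Next I would carry out the contradiction. Suppose $\boldsymbol{f}^*$ solves~\eqref{eq:multiobjective optimization} but $\sum_k C_k(f_k^*)<\overline{C}$, and set $\boldsymbol{f}(\epsilon)\triangleq\boldsymbol{f}^*+\epsilon\mathbf{1}$ for $\epsilon>0$. By continuity of each $C_k$, $\sum_k C_k(f_k^*+\epsilon)\to\sum_k C_k(f_k^*)<\overline{C}$ as $\epsilon\to 0^+$, so there is $\epsilon_0>0$ with $\sum_k C_k(f_k^*+\epsilon)<\overline{C}$ for all $0<\epsilon\le\epsilon_0$; thus $\boldsymbol{f}(\epsilon)$ is feasible. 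Moreover $N_kJ_kE/(f_k^*+\epsilon)<N_kJ_kE/f_k^*$ for every $k$, so $\phi(\boldsymbol{f}(\epsilon))<\phi(\boldsymbol{f}^*)$ strictly, hence $\tau_2(\boldsymbol{f}(\epsilon))<\tau_2(\boldsymbol{f}^*)$ and $B(\tau_2(\boldsymbol{f}(\epsilon)))<B(\tau_2(\boldsymbol{f}^*))$. This contradicts the optimality of $\boldsymbol{f}^*$, so every optimal solution must satisfy $\sum_k C_k(f_k^*)=\overline{C}$.

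The one delicate point — and the step I expect to be the main obstacle — is the strict inequality $\phi(\boldsymbol{f}(\epsilon))<\phi(\boldsymbol{f}^*)$. The maximum defining $\phi$ may be attained only on a proper subset $\mathcal{K}^*\subsetneq\mathcal{K}$, so raising a single off-argmax $f_k$ would leave $\phi$ unchanged and the naive argument would fail. The fix is precisely the uniform perturbation $\boldsymbol{f}^*+\epsilon\mathbf{1}$: since every term of the maximum strictly decreases, the maximum itself strictly decreases regardless of which index realizes it afterwards. (Alternatively, one could perturb only the coordinates in $\mathcal{K}^*$ and use continuity to keep those terms dominant for small $\epsilon$, but the uniform perturbation avoids that bookkeeping.) Beyond this, the argument is purely monotonicity plus continuity, assuming as throughout that each $f_k$ ranges over $(0,\infty)$ so that a small increase stays in the domain.
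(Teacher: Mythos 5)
Your proposal is correct and follows essentially the same route as the paper's own proof: a contradiction argument in which a small increase of the processing capacities preserves feasibility (by continuity of the $C_k$ and the slack in the budget) while strictly decreasing $\phi$, hence $\tau_2$ and $B$. The paper uses a vector of small perturbations $\epsilon_k$ rather than a uniform $\epsilon\mathbf{1}$, but the mechanism is identical, and your explicit handling of the argmax issue in the $\max$ defining $\phi$ only makes precise a step the paper leaves implicit.
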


\begin{proof}
    If $\sum_kC_k(f_k^*)< \overline{C}$, then there always exists a set of small value $\epsilon_k$ for each $k$, $\boldsymbol{\epsilon}=\{\epsilon_k,k\in\mathcal{K}\}$, such that
    $ \sum_kC_k(f_k^*+\epsilon_k)\leq \overline{C}$  and $\phi(\boldsymbol{f}^*+\boldsymbol{\epsilon})<\phi(\boldsymbol{f}^*)$. The later inequality leads to the fact that $B(R\phi(\boldsymbol{f}^*+\boldsymbol{\epsilon}))< B(R\phi(\boldsymbol{f}^*))$. 
    This implies a contradiction, so $\boldsymbol{f}^*$ cannot be the optimal solution to problem~(24).
\end{proof}

\section{Proof of Lemma 3} \label{sec:appendix Proof of Lemma 3}
\begin{lemma}
    Let $\phi_k(f_k)=\frac{S_kD_kE}{f_k}+T^{UL}_k+T^{DL}_k$. Processing capacity vector $\boldsymbol{f}^*$ is the optimal solution only if $\phi_k(f_k^*)$ are identical for $k\in\mathcal{K}$. 
\end{lemma}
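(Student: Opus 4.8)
The plan is to argue by contradiction, exploiting the fact that minimizing $B(\tau_2(\boldsymbol{f}))$ over the feasible set is equivalent to minimizing the bottleneck term $\phi(\boldsymbol{f})=\max_{k\in\mathcal{K}}\phi_k(f_k)$, since $\tau_2(\boldsymbol{f})=R\,\phi(\boldsymbol{f})$ and, by Theorem~\ref{theorem:expected generalization bound}, $B(\tau_2)=a\tau_2+b$ is affine and strictly increasing in $\tau_2$ (with $a>0$). First I would record two monotonicity facts that follow immediately from the definitions \eqref{eq:tau} and \eqref{eq:C}: each $\phi_k$ is continuous and strictly decreasing in $f_k$ on $(0,\infty)$ (its derivative is $-N_kJ_kE/f_k^2<0$), and each $C_k$ is continuous and strictly increasing in $f_k$ (its derivative $C^{invt}_k+2C^{comp}_k f_kN_kJ_kER$ is positive). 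By Lemma~\ref{lemma:optimal solution is equal} we may also assume that at the optimum $\sum_kC_k(f_k^*)=\overline{C}$.

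Next, suppose $\boldsymbol{f}^*$ is optimal but the values $\phi_k(f_k^*)$ are not all equal. Set $\phi^*\triangleq\phi(\boldsymbol{f}^*)=\max_k\phi_k(f_k^*)$ and let $\mathcal{K}^*\triangleq\{k:\phi_k(f_k^*)=\phi^*\}$ be the set of bottleneck clients; by assumption $\mathcal{K}^*\neq\mathcal{K}$, so there exists $j\notin\mathcal{K}^*$ with $\phi_j(f_j^*)<\phi^*$. Since $\phi_j$ is continuous and decreasing, there is a $\delta>0$ such that $\phi_j(f_j^*-\delta)<\phi^*$ still holds; replacing $f_j^*$ by $f_j^*-\delta$ strictly lowers client $j$'s cost and frees a budget surplus $\Delta C\triangleq C_j(f_j^*)-C_j(f_j^*-\delta)>0$.

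Then I would reinvest this surplus into the bottleneck clients: distribute $\Delta C$ among the clients in $\mathcal{K}^*$ by increasing each $f_k^*$ ($k\in\mathcal{K}^*$) by a positive amount $\epsilon_k$, chosen small enough that $\sum_{k\in\mathcal{K}^*}\bigl(C_k(f_k^*+\epsilon_k)-C_k(f_k^*)\bigr)\le\Delta C$, which is possible by continuity of the $C_k$ and $\epsilon_k\to 0^+$. The perturbed vector is feasible, and since each $\phi_k$ is strictly decreasing we get $\phi_k(f_k^*+\epsilon_k)<\phi^*$ for every $k\in\mathcal{K}^*$. Because $\mathcal{K}$ is finite, $\max_{k\notin\mathcal{K}^*}\phi_k(f_k^*)<\phi^*$ with a positive gap, so for $\delta$ and the $\epsilon_k$ small enough no previously non-bottleneck client (in particular $j$) crosses above $\phi^*$ either. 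Hence the new maximum $\max_k\phi_k$ is strictly below $\phi^*$, giving a strictly smaller $\tau_2$ and therefore a strictly smaller $B(\tau_2(\cdot))$ — contradicting the optimality of $\boldsymbol{f}^*$. Consequently $\phi_k(f_k^*)$ must be identical for all $k\in\mathcal{K}$.

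The main obstacle is the bookkeeping around the $\max$ operator: one must choose the perturbations $\delta$ and $\{\epsilon_k\}$ jointly so that (i) the total cost increase stays within the freed surplus $\Delta C$ (feasibility), and (ii) no non-bottleneck client's $\phi_k$ rises to or above the old maximum, so that the bottleneck value genuinely decreases. This is a standard continuity argument over the finite index set $\mathcal{K}$, but it is the step that needs to be written out carefully.
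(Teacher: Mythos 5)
Your proof is correct and follows essentially the same route as the paper: both arguments free up budget by slowing down a non-bottleneck client (leaving $\max_{k}\phi_k$ unchanged while strictly reducing cost) and then use the resulting slack to strictly lower the bottleneck value, contradicting optimality. The only cosmetic difference is that you carry out the reinvestment step explicitly on the bottleneck set $\mathcal{K}^*$, whereas the paper delegates it to Lemma~2 (a feasible point with strict budget slack cannot be optimal, and since it attains the same objective value as $\boldsymbol{f}^*$, neither can $\boldsymbol{f}^*$).
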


\begin{proof}
    Suppose $\boldsymbol{f}^*$ is the optimal solution and $\phi_k(f_k^*)$ are not identical for $k \in \mathcal{K}$. Then, there exists an $k^-$ such that $\phi_{k^-}(f_{k^-}^*)$ is the smallest, and another $k^+$ such that $\phi_{k^+}(f_{k^+}^*)$ is the largest.
    Let $f_k^{'}<f_k^*$, $\phi_k(f_k^{'})<\phi_{k^+}(f_{k^+}^*)$ and $\boldsymbol{f}^{'}=(f_1^*,...,f_{k-1}^*,f_k^{'},f_{k+1}^*,...,f_K^*)$. We can show that $\phi(\boldsymbol{f}^{'})=\phi(\boldsymbol{f}^*)$, according to Eq.~(21). Thus, $B(R\phi(\boldsymbol{f}^{'}))=B(R\phi(\boldsymbol{f}^*))$. Meanwhile, since $f_k^{'}<f_k^*$ and $C_k(f_k^{'})<C_k(f_k^*)$, $C(\boldsymbol{f}^{'})<C(\boldsymbol{f}^*)\leq \overline{C}$.
    According to Lemma~2, $\boldsymbol{f}^{'}$ cannot be the optimal solution to problem~(24). Further, since $\boldsymbol{f}^{'}$ and $\boldsymbol{f}^*$ lead to the same objective value, i.e.,  $B(R\phi(\boldsymbol{f}^{'}))=B(R\phi(\boldsymbol{f}^*))$, $\boldsymbol{f}^*$ cannot be the optimal solution. This leads to a contradiction. Therefore, Lemma~3 is true.   
\end{proof}

\section{Proof of Proposition 3} \label{sec:appendix Proof of Proposition 3}
\setcounter{proposition}{2}  
\begin{proposition}\label{proposition:final solution}
If $\tau_2^*$ is the optimal solution to problem~(25), then $f_k^*\triangleq\phi_k^{-1}(\frac{\tau_2^*}{R})$ for $k\in\mathcal{K}$ optimizes problem~(24). 
\end{proposition}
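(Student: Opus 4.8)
The plan is to show that problems~(24) and~(25) attain the same optimal value and that the correspondence $\tau_2 \longleftrightarrow \big(\phi_k^{-1}(\tau_2/R)\big)_{k\in\mathcal{K}}$ carries feasible points of~(25) to feasible points of~(24) and optimizers to optimizers. The enabling fact is that each $\phi_k(f_k)=\frac{N_kJ_kE}{f_k}+T^{UL}_k+T^{DL}_k$ is continuous and strictly decreasing in $f_k>0$, hence invertible on its range, so $\phi_k^{-1}$ is well defined; combined with Lemmas~2 and~3, this lets me restrict attention to ``balanced'' capacity vectors, i.e.\ those for which all $\phi_k(f_k)$ coincide and the cost budget is met with equality.

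First I would verify that $\boldsymbol{f}^*\triangleq\big(\phi_k^{-1}(\tau_2^*/R)\big)_{k\in\mathcal{K}}$ is feasible for~(24) and attains objective value $B(\tau_2^*)$ there. Feasibility of $\tau_2^*$ for~(25) is the statement $\sum_k C_k(\phi_k^{-1}(\tau_2^*/R))=\overline{C}$, which is precisely $\sum_k C_k(f_k^*)=\overline{C}\le\overline{C}$; the same relation also certifies $f_k^*>0$, since $\tau_2^*/R$ must lie in the range of each $\phi_k$. Next, $\phi_k(f_k^*)=\phi_k(\phi_k^{-1}(\tau_2^*/R))=\tau_2^*/R$ for every $k$, so by Eq.~(21) we get $\phi(\boldsymbol{f}^*)=\max_k\phi_k(f_k^*)=\tau_2^*/R$, hence $\tau_2(\boldsymbol{f}^*)=R\,\phi(\boldsymbol{f}^*)=\tau_2^*$ and the objective of~(24) at $\boldsymbol{f}^*$ is $B(\tau_2(\boldsymbol{f}^*))=B(\tau_2^*)$.

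Then I would establish optimality by contradiction. Assume $\boldsymbol{f}^*$ does not optimize~(24), and let $\tilde{\boldsymbol{f}}$ be an optimal solution, so $B(\tau_2(\tilde{\boldsymbol{f}}))<B(\tau_2^*)$. By Lemma~2, $\sum_k C_k(\tilde f_k)=\overline{C}$; by Lemma~3, the numbers $\phi_k(\tilde f_k)$ share a common value $\tilde\phi$. Put $\tilde\tau_2\triangleq R\tilde\phi$. Then $\tilde f_k=\phi_k^{-1}(\tilde\tau_2/R)$ for all $k$, so $\sum_k C_k(\phi_k^{-1}(\tilde\tau_2/R))=\overline{C}$, i.e.\ $\tilde\tau_2$ is feasible for~(25); moreover $\tau_2(\tilde{\boldsymbol{f}})=R\max_k\phi_k(\tilde f_k)=R\tilde\phi=\tilde\tau_2$, so its objective in~(25) equals $B(\tilde\tau_2)=B(\tau_2(\tilde{\boldsymbol{f}}))<B(\tau_2^*)$, contradicting the optimality of $\tau_2^*$. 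Hence $\boldsymbol{f}^*$ optimizes~(24).

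The main obstacle I anticipate is the legitimacy of invoking Lemmas~2 and~3 in the contradiction step: they are necessary conditions that presuppose an optimizer of~(24) exists, so one should first secure existence (for instance by observing that, after restricting to balanced vectors, the feasible set is compact and $B(\tau_2)=a\tau_2+b$ with $a\ge 0$ is continuous). A slightly more self-contained alternative that avoids the existence issue is to show directly that \emph{any} feasible $\boldsymbol{f}$ of~(24) can be turned into a balanced, cost-tight vector without increasing the objective, using exactly the two local modifications from the proofs of Lemmas~2 and~3 (first shrink the $f_k$'s whose $\phi_k$ is not maximal so as to tighten the cost, then scale all $f_k$ up to reduce $\phi(\boldsymbol{f})$); such a vector corresponds to a feasible $\tau_2$ of~(25), which forces $B(\tau_2(\boldsymbol{f}))\ge B(\tau_2^*)$. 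Beyond this, only the routine bookkeeping on monotonicity and invertibility of $\phi_k$ and on the positivity/range conditions for $\phi_k^{-1}$ is required.
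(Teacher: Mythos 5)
Your proof is correct and follows essentially the same route as the paper's: verify that $\boldsymbol{f}^*$ is feasible with objective $B(\tau_2^*)$, then derive a contradiction by using Lemmas~2 and~3 to map any strictly better solution of problem~(24) back to a feasible point of problem~(25) with a smaller objective. Your version is in fact slightly more careful than the paper's --- you make the use of Lemma~3 explicit (the paper needs it implicitly to write $f_k'=\phi_k^{-1}(\tau_2'/R)$) and you flag the existence-of-an-optimizer subtlety, which the paper glosses over.
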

\begin{proof}
    Suppose $\tau_2^*$ is the optimal solution to~(25), but $f_k^*\triangleq\phi_k^{-1}(\frac{\tau_2^*}{R})$ is not the optimal solution to~(24), where  $\phi_k^{-1}(\cdot)$ is the inverse function of $\phi_k(\cdot)$.  We aim to show contradiction under the two possible cases. 

    Case 1. There exists a $k$ such that $f_k^*$ is not a feasible solution to problem~(24). However, this cannot be the case, because $\sum_kC_k(\phi_k^{-1}(\frac{\tau_2^*}{R}))= \overline{C}$. That is, $\sum_kC_k(f_k^*) = \overline{C}$ satisfies the constraint.

    Case 2. There exists a $k$ such that  $f_k^*$ is feasible but cannot minimize the objective function. Then, there exists another $\boldsymbol{f}^{'}$ such that $B(R\phi(\boldsymbol{f}^{'}))<B(R\phi(\boldsymbol{f}^{*}))$. And $\sum_kC_k(f_k^{'}) = \overline{C}$ according to Lemma~2.
    Thus, there exists a $\tau_2^{'}=R\phi(\boldsymbol{f}^{'})$ such that $B(\tau_2^{'})<B(\tau_2^{*})$ and $\sum_kC_k(\phi_k^{-1}(\frac{\tau_2^{'}}{R}))= \overline{C}$. Thus, $\tau_2^*$ cannot be the optimal solution. This leads to a contradiction. Therefore, Proposition~\ref{proposition:final solution} is true.
\end{proof}

\section{Detailed Experimental Settings}\label{sec:appendix Detailed Experimental Settings}
The training settings for various methods are as follows:

\textbf{ERM:} learning rate: $10^{-4}$,  batch size: 128, representation dimension: 16.

\textbf{FLASH:} initial learning rate: $10^{-4}$,  batch size: 128, representation dimension: 16.

\textbf{AdaptiveFedAvg:} initial learning rate: $10^{-4}$, batch size: 128, representation dimension: 16.

\textbf{KLDA:} learning rate: $10^{-4}$,  batch size: 128, representation dimension: 16, $\beta$: $0.3$, $\beta_{aug}$: $0.1$.

\textbf{Ours:} learning rate: $10^{-4}$,  batch size: 128, representation dimension: 16,  \(\gamma\): 0.001.

\section{Additional Experiment Results}\label{sec:appendix Additional Experiment Results}

\subsection{Performance Variation with Distance Between Distributions}

\begin{figure}[t]
	\centering
	\subfloat[CIRCLE]{
		\includegraphics[width=0.8\linewidth]{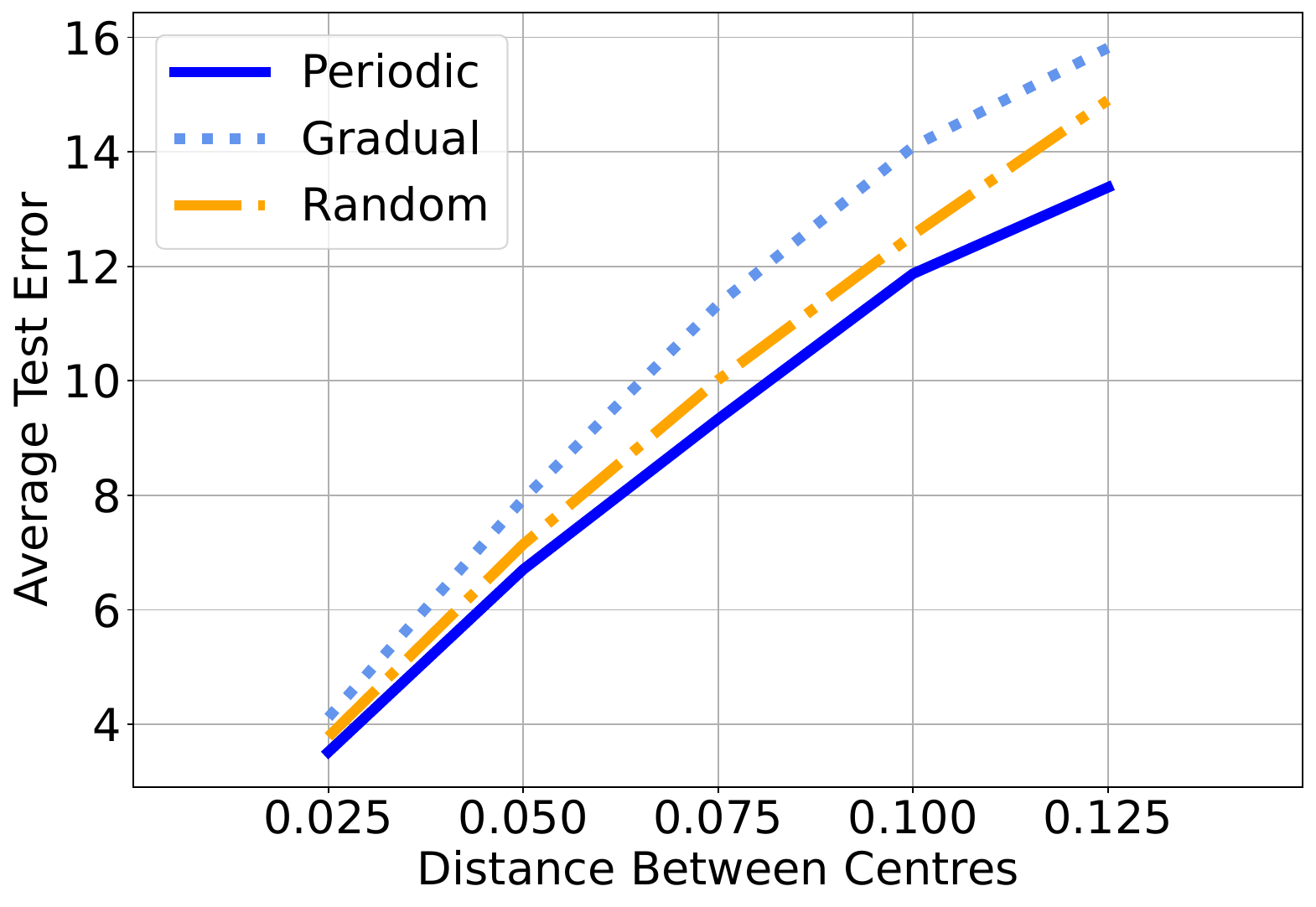}
		\label{fig:circle_distance}}
	\\
	\subfloat[MNIST]{
		\includegraphics[width=0.8\linewidth]{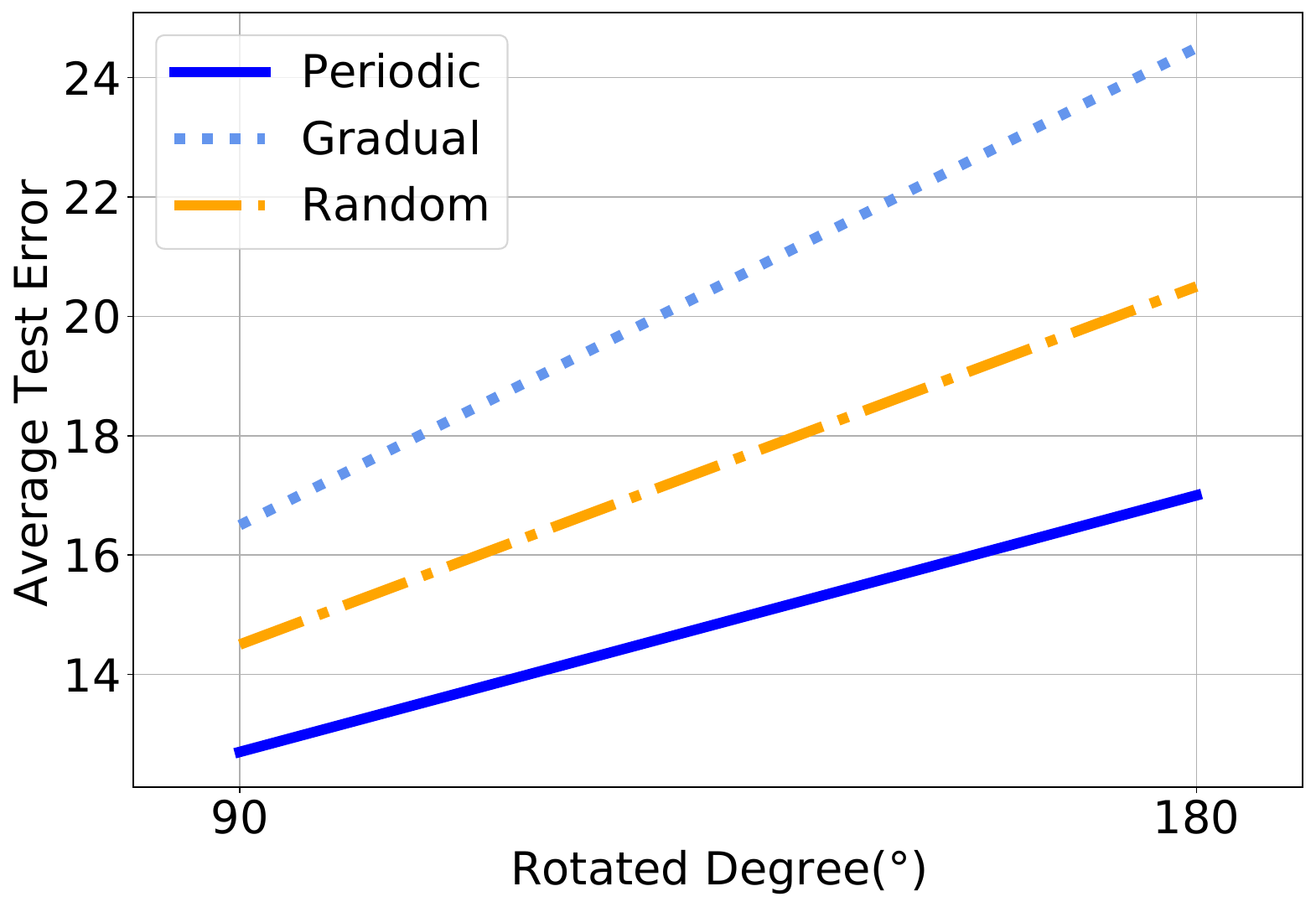}
		\label{fig:mnist_distance}}
	\\
	\subfloat[CIFAR10]{
		\includegraphics[width=0.8\linewidth]{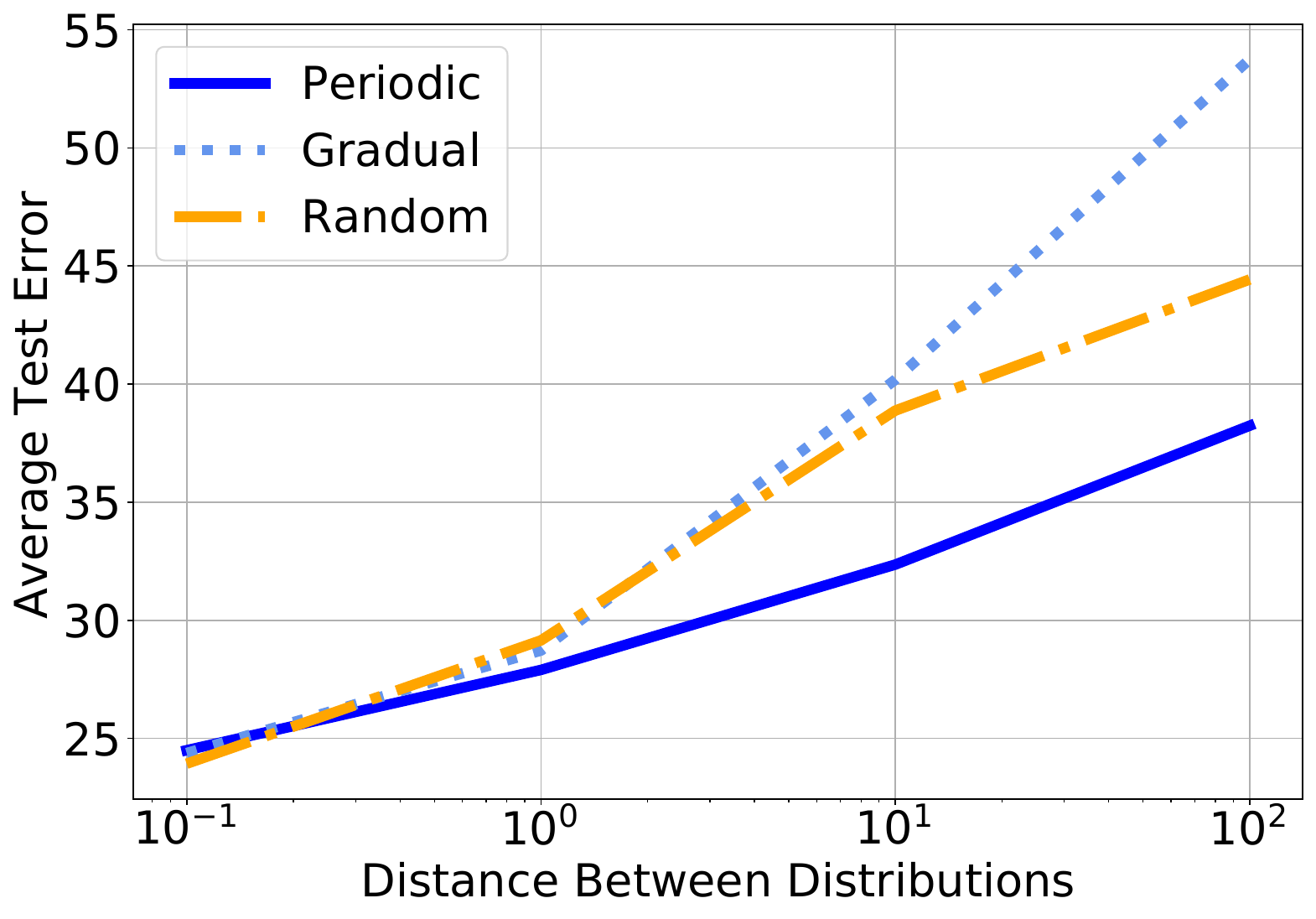}
		\label{fig:cifar_distance}}
	\caption{Average test error changes over the distance between distributions.}
	\label{fig:Average test error changes over the distance between distributions.}
\end{figure}

In Fig.~\ref{fig:Average test error changes over the distance between distributions.}, we present the average test error changes over distance between distributions under three patterns. In this case, let clients have non-IID dataset. Recall that under different datasets, the concept drift was generated with different approaches. Thus, in Figs.~\ref{fig:circle_distance},~\ref{fig:mnist_distance}, and~\ref{fig:cifar_distance}, the x-axis represents different physical meanings, i.e., the distance between the centers of the circles, the rotated degree for MNIST images, and the distance between data distributions (i.e. the reciprocal of the dirichlet distribution concentration parameter), respectively.  
\setcounter{observation}{7}
\begin{observation}[Impact of Distance]\label{observation:Error Grows with Distance}
    The average test error grows as the distance between data distributions increases.
\end{observation}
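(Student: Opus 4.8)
The plan is to support Observation~\ref{observation:Error Grows with Distance} along two complementary lines: a monotonicity argument for the theoretical bound of Theorem~\ref{theorem:expected generalization bound}, and the empirical sweep reported in Fig.~\ref{fig:Average test error changes over the distance between distributions.}.

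First, on the theoretical side, I would start from the simplified upper bound in Eq.~\eqref{eq:simplied upper bound}, which expresses the bound on $\overline{G}$ as a nonnegative combination (with weights $\tau_1,\tau_2,\alpha,1-\alpha$ and transition probabilities, all nonnegative) of terms $\sqrt{r^2 D_{\rm KL}(\pi_i\|\pi_j)}$ over the pairs of states linked by the Markov chain. The decisive fact is that the KL divergence is nondecreasing in the separation between the two distributions: under the Gaussian model of Definition~\ref{assumption: multivariate Gaussian distribution} this is exact, since $D_{\rm KL}(\pi_i\|\pi_j)=\frac{(i-j)^2}{2}\Delta^T\Sigma^{-1}\Delta$ is increasing both in $|i-j|$ and in the scale of $\Delta$. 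Since $x\mapsto\sqrt{r^2 x}$ (equivalently $\psi^{*-1}$ under the sub-Gaussian assumption) is increasing, the whole right-hand side increases with the inter-distribution distance, so the performance bound provably degrades as the distributions drift apart.

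Second, on the empirical side, I would hold the drift pattern and all remaining parameters fixed and vary a single distance knob tailored to each dataset — the separation between circle centers for CIRCLE, the rotation angle for MNIST, and the reciprocal of the Dirichlet concentration parameter for CIFAR-10 — recording the average test error $\frac{1}{2T}\sum_{t=1}^{T}\big(E_{\text{test}}^{t-1,t-1}+E_{\text{test}}^{t-1,t}\big)$ for each setting and each of the three patterns. The expected outcome is a monotone upward trend in Figs.~\ref{fig:circle_distance},~\ref{fig:mnist_distance}, and~\ref{fig:cifar_distance}, which is exactly the claim.

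The main obstacle is that the theoretical argument controls only the upper bound, not the realized error, so a monotone bound does not by itself force the error to be monotone; I would therefore treat the bound as a consistency check and rest the observation on the experiments, averaging over random seeds and distribution orderings so that the reported trend is not an artifact of one Markov-chain realization. A secondary subtlety is that, because KL divergence can be asymmetric, the ``distance'' is not a true metric, so the monotonicity should be stated with respect to the explicit parameterization ($|i-j|$ or the magnitude of $\Delta$, or the rotation angle / Dirichlet parameter used in the experiments) rather than an abstract notion of distance.
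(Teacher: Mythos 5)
Your proposal matches the paper's treatment: the observation is established empirically by sweeping the dataset-specific distance parameter and reporting the average test error in Fig.~\ref{fig:Average test error changes over the distance between distributions.}, with the monotone dependence of the bound in Theorem~\ref{theorem:expected generalization bound} on $D_{\rm KL}(\cdot\|\cdot)$ invoked only as the theoretical counterpart that the experiments "verify." Your caveat that the bound's monotonicity does not by itself force the realized error to be monotone is a correct and appropriately stated refinement of the same argument.
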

Observation~\ref{observation:Error Grows with Distance} verifies our analytical results on $\overline{G}$ in Theorem 1, i.e., when the difference between recent and future unseen datasets is larger, the performance degradation is more significant.
\begin{observation}[Performance Difference]\label{observation: The Difference of Patterns' Error Varies with Distance}
As the distance between data distributions increases, the performance gap (i.e.,  the difference of the average test error) between different patterns increases.
\end{observation}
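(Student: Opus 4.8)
The plan is to ground this empirical observation in the closed-form upper bounds of Corollary~\ref{corollary: case study}. The key structural fact is that each of $B_p(p)$, $B_g(p)$, and $B_r(p)$ has the form $r\sqrt{\Delta^{T}\Sigma^{-1}\Delta}\cdot c_{x}(p)$ for $x\in\{p,g,r\}$, where the coefficient $c_{x}(p)$ depends only on $p$, $D$, $K$, $\alpha$, $\tau_2$ and is independent of the inter-distribution distance. Writing $d\triangleq\sqrt{\Delta^{T}\Sigma^{-1}\Delta}$, Definition~\ref{assumption: multivariate Gaussian distribution} shows that every pairwise KL divergence appearing in the simplified bound \eqref{eq:simplied upper bound} scales linearly in $d$, so the whole bound does too. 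Hence, for any two patterns $x,y$, the gap between their bounds is $\bigl|B_{x}(p)-B_{y}(p)\bigr|=r\,d\,\bigl|c_{x}(p)-c_{y}(p)\bigr|$.

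First I would fix a drift regime so the comparison is carried out at matched severity—for instance a value of $p$ in the range $p<p_{th}$ identified in Proposition~\ref{proposition:Threshold}, or the induced value $p=1/D$ for the random pattern. In such a regime Proposition~\ref{proposition:Threshold} (more precisely, the monotonicity facts established in its proof) gives a strict ordering among the three bounds, hence $c_{x}(p)-c_{y}(p)$ is a strictly positive constant in $d$. Differentiating, $\partial\bigl(B_{x}(p)-B_{y}(p)\bigr)/\partial d = r\,(c_{x}(p)-c_{y}(p))>0$, so the gap between the pattern-specific bounds grows strictly—indeed linearly—as the inter-distribution distance $d$ increases. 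The second ingredient is to connect $d$ to the notion of ``distance between data distributions'' actually varied in the experiments: for the Gaussian model of Definition~\ref{assumption: multivariate Gaussian distribution} this is immediate since $\mu_i-\mu_j=|i-j|\Delta$, while for the experimental proxies (circle-center separation on CIRCLE, rotation angle on MNIST, reciprocal Dirichlet concentration on CIFAR-10) I would argue only that each is a monotone surrogate for the underlying $D_{\rm KL}$ between consecutive concepts, so that enlarging the proxy enlarges $d$ and therefore widens the bound gap. Since the average test error tracks $\overline{G}$ and hence its bound, the observed error gap widens accordingly.

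The main obstacle I anticipate is exactly this last step: the statement concerns the gap between \emph{empirical} test errors, whereas the argument above controls the gap between \emph{upper bounds} on $\overline{G}$. A fully rigorous proof would additionally require (i) a matching lower bound, or a tightness argument, to preclude the true-error gap from shrinking even as the bound gap grows, and (ii) a precise monotone relationship between the experimental distance proxies and $D_{\rm KL}$, which holds cleanly only under the Gaussian assumption. I therefore expect to present this as a bound-based justification of the observed trend rather than an exact equality, and to state explicitly the Gaussian/large-sample regime (sub-Gaussian loss, $I(\mathbf{w}_{cur},Z)\to 0$) under which the factorization $B_{x}(p)=r\,d\,c_{x}(p)$ is exact and the argument is clean.
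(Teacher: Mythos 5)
The paper does not prove this statement at all: it is presented purely as an empirical observation read off the curves in Fig.~\ref{fig:Average test error changes over the distance between distributions.}, and the surrounding text only draws a qualitative design lesson from it ("it is more necessary to understand the on-going concept drift pattern..."). Your proposal therefore takes a genuinely different route, supplying a theoretical rationalization that the paper never spells out. Your central structural point is sound: each bound in Corollary~\ref{corollary: case study} factors as $r\sqrt{\Delta^{T}\Sigma^{-1}\Delta}$ times a coefficient depending only on $p,D,K,\alpha,\tau_2$, so under the ordering of Proposition~\ref{proposition:Threshold} the gap between any two pattern-specific bounds grows linearly in $d=\sqrt{\Delta^{T}\Sigma^{-1}\Delta}$. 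What your approach buys is an explanation of \emph{why} the gap should widen, tied to the Gaussian/large-sample regime of Eq.~\eqref{eq:simplied upper bound} and Definition~\ref{assumption: multivariate Gaussian distribution}; what the paper's approach buys is direct evidence about the actual test errors on CIRCLE, MNIST, and CIFAR-10, which is the quantity the observation literally asserts something about. You correctly and explicitly flag the two places where your argument cannot close the loop on its own: a growing gap between \emph{upper bounds} does not by itself force a growing gap between \emph{empirical errors} (no matching lower bound or tightness result is available), and the experimental distance proxies (circle-center separation, rotation angle, reciprocal Dirichlet concentration) are only heuristically monotone surrogates for the KL divergences in the theory. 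Given those caveats, your write-up is best positioned exactly as you propose: a bound-based justification of the observed trend that complements, rather than replaces, the paper's empirical support.
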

Observation~\ref{observation: The Difference of Patterns' Error Varies with Distance} suggests that when the difference between recent data and future unseen data is more significant, it is more  necessary to understand the on-going concept drift pattern and to design the retraining or  adaptation algorithm according to the pattern.

\subsection{Algorithm Comparison}

Table~\ref{table:random pattern Test accuracy comparison} and Table~\ref{table:gradual pattern Test accuracy comparison} present the results for the random and gradual patterns in the Fashion-MNIST and CIFAR-10 experiments. Each experiment for each pattern consists of a total of 1200 training rounds. 

Fig.~\ref{fig:example16-big-random-pattern} and Fig.~\ref{fig:example19-big-random-pattern} illustrate the changes in test accuracy for Fashion-MNIST and CIFAR-10 during training under the random pattern, respectively. Fig.~\ref{fig:example16-big-gradual-pattern} and Fig.~\ref{fig:example19-big-gradual-pattern} display the changes in test accuracy for Fashion-MNIST and CIFAR-10 during training under the gradual pattern, respectively.

\end{document}